\newcommand{\han}[1]{{\textcolor{cyan}{[Han: #1]}}}
\newcommand{\pre}{\mathrm{pre}}
\newcommand{\sou}{\mathrm{s}}
\title{\LARGE The Power of Active Multi-Task Learning in Reinforcement Learning from Human Feedback}
\author{Ruitao Chen \thanks{Peking University. Email: \texttt{chenruitao@stu.pku.edu.cn}}\qquad Liwei Wang \thanks{Peking University. Email: \texttt{wanglw@pku.edu.cn}}}
\date{}
\begin{document}
\maketitle

\begin{abstract}
    Reinforcement learning from human feedback (RLHF) has contributed to performance improvements in large language models. To tackle its reliance on substantial amounts of human-labeled data, a successful approach is multi-task representation learning, which involves learning a high-quality, low-dimensional representation from a wide range of source tasks. In this paper, we formulate RLHF as the contextual dueling bandit problem and assume a common linear representation. We demonstrate that the sample complexity of source tasks in multi-task RLHF can be reduced by considering task relevance and allocating different sample sizes to source tasks with varying task relevance. We further propose an algorithm to estimate task relevance by a small number of additional data and then learn a policy. We prove that to achieve $\varepsilon-$optimal, the sample complexity of the source tasks can be significantly reduced compared to uniform sampling. Additionally, the sample complexity of the target task is only linear in the dimension of the latent space, thanks to representation learning.
\end{abstract}

\section{Introduction}

Recent advancements in large language models (LLMs) have significantly improved performance across various tasks. Many breakthroughs in large language models, including the notable language model application ChatGPT, are based on reinforcement learning from human feedback (RLHF) \citep{christiano2017deep,stiennon2020learning,wu2021recursively,ouyang2022training,chen2022human,bai2022training,xiong2023iterative,zhu2023principled,zhong2024dpo,rafailov2024direct}, a method that learns from human preference. The process of RLHF is to learn a reward
from data in the form of pairwise or K-wise comparisons between responses. The data typically consists of multiple potential responses to prompts from human users, along with ranks of all responses according to user preferences. A reward model is then trained based on a maximum likelihood estimator (MLE) using the data.

A challenge with this approach is its reliance on substantial amounts of expensive human-labeled data. Typically, we have a wide range of tasks, but only a small amount of data for each task. We can take advantage of multi-task representation learning \citep{caruana1997multitask} to address this issue. To be specific, with data labeled by different criteria of different users (different but similar tasks), we can learn a common low-dimensional representation. Thus, given only a limited
amount of data for a target task, we can achieve high performance with the aid of the high-quality low-dimensional representation learned from a wide range of data of different but related source tasks. This idea is successful in natural language processing \citep{ando2005framework,devlin2018bert,radford2019language,liu2019multi,brown2020language}.

Previous works typically only
consider uniform sampling from each source task. However, as pointed out in \citet{chen2022active}, the total sample complexity of source tasks can be further reduced if we consider the priority and relevance of source tasks and allocate different sample sizes for different source tasks. Intuitively, we should include more task data from high-quality sources or task data more relevant to the target tasks. In this paper, we aim to formally propose an algorithm to figure out the importance (task relevance) of different source tasks and show how to allocate sample size to reduce the total sample complexity of source tasks in multi-task RLHF. 

In this paper, we consider two settings. We formulate RLHF as the contextual dueling bandit problem and assume a common linear representation. We also assume that the user preference in the target task is a linear mixture of that in the source tasks. If we know the task relevance of source tasks, we can allocate a sample size proportional to the task relevance for each source task. We further consider the case that we do not know the task relevance. We only need a small number of additional sample complexity of the source tasks and the target task to estimate the task relevance. Then we can use a similar approach as in the first case. We prove that in both cases, to find an $\varepsilon-$optimal policy, the sample complexity of the source tasks
can be largely reduced compared with uniform sampling and the sample complexity of the target task is only linear in the dimension of the latent space, with the aid of representation learning. We remark that our results can be easily extended to active multi-task learning in logistic regression or logistic bandits settings.

\subsection{Related Works}

\paragraph{Contexual Dueling Bandits.}
 In the context of the Dueling Bandit setting, one learns the reward model by getting feedback from pairwise or K-wise comparisons \citep{yue2009interactively,yue2011beat,yue2012k,ailon2014reducing,zoghi2014relative,komiyama2015regret,gajane2015relative,saha2019active,saha2019pac,saha2022efficient,ghoshal2022exploiting,zhu2023principled}. In the pairwise comparison setting, it is similar to the logistic bandits setting which is one Generalized Linear Bandits of
crucial practical interest. The Generalized Linear Bandits setting was first studied by \citet{filippi2010parametric}. Then several algorithms and regret guarantees were proposed \citep{russo2013eluder,russo2014learning,li2017provably,abeille2017linear,jun2017scalable,dong2018information,dumitrascu2018pg}. Particularly for logistic bandits, the bound normally depends on a factor $\kappa$ which characterizes the degree of non-linearity of the logistic function. \citet{faury2020improved} improved the dependency and finally \citet{abeille2021instance} proved the optimal dependency. 

\paragraph{Multi-task Learning.} Learning a common low-dimensional representation among related tasks to facilitate learning is known as multi-task representation learning. Representation learning has been highly successful in various applications \citep{bengio2013representation}, for instance in reinforcement learning \citep{wilson2007multi,teh2017distral,d2024sharing}. Theoretical results for multi-task learning date back to \citet{baxter2000model}. Many theoretical results showed the benefit of representation learning in a variety of settings \citep{maurer2016benefit,du2020few,tripuraneni2021provable}, especially in reinforcement learning \citep{brunskill2013sample,arora2020provable,hu2021near,bose2024offline,d2024sharing,ishfaq2024offlinemultitaskrepresentationlearning}. \citet{chen2022active} and \citet{wang2023improved} focused on active multi-task learning. The idea is to learn the task relevance of source tasks to the target task and set uneven sample sizes for each source task, in order to further reduce the sample complexity of source tasks. 

\subsection{Notations}

For any positive integer $M$, we denote the set $\{1, 2, \cdots, M\}$ by $[M]$.  We use $\mathcal O$ and $\lesssim$ to hide absolute constants, and $\Tilde{\mathcal O}$ to further hide logarithmic factors. For any matrix $\mathbf{A}$, its operator norm and Frobenius norm are denoted by $\|\mathbf{A}\|$ and $\|\mathbf{A}\|_F$, respectively. For any positive definite matrix $\mathbf A$, denote the Mahalanobis norm
with regards to $\mathbf A$ by $\|x\|_\mathbf{A}=\sqrt{x^\top\mathbf{A}x}$. The $\psi_2-$norm of a random variable $x$ is defined as \begin{align*}
\|x\|_{\psi_2} = \inf\{t>0:\mathbb E\exp(x^2/t^2)\leq 2\}, 
\end{align*}
and that of a $d-$dimensional random vector $X$ is defined as 
\begin{align*}
\|X\|_{\psi_2} = \sup_{x\in \mathbb{S}^{d-1}}\|\langle X,x \rangle\|_{\psi_2}, \quad \mathbb{S}^{d-1}=\{x\in\mathbb R^d: \|x\|_2=1\}. 
\end{align*}

\section{Preliminaries}

Reinforcement learning from human feedback \citep{christiano2017deep,ouyang2022training} is often formulated as the contextual dueling bandit problem \citep{yue2012k,saha2021optimal}. A contextual bandit can be described by a tuple $(\mathcal S, \cA, r^*)$, where $\mathcal S$ is the context set, $\cA$ is the action set, and $r^*: \mathcal S \times \cA \rightarrow [0, 1]$ is the reward function. Given an context $s \in \mathcal S$ and two actions $(a^1, a^2)\in \cA\times \cA$, the environment generate the preference feedback $y \in \{0,1\}$ according to the Bradley-Terry model \citep{bradley1952rank}:
\begin{align} \label{eq:BT:model}
\PP(y = 1) = \frac{\exp(r^*(s, a^1))}{\exp(r^*(s, a^1)) + \exp(r^*(s, a^2))} =  \mu\big( r^*(s, a^1) - r^*(s, a^2) \big),
\end{align}
where the link function 
\begin{align*}
\mu(t) = 1/(1+\exp(-t))
\end{align*}
is the sigmoid function. In the application of RLHF in LLMs, each context $s$ in $\mathcal{S}$ represents a prompt, while $(a^1, a^2)$ denotes two potential responses, and  $y$ signifies the preference provided by a human evaluator.

We focus on the multi-task RLHF setting, where each task is a contextual dueling bandit problem. Specifically, we have $M$ source tasks $\{1, 2, \dots, M\}$ and one target task $M+1$, where each task $\cT_m$ is a contextual during bandit $(\mathcal S_m, \cA_m, r_m^*)$. 


Specifically, we assume that the reward is linear in features of the context and action pair. That is, for all $m\in[M+1]$, there exists a known common feature map $\phi: \mathcal S\times \cA \rightarrow \mathbb R^d$ where $\mathcal S, \cA$ are unions of all context set and action set, respectively. We also assume that there exists a $d$-dimensional parameter $\theta^*_m$ such that $r^*(s_m,a_m) = \phi(s_m, a_m)^\top \theta^*_m$ for all $(s_m,a_m)\in \mathcal S_m\times \cA_m$. For notation convenience, we use $x_m = \phi(s_m, a^1_m)-\phi(s_m, a^2_m)$ to denote the different of features of the context and action pair. Then, given $(s_m, a^1_m, a^2_m)$, \eqref{eq:BT:model} specifies the distribution of the preference feedback $y_m$
\begin{align*}
\mathbb P(y_m=1) = \mu(x_m^\top \theta^*_m), \quad m\in[M+1]. 
\end{align*}
Throughout this paper, we may drop the corner mark $M+1$ for terms related to the target task for simplicity, for instance, notation $r^* = r_{M+1}^*$ and $\theta^* = \theta_{M+1}^*$. 
We impose the following assumption for the parameters across tasks, which is standard in the literature of multi-task learning \citep{du2020few,tripuraneni2021provable,chen2022active,wang2023improved}.

\begin{assumption}
\label{smp:lowdim}
There exists a linear feature extractor $\mathbf B\in\mathbb R^{d\times k}$ and $k$-dimensional
coefficients $\{w_m^*\}_{m=1}^{M+1}$ such that $\theta^*_m=\mathbf B w_m^*$ for all $m\in[M+1]$, where $k \le d$. In addition, there exists $\nu = \{\nu_m\}_{m=1}^M\in\mathbb R^M$ such that the linear parameter $\theta^*$ of target task can be represented as 
\begin{align*}
\theta^*=\sum_{m=1}^M \nu_m \theta^*_m.
\end{align*}
\end{assumption}

Assumption~\ref{smp:lowdim} posits two essential conditions for the parameter space: (i)
all parameters $\{\theta^*_m\}_{m\in [M+1]}$ lie in a common $k$ dimensional subspace. This, when $k \ll d$, harnesses the potential for greater sample efficiency in multi-task learning due to the utilization of a common representational framework across various tasks \citep{du2020few}. And (ii) the true parameter $\theta^*$ of the target task can be expressed as a linear combination of the true parameters from all source tasks. The coefficients $\{\nu_m\}_{m \in [M]}$ quantifies the degree of relevance between the target task and the source tasks.

In addition to the aforementioned assumption, we also need a set of standard regularity conditions for logistic bandits or linear contextual dueling bandits \citep{faury2020improved,abeille2021instance,ji2024reinforcement}. For ease of presentation, for all $m \in [M+1]$, we define the domain $\Theta_m$ encompasses linear coefficients $\theta_m$ associated with task $m$, ensuring that the true coefficient $\theta_m^* \in \Theta_m$.
\begin{assumption}
\label{smp:bound}
All linear contextual dueling bandits satisfy the following conditions: 
\begin{itemize} 
\item There exists an absolute constant constant $L>0$, in all $M+1$ tasks, for all data features $x_m$ in task $m$, and all $\theta_m\in \Theta_m$, we have $\theta_m^\top x_m\leq L$.
\item There exists bounds $B_x, B_\theta>0$, in all $M+1$ tasks, for all data features $x_m$ in task $m$, we have $\|x_m\|_2\leq B_x$, and the true parameter $\|\theta^*_m\|_2\leq B_\theta$. And we have $B_x^2B_\theta^2\lesssim kd$. 
\end{itemize}
\end{assumption}

We define a function $\kappa$ as follows: 
\begin{align}
\label{eq:kappa}
\kappa(L) = \sup_{|t|\leq |L|}(1/\mu'(t)) = \mathcal O(\exp(|L|)), \quad L\in\mathbb R.
\end{align}
The sample complexity normally depends on $\kappa$, which scales exponentially. 

\paragraph{Active Multi-task Learning.} We have access to sample from a fix distribution $\mu_m$ of $x_m$ and get a random feedback $y_m$ corresponding to $x_m$ sampled from $\mu_m$, for all tasks $m\in[M+1]$. To reduce the total sample size, we shall set uneven sample numbers for $M$ source tasks according to the task relevance $\nu$ in Assumption \ref{smp:lowdim}. The main idea in Active Multi-task Learning is the two steps. First estimate the task relevance $\nu$ by using a small number of data from the source tasks and the target task, and then set uneven sample size for $M$ source tasks according to the preceding estimation to reduce the total sample complexity. 

We set the sample complexity of source task $m$ to be $n_m$ and collect $n_m$ i.i.d. samples expressed as $\{(s_{m, i}, a_{m, i}^1, a_{m, i}^2)\}_{i = 1}^{n_m}$, while the corresponding features are $\{x_{m,i}\}_{i = 1}^{n_m}$, from source task $m$. By construction, $\{x_{m,i}\}_{i = 1}^{n_m}$ are generated i.i.d from distribution $\mu_m$. Using these samples, we can learn the feature extractor $B$ and the subspace spanned by $\theta^*_m$ equivalently. Finally, the sample size needed to estimate $\theta^*$ of the target task is reduced with the aid of the estimation of the subspace. We
mainly concerned about the total sample number of the source tasks.

For $n_m$ i.i.d. samples $\{(s_{m, i}, a_{m, i}^1, a_{m, i}^2)\}_{i = 1}^{n_m}$, while the corresponding features are $\{x_{m,i}\}_{i = 1}^{n_m}$, from source task $m$. We rewrite data to matrix form, $\mathbf X_m\in \mathbb R^{n_m\times d}$ with rows representing $x_{m,1}^\top, \dots, x_{m,n_m}^\top$ and $y_m\in\mathbb R^{n_m}$ representing corresponding feedback term. 

\paragraph{Fisher Information Assumption.} Following the RLHF pipeline \citep{ouyang2022training}, we estimate the reward function parameter $\theta^*_m$ via Maximum Likelihood Estimation (MLE). For all tasks $m\in[M+1]$, the log-likelihood for data features $x_m$ and feedback $y_m$ is represented as 
\begin{align}
\ell\left(x_m^\top\theta^*_m, y_m\right)= y_m\log\mu(x_m^\top\theta^*_m) + (1-y_m)\log(1-\mu(x_m^\top\theta^*_m)).
\label{eq:likelihood}
\end{align}
With direct computation, we have that the Fisher information matrix of $\theta_m$ is
\begin{align}
\label{eq:fisher}
\mathbf{E}_m = \mathbb E \left[\mu'(x_m^\top{\theta_m^*})x_mx_m^\top\right], \quad \forall m\in [M+1],
\end{align}
where the expectation with respect to $x_m$ is always under distribution $\mu_m$. The Fisher information matrix $\mathbf{E}_m$ can measure the information of $\theta_m$ we get from task $m$. 
To perform efficient multi-task estimations, we assume the following to ensure that the Fisher information matrices are almost the same across tasks and the normalized data is well-distributed.
\begin{assumption}
\label{smp:var}
There exists a positive definite $d\times d$ matrix $\mathbf E$ and constants $C_1\leq 1\leq C_2$, such that 
\begin{align*}
C_1 \mathbf{E} \preceq \mathbf{E}_m \preceq C_2 \mathbf{E}, \quad \forall m\in [M+1].
\end{align*}
We also assume there exists $\rho_x>0$, such that $\mathbf{E}_m^{-1/2}\sqrt{\mu'( x_m^\top {\theta_m^*})}x_m$ are mean zero random variables with $\psi_2-$norm less than $\rho_x$. 
\end{assumption}

Assumption~\ref{smp:var} is a more general form of previous standard assumptions on multi-task learning \citep{du2020few,chen2022active,wang2023improved}, which state that either distribution $\mu_m$ or the covariance matrices $\EE[x_m x_m^\top]$ were identical across tasks. In particular, when all $\mu_m$ or $\EE[x_m x_m^\top]$ are the same for every task, we can set $\mathbf E=\mathbb E[x_mx_m^\top]$. It can be shown that Assumption \ref{smp:var} holds for $C_1=1/\kappa(L)$ and $C_2=1$, where $\kappa(L)$ is defined in Assumtpion~\ref{smp:bound}.  

\paragraph{Learning Objective.} Our goal is to find a near-optimal policy for the target task using data from source tasks and the target task. Given a policy $\pi$, its optimality under the task task is defined as
\begin{align*}
\mathrm{SubOpt}(\pi) = \mathbb E_{s\sim\rho}[r^*(s) - r^*(s, \pi(s))],
\end{align*}
where the expectation is taken over a fix distribution $\rho$ of $s$, $\pi^*(s) = \argmax_{a\in\mathcal{A}}r^*(s, a)$ represents the optimal action, and $r^*(s) = r^*(s, \pi^*(s))$ denotes the maximum reward for context $s$. We assume, without loss of generality, that $r^*(s)>0$ for all $s\in \mathcal S$. A policy $\pi$ is considered $\varepsilon-$optimal if $\mathrm{SubOpt}(\pi)\le\varepsilon$.

\section{Algorithm and Theory}

In this section, we present our main algorithms for solving multi-task RLHF problems, followed by theoretical guarantees. 

\subsection{Results for the Known Task Relevance Setting}

In this section, we present the multi-task algorithm when the task relevance coefficient $\nu$ defined in Assumption \ref{smp:lowdim} is known. 

To reduce the total sample size, we set uneven sample numbers for $M$ source tasks. Intuitively, we need to sample more data from the task $m$ that $|\nu_m|$ is large, which means this task is more relevant to the target task. We collect $n_m$ i.i.d. samples from source task $m$, with matrix form $(\mathbf X_m,y_m)$. Specifically, given the total sample size $N$, the sample size for each source task is computed as $\texttt{Active-Sample}(N, \nu)\in\mathbb Z^M$, where $n_m$ is almost proportional to $|\nu_m|$ and the $1/2M$ term is a technique to avoid small sample size,  
\begin{align}
\label{eq:duelingsamplesize}
\{n_m\}_{m=1}^M = \texttt{Active-Sample}(N, \nu) :=  \Big\{ \Big(\frac{|\nu_m|}{2\|\nu\|_1}+ \frac{1}{2M}\Big)N \Big\}_{m=1}^M. 
\end{align}

Using the prespecified regions $\Theta_m$, and the subspace assumption, it is natural to set the region of solution $(\theta_1,\dots,\theta_M)$ as 
\begin{align*}
\Theta_{\sou} = \left\{(\theta_1,\dots,\theta_M)| {\rm rank}({\rm span}\{\theta_1,\dots,\theta_M\})\leq k, \theta_m\in\Theta_m, \forall m\in[M]\right\}. 
\end{align*}

Given data, the low-rank region, and a properly chosen penalty term $\lambda>0$ (specified in Appendix \ref{appendix:pf:thm:known}), we can estimate $(\theta^*_1,\dots,\theta^*_M)$ as regularized MLE constrained in the region, 
\begin{align}
\label{eq:duelingmle}
(\hat\theta_1,\dots,\hat\theta_M)= \argmax_{\Theta_\sou} \sum_{m=1}^M\sum_{i=1}^{n_m}\ell\left(x_{m,i}^\top\theta_m, y_{m,i}\right) - \lambda\sum_{m=1}^M n_m\|\theta_m\|_2^2, 
\end{align}
where $\ell$ is defined in \eqref{eq:likelihood}. With the estimation $(\hat\theta_1,\dots,\hat\theta_M)$ in \eqref{eq:duelingmle} and the known task relevance coefficient $\nu = \{\nu_m\}_{m=1}^M$, the parameter of the target task is estimated by 
$\hat\theta=\sum_{i=1}^M \nu_m\hat\theta_m.$
We can construct a confidence set for $\theta^*$ with proper parameter $\alpha>0$, that is 
\begin{align} \label{eq:confidence:set:known}
\mathcal{C} = \left\{\theta : \left\|\theta-\hat\theta\right\|_{\mathbf{\Lambda}}\leq \alpha\varepsilon\right\}, \quad \text{ where } \mathbf{\Lambda} = \frac{1}{N}\sum_{m=1}^M \sum_{i=1}^{n_m} \mu'(x_{m,i}^\top\hat\theta_m)x_{m,i}x_{m,i}^\top+\lambda \mathbf I ,
\end{align}
where $\mathbf{\Lambda}$ is an estimation of $\mathbf{H}:=\mathbf{E} + \lambda \mathbf{I}$. Based on the confidence set, we give policy $\pi$ as the policy that maximizes the pessimistic expected value \citep{zhu2023principled}, 
\begin{align}
\label{eq:duelingpolicy}
\pi = \argmax_{\pi'} \min_{\theta\in\mathcal{C}} \left\{\mathbb E_{s\sim\rho}[\phi(s, \pi'(s))^\top\theta]\right\}. 
\end{align}

\begin{algorithm}
\label{alg:dueling}
\caption{Multi-Task Low-Rank RLHF in Known Task Relevance Setting.}
\begin{algorithmic}[1]
\REQUIRE Total sample size $N$, task relevance $\nu = \{\nu_m\}_{m=1}^M$. 
\STATE Initialize $\lambda$ satisfying \eqref{eq:duelinglambda} and compute sample size $n_m$ via \eqref{eq:duelingsamplesize}. 
\STATE Estimate $(\hat\theta_1,\dots,\hat\theta_M)$ by regularized MLE in \eqref{eq:duelingmle}. 
\STATE Let $\hat\theta=\sum_{i=1}^M \nu_m\hat\theta_m$ and construct the confidence set $\cC$ as \eqref{eq:confidence:set:known}.
\RETURN $\pi = \argmax_{\pi'} \min_{\theta\in\mathcal{C}} \left\{\mathbb E_{s\sim\rho}[\phi(s, \pi'(s))^\top\theta]\right\}$. 
\end{algorithmic}
\end{algorithm}

Below, we give the sample complexity bound for Algorithm~\ref{alg:dueling}. See Appendix \ref{appendix:pf:thm:known} for a formal version of the theorem and its detailed proof.

\begin{theorem} \label{thm:known}
Suppose Assumptions \ref{smp:lowdim}, \ref{smp:bound}, and \ref{smp:var} hold. Then for any sufficiently small $\varepsilon > 0$, Algorithm~\ref{alg:dueling} outputs an $\varepsilon$-optimal policy with probability at least $1-\delta$ when 
\begin{align*}
N=\Tilde{\mathcal O}\left(C^*\|\nu\|_1^2k(d+M)\varepsilon^{-2}\right),
\end{align*}
where the coverage coefficeint $C^*$ is defined as
\begin{align}
\label{eq:duelingcover}
C^*=\left\|\mathbb E_{s\sim\rho}\left[\phi(s, \pi^*(s))\right]\right\|_{\mathbf{H}^{-1}}^2, \quad \mathbf H = \mathbf E + \lambda \mathbf I.
\end{align}
\end{theorem}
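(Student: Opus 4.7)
The plan is to chain together three bounds: (i) a representation-learning-style estimation bound for the low-rank regularized MLE in \eqref{eq:duelingmle}; (ii) an aggregation of per-source-task errors through the weights $\nu$, where the active allocation \eqref{eq:duelingsamplesize} converts an apparent $\sqrt{M}\|\nu\|_1$ loss into $\|\nu\|_1$; and (iii) a pessimism-style suboptimality bound against the confidence set $\mathcal{C}$ in \eqref{eq:confidence:set:known}.

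For the first step, I would stack the source-task parameters as $\Theta^* = [\theta_1^*,\dots,\theta_M^*] \in \mathbb{R}^{d\times M}$ and similarly $\hat\Theta$; the rank-$k$ constraint in $\Theta_{\sou}$ forces $\hat\Theta-\Theta^*$ to have rank at most $2k$. Local strong convexity of the logistic negative log-likelihood (with curvature controlled by $\mathbf{E}_m$, hence by $\mathbf{E}$ via Assumption~\ref{smp:var} up to a $\kappa(L)$ factor from \eqref{eq:kappa}) yields an inequality of the shape
\begin{align*}
\sum_{m=1}^M n_m \|\hat\theta_m - \theta_m^*\|_{\mathbf{H}}^2 \;\lesssim\; \big\langle \nabla L(\Theta^*),\, \hat\Theta - \Theta^*\big\rangle + \text{(bias from }\lambda\text{)},
\end{align*}
where $L$ is the aggregate regularized log-likelihood. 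Applying the rank duality $\langle G,\Delta\rangle \le \sqrt{2k}\|G\|_{\mathrm{op}}\|\Delta\|_F$ to $\Delta = \hat\Theta - \Theta^*$, together with a matrix Bernstein bound on $\|\nabla L(\Theta^*)\|_{\mathrm{op}}$ that uses the $\psi_2$ condition in Assumption~\ref{smp:var}, then gives $\sum_m n_m \|\hat\theta_m - \theta_m^*\|_{\mathbf{H}}^2 \lesssim k(d+M)$ up to log factors. This translates to per-task rates of the form $\|\hat\theta_m - \theta_m^*\|_{\mathbf{H}} \lesssim \sqrt{k/n_m} + \sqrt{k(d+M)/N}$, separating a within-subspace error from a shared subspace-recovery error.

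For the second step, Assumption~\ref{smp:lowdim} gives $\hat\theta - \theta^* = \sum_{m=1}^M \nu_m(\hat\theta_m - \theta_m^*)$, so the triangle inequality and Step~1 yield
\begin{align*}
\|\hat\theta - \theta^*\|_{\mathbf{H}} \;\lesssim\; \sum_{m=1}^M |\nu_m|\sqrt{k/n_m} \;+\; \|\nu\|_1\sqrt{k(d+M)/N}.
\end{align*}
The allocation \eqref{eq:duelingsamplesize} guarantees $n_m \ge |\nu_m|N/(2\|\nu\|_1)$, whence Cauchy--Schwarz gives
\begin{align*}
\sum_{m=1}^M |\nu_m|\sqrt{k/n_m} \;\le\; \sqrt{2k\|\nu\|_1/N}\sum_{m=1}^M \sqrt{|\nu_m|} \;\le\; \|\nu\|_1 \sqrt{2kM/N},
\end{align*}
which is precisely the active-sampling saving (a uniform allocation would carry an extra $M$ here). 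Combining the two terms yields $\|\hat\theta - \theta^*\|_{\mathbf{H}} \lesssim \|\nu\|_1\sqrt{k(d+M)/N}$. For the third step, a matrix Chernoff inequality from Assumption~\ref{smp:var} shows $\mathbf{\Lambda} \asymp \mathbf{H}$ w.h.p., with the perturbation from replacing $\mu'(x_{m,i}^\top\hat\theta_m)$ by $\mu'(x_{m,i}^\top\theta_m^*)$ absorbed via the bound just obtained; choosing $\alpha\varepsilon$ to dominate $\|\hat\theta - \theta^*\|_{\mathbf{H}}$ puts $\theta^*\in\mathcal{C}$. The pessimism argument of \citet{zhu2023principled} then gives
\begin{align*}
\mathrm{SubOpt}(\pi) \;\le\; 2\sup_{\theta\in\mathcal{C}}\big|\mathbb{E}_{s\sim\rho}[\phi(s,\pi^*(s))^\top(\theta-\theta^*)]\big| \;\le\; 2\sqrt{C^*}\cdot \mathrm{diam}_{\mathbf{H}}(\mathcal{C}) \;\lesssim\; \sqrt{C^*}\,\|\nu\|_1\sqrt{k(d+M)/N},
\end{align*}
and requiring the right-hand side to be at most $\varepsilon$ produces the claimed $N$.

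The main obstacle is Step~1: adapting the low-rank representation-learning bounds of \citet{du2020few,tripuraneni2021provable} from linear regression to the logistic/GLM setting while retaining the additive $k(d+M)$ scaling and paying only one factor of $\kappa(L)$. Two technical points need care: (i) the logistic Hessian is not globally strongly convex, so I would use a localization argument or self-concordance in the spirit of \citet{faury2020improved,abeille2021instance} to pass from the first-order optimality condition to a Fisher-weighted quadratic lower bound; and (ii) the score $\nabla L(\Theta^*)$ must be controlled in operator norm by a matrix Bernstein inequality with sub-Gaussian rows (supplied by Assumption~\ref{smp:var}) whose dimension dependence matches the rank-$2k$ dual pairing rather than a naive $dM$.
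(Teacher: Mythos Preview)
Your overall architecture matches the paper: an aggregate low-rank MLE bound, aggregation through $\nu$ under the active allocation, and a pessimism argument against the confidence set. Step~3 is fine and essentially what the paper does.

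There is, however, a genuine gap in how you pass from Step~1 to Step~2. The per-task decomposition
\[
\|\hat\theta_m-\theta_m^*\|_{\mathbf H}\ \lesssim\ \sqrt{k/n_m}+\sqrt{k(d+M)/N}
\]
does \emph{not} follow from the aggregate bound $\sum_m n_m\|\hat\theta_m-\theta_m^*\|_{\mathbf H}^2\lesssim k(d+M)$; splitting into a ``within-subspace'' and a ``subspace-recovery'' term would require a two-stage analysis that the joint low-rank MLE does not provide. The only per-task consequence of the aggregate bound is $\|\hat\theta_m-\theta_m^*\|_{\mathbf H}\lesssim\sqrt{k(d+M)/n_m}$, and if you feed this into the triangle inequality you pick up an extra $\sqrt{M}$:
\[
\sum_{m}|\nu_m|\sqrt{\tfrac{k(d+M)}{n_m}}
\ \le\ \sqrt{\tfrac{2\|\nu\|_1 k(d+M)}{N}}\sum_m\sqrt{|\nu_m|}
\ \le\ \|\nu\|_1\sqrt{\tfrac{2Mk(d+M)}{N}},
\]
which is precisely the loss you were trying to avoid. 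The paper never goes through per-task rates. It applies Cauchy--Schwarz directly to the aggregate:
\[
\Big\|\sum_{m}\nu_m(\hat\theta_m-\theta_m^*)\Big\|_{\mathbf H}^2
\ \le\ \Big(\sum_m\frac{\nu_m^2}{n_m}\Big)\Big(\sum_m n_m\|\hat\theta_m-\theta_m^*\|_{\mathbf H}^2\Big)
\ \lesssim\ \frac{\|\nu\|_1^2}{N}\cdot k(d+M),
\]
since $n_m\ge |\nu_m|N/(2\|\nu\|_1)$ gives $\sum_m\nu_m^2/n_m\le 2\|\nu\|_1^2/N$. This single line replaces your Step~2 and removes the need for the unproven decomposition.

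On Step~1, your plan is correct in spirit but the paper's execution is simpler than you anticipate. Rather than self-concordance or localization \`a la \citet{faury2020improved}, the paper uses an elementary one-sided quadratic bound on the log-likelihood (valid on the bounded region supplied by Assumption~\ref{smp:bound}) to obtain directly
\[
\sum_m\|\Delta_m\|_{\hat{\mathbf H}_m}^2\ \lesssim\ \sum_m\eta_m^\top\mathbf X_m\Delta_m+\text{(regularization bias)}.
\]
For the score term, the paper normalizes columnwise, setting $g_m=\hat{\mathbf H}_m^{-1/2}\mathbf X_m^\top\eta_m$ and $\mathbf G=[g_1,\dots,g_M]$, then bounds $\|\mathbf G\|\lesssim\sqrt{d+M}$ by truncating each $g_m$ (via a single-task Bernstein bound) and applying matrix Bernstein to the columns. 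The rank-$2k$ projection $\mathbf U$ onto $\mathrm{span}\{\mathbf H^{1/2}\theta_m^*,\mathbf H^{1/2}\hat\theta_m\}$ then gives $\sum_m\|\mathbf U g_m\|_2^2=\|\mathbf U\mathbf G\|_F^2\le 2k\|\mathbf G\|^2$, which is exactly your rank-duality step. So the $\kappa(L)$ factor enters only through $\mathbf E$ (hence $C^*$), and no delicate self-concordance machinery is needed.
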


\begin{remark}
If sampled uniformly from $M$ source tasks, then $\|\nu\|_1^2$ will be replaced by $M\|\nu\|_2^2$, which is no less than $\|\nu\|_1^2$. The $\|\nu\|_1^2$ measures the imbalance of source tasks, the improvement $M\|\nu\|_2^2/\|\nu\|_1^2$ is large if source tasks are imbalance. 
\end{remark}

\begin{remark}
While $\kappa$ defined in \eqref{eq:kappa} can scale exponentially, we should consider the dependency on $\kappa$. The dependency on $\kappa$ is represented in $\mathbf{E}$ and $\mathbf{H}$. Specifically, the $\mu'(x_m^\top\theta^*_m)$ term in the definition \eqref{eq:fisher} of $\mathbf E_m$ is at least $1/\kappa(L)$. We find that the estimation error bound matches the results in \citet{abeille2021instance} and the sample complexity matches the results of learning a single task in \citet{zhu2023principled}. 
\end{remark}

\begin{proof}[\textbf{Proof Sketch of Theorem \ref{thm:known}}]
We first prove three auxiliary lemmas. The first is the core method to handle the log-likelihood function $\ell(t, y)$, that is to approximate it by a quadratic function in a bounded region (Lemma \ref{lem:duelingappro}). Using this lemma, we can change the form form MLE to a form similar to a square loss. Secondly, we have that the Hessian Matrix is concentrated around its mean (Lemma \ref{lem:duelinghessian}). Through the proof, we will apply this lemma many times to different kinds of data and the corresponding regularized likelihood function. The third is that the noise of a single task can be bounded with high probability(Lemma \ref{lem: duelingsingle}). Similar concentration bounds are presented as Theorem $1$ in \citet{faury2020improved}. Because the noise $\eta$ is bounded by $1$ and has a small variance, we should use Matrix Bernstein's inequality for rectangular matrices (Lemma \ref{lem:matrixbern}) to set up the concentration of the noise. Compared with previous works, we get a clean bound and a simple proof. 

To deal with noise from different source tasks uniformly, we define 
\begin{align}
\label{eq:duelingnoisenotation}
g_m={\mathbf{\hat H}_m}^{-\frac{1}{2}}\mathbf X_m^\top \eta_m, 
\end{align}
as the normalized noise, and $\mathbf G$ as a $d\times M$ matrix with columns $g_m$. Apply a union bound, the norm of $g_m$ can be uniformly bounded with high probability (Lemma \ref{lem:duelingnorm}),
\begin{align}
\|g_m\|_2\leq \Tilde{\mathcal O}\left(\sqrt{d}\log(1/\delta)\right),\quad \forall m\in [M].
\end{align}
By truncating $g_m$ and applying Matrix Bernstein's inequality again, we can bound $\|\mathbf G\|$ (Lemma \ref{lem:duelingnoise}),
\begin{align}
\|\mathbf G\|\leq \Tilde{\mathcal O}\left(\sqrt{d+M}\log(1/\delta)^2\right). 
\end{align}

Now, we can prove the main result that the estimation error of the weighted sum of $\theta^*_m$ is small, with high probability (Lemma \ref{thm:duelinggeneral}). We first use the approximation of the log-likelihood function. Then bound the noise term and the regularization term. Finally, we apply Cauchy-Schwarz inequality to complete the proof. Specifically, we have
\begin{align*}
\left\|\hat\theta-\theta^*\right\|_\mathbf{H}^2=\left\|\sum_{m=1}^M\nu_m(\hat\theta_m-\theta^*_m)\right\|_\mathbf{H}^2\leq\left(\sum_{m=1}^M \frac{\nu_m^2}{n_m}\right)\Tilde{\mathcal O}\left(k(d+M)\log(1/\delta)^4\right)\leq\Tilde{\mathcal O}\left(\frac{\varepsilon^2}{C^*}\right) .
\end{align*}

Results in Lemma \ref{thm:duelinggeneral} set up guarantees for the confidence set.  Theorem \ref{thm:known1} is then proved in Lemma \ref{thm:duelingpolicy}, by first ensuring the accuracy of $\mathbf\Lambda$ and then bounding the gap using the coverage $C^*$. 
\end{proof}

\subsection{Results for the Unknown Task Relevance Setting}

Based on the results in the previous section, now we present the algorithm when $\nu$ is unknown. Since there may exist many $\nu$ satisfying $\theta^*=\sum \nu_m \theta^*_m$, we fix $\nu^*$ to have minimum $\ell_1-$norm and thus have the minimum sample complexity under known $\nu$ setting, namely
\begin{align} \label{eq:nu:star}
\nu^* = \argmin_{\theta^*=\sum \nu_m \theta^*_m}\|\nu\|_1.
\end{align}
The algorithm can be divided into two steps, estimating $\nu^*$ and estimating $\theta^*$. Under this setting, we need access to sample data from the target task. In the first step, we use a small number of
data from the source tasks and target tasks to estimate $(\theta^*_1, \dots, \theta^*_M)$ and $\theta^*$. The sample size of each source task is determined by a prior of $\nu^*$. Using these estimations, we can get an estimation of $\nu^*$ by Lasso Programming \citep{Tibshirani1996lasso}. In the second step, we determine the sample size of each source task based on the estimated $\nu^*$ similar to the algorithm of the known $\nu$ setting. At the same time, these estimations provide a learned subspace for estimating $\theta^*$. Estimating $\theta^*$ based on the learned subspace reduces the sample size needed for the target task. 

\paragraph{Estimating $\nu^*$.} Without loss of generality, we assume access to a prior, $\nu^0 = \{\nu_m^0\}_{m=1}^M \in\mathbb R^M$ satisfying $\|\nu^0\|_1=1$, for the $\nu^*$ defined in \eqref{eq:nu:star}. If we have no information about the relevance of each task, we can set each element of $\nu^0$ equal to $1/M$, which represents a non-informative prior. The prior $\nu^0$ is accurate when $\nu^0$ is just $\nu^*$ multiplied by a scalar.

Firstly, we set the total number of samples of source tasks to be $N^\pre_\sou$, and allocate the sample size of $m$ task similar to methods in the previous section (cf. \eqref{eq:duelingsamplesize}) based on the prior $\nu^0$, that is $(n^\pre_1, \dots, n^\pre_M)=\texttt{Active-Sample}(N_\sou^\pre, \nu^0)$. We sample $n^\pre_m$ samples i.i.d from source task $m$ and let $\mathbf X_m^\pre$ and $y_m^\pre$ to be the matrix of features of samples and vector of feedback from task $m$. Same as that in the known $\nu$ setting, given data and a proper penalty term $\lambda^\pre_\sou$ (all proper parameters are specified in Appendix \ref{appendix:pf:thm:unknown}), we estimate $(\theta^*_1,\dots,\theta^*_M)$ as regularized MLE, 
\begin{align}
\label{eq:activemle1}
(\hat\theta_1^\pre,\dots,\hat\theta_M^\pre)= \argmax_{\Theta_\sou} \sum_{m=1}^M\sum_{i=1}^{n^\pre_\sou}\ell\left((x^\pre_{m,i})^\top\theta_m, y^\pre_{m,i}\right) - \lambda_\sou^\pre\sum_{m=1}^M n^\pre_m\|\theta_m\|_2^2. 
\end{align}

Secondly, we sample $n^\pre$ i.i.d samples from the target task. With $\mathbf X^\pre$ and $y^\pre$ as the sample and feedback, and with a proper penalty term $\lambda^\pre$, we can estimate $\theta^*$ as regularized MLE, 
\begin{align}
\label{eq:activemlet1}
\hat\theta^\pre = \argmax_\Theta \sum_{i=1}^{n^\pre}\ell\left((x^\pre_{i})^\top\theta , y^\pre_{i}\right) - n^\pre\lambda^\pre \|\theta\|_2^2. 
\end{align}

Finally, we estimate $\nu^*$. Before that, we estimate   $\mathbf{H}:=\mathbf{E}+\min(\lambda^\pre_\sou, \lambda^\pre, \lambda_\sou, \lambda) \mathbf I$ (where $\lambda^\pre_\sou, \lambda^\pre,\lambda_\sou, \lambda$ are prespecified proper regularization terms). With a slight abuse of notation, we also use $\mathbf{\Lambda}$ to denote the estimation. 
\begin{align*}
\mathbf{\Lambda} = \frac{1}{N^\pre}\sum_{m=1}^M \sum_{i=1}^{n^\pre_m} \mu'((x_{m,i}^\pre)^\top\hat\theta^\pre_m)x_{m,i}^\pre(x_{m,i}^\pre)^\top+\min(\lambda^\pre_\sou, \lambda^\pre, \lambda_\sou, \lambda) \mathbf I. 
\end{align*}
Now, based on the estimation of $\theta^*_m$ and $\theta^*$, we can estimate $\nu^*$. We set a proper parameter $R$ satisfying $
R\geq \sum_{m=1}^M (\nu^*_m)^2/|\nu^0_m|
$ as a bound, 
and define a region $\Theta_\nu$ of $\nu$ ensuring that the true $\nu^*$ lies in the region, that is $\nu^*\in\Theta_\nu$. Since the sample complexity in Theorem \ref{thm:known} is related to $\|\nu\|_1$, thus we should penalize the loss with $\ell_1-$norm of $\nu$. Specifically, we use Lasso Programming \citep{Tibshirani1996lasso} with a proper penalty $\beta$, 
\begin{align}
\label{eq:activelatent}
\hat\nu^\pre = \argmin_{\nu\in \Theta_\nu} \frac{1}{2}\bigg\|\sum_{m=1}^M\nu_m\hat\theta_m^\pre-\hat\theta^\pre\bigg\|_{\mathbf\Lambda}^2 + \beta\|\nu\|_1, 
\quad
\Theta_\nu = \bigg\{\nu: \sum_{m=1}^M\frac{(\nu_m)^2}{|\nu^0_m|}\leq R\bigg\}.
\end{align}

\paragraph{Estimating $\theta^*$.} We set the total sample number of source tasks as
$N_\sou$, and compute sample size for each source task as follows, based on the estimation $\nu^\pre$, we can get the sample size $(n_1, \dots, n_M)=\texttt{Active-Sample}(N_\sou, \hat\nu^\pre)$. After sampling $n_m$ samples $(\mathbf X_m, y_m)$ from source task $m$, given a proper $\lambda_\sou$, we estimate $(\theta^*_1,\dots,\theta^*_M)$ as regularized MLE, 
\begin{align}
\label{eq:activemle2}
(\hat\theta_1,\dots,\hat\theta_M)= \argmax_{\Theta_\sou} \sum_{m=1}^M\sum_{i=1}^{n_m}\ell\left(x_{m,i}^\top \theta_m , y_{m,i}\right) - \lambda_\sou\sum_{m=1}^M n_m\|\theta_m\|_2^2. 
\end{align}

Lastly, we sample $n$ samples $\{(x_i, y_i)\}_{i=1}^n$ with matrix form $(\mathbf X, y)$ from the target task and estimate $\theta$ in the estimated latent space $\hat{\mathcal  S}$ provided by $(\hat\theta_1, \dots, \hat\theta_M)$. With proper $\lambda$, 
\begin{align}
\label{eq:activemlet2}
\hat\theta= \argmax_{\hat {\mathcal S}\cap \Theta} \sum_{i=1}^{n}\ell\left(x_{i}^\top\theta, y_{i}\right) - n\lambda \|\theta\|_2^2, \text{ where } \hat{\mathcal  S} = {\rm span}\{\hat\theta_1, \dots, \hat\theta_M\}. 
\end{align}

Same as in the previous section, we construct a confidence set for $\theta^*$  with proper parameter $\alpha>0$ and give the policy $\pi$. Precisely,   
\begin{align}
\label{eq:activepolicy}
\pi = \argmax_{\pi'} \min_{\theta\in\mathcal{C}} \left\{\mathbb E_{s\sim\rho}[\phi(s, \pi'(s))^\top\theta]\right\}, \text{ where }\mathcal{C} = \big\{\theta : \big\|\theta-\hat\theta\big\|_{\mathbf{\Lambda}}\leq \alpha\varepsilon\big\}.  
\end{align}

\begin{algorithm}
\label{alg:unknown}
\caption{Active Multi-Task Low-Rank RLHF.}
\begin{algorithmic}[1]
\REQUIRE Total sample size $N$, prior task relevance $\nu^0$. 
\STATE Initialize $N^\pre_\sou, n^\pre, N_\sou, n$ satisfying \eqref{eq:activesmaplesize} $\lambda^\pre_\sou, \lambda^\pre, \lambda_\sou, \lambda$ satisfying \eqref{eq:activelambda}, $\beta$ satisfying \eqref{eq:activebeta}. \\
\textcolor{bluee}{\texttt{Phase 1: Estimating $\nu^*$:}}
\STATE Let $\{n^\pre_m\}_{m=1}^M=\texttt{Active-Sample}(N_\sou^\pre, \nu^0)$ and estimate $(\hat\theta_1^\pre,\dots,\hat\theta_M^\pre)$ via \eqref{eq:activemle1}. 
\STATE Estimate $\hat\theta^\pre$ and $\hat\nu^\pre$ via \eqref{eq:activemlet1} and \eqref{eq:activelatent}, respectively. \\
\textcolor{bluee}{\texttt{Phase 2: Estimating $\theta^*$:}}
\STATE Let $\{n_m\}_{m=1}^M=\texttt{Active-Sample}(N_\sou, \hat\nu^\pre)$ and estimate $(\hat\theta_1,\dots,\hat\theta_M)$ via \eqref{eq:activemle2}. 
\STATE Estimate $\hat\theta$ by MLE via \eqref{eq:activemlet2}. 
\RETURN Policy $\pi$ from \eqref{eq:activepolicy}. 
\end{algorithmic}
\end{algorithm}

Now, we can state the main theorem. See Appendix \ref{appendix:pf:thm:unknown} for a formal version of the theorem and its detailed proof.

\begin{theorem}[Informal] \label{thm:unknown}
Let Assumptions \ref{smp:lowdim}, \ref{smp:bound}, and \ref{smp:var} hold. 
 Then for any sufficiently small $\varepsilon > 0$, Algorithm \ref{alg:unknown} outputs an $\varepsilon$-optimal policy with probability at least $1-\delta$ when
\begin{align}
\begin{split}
\label{eq:activesmaplesize}
N_\sou=\Tilde{\mathcal O}\left(C^*\|\nu^*\|_1^2k(d+M)\varepsilon^{-2}\right), \quad
n = \Tilde{\mathcal O}\left(C^*k\varepsilon^{-2}\right), 
\end{split}
\end{align}
and $N^\pre_s, n^\pre$ above certain thresholds that are unrelated to $\varepsilon$ and only depend on other problem-dependent parameters such as $d$ and $k$. Here the coverage $C^*$ is defined as follows if we abuse notation $\mathbf{H}$,   
\begin{align}
\label{eq:activecover}
C^*=\left\|\mathbb E_{s\sim\rho}\left[\phi(s, \pi^*(s))\right]\right\|_{\mathbf{H}^{-1}}^2, \quad \mathbf H = \mathbf E + \min(\lambda^\pre_\sou, \lambda^\pre, \lambda_\sou, \lambda) \mathbf I.
\end{align} 
\end{theorem}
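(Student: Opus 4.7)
My plan is to analyze Algorithm~\ref{alg:unknown} in its two phases: Phase~1 produces an estimate $\hat\nu^\pre$ of the minimum-$\ell_1$ combination $\nu^*$, and Phase~2 uses this estimate both to re-allocate source samples (so the Theorem~\ref{thm:known} analysis applies with $\|\nu^*\|_1$ in place of the true weights) and to provide a learned $k$-dimensional subspace $\hat{\mathcal S}$ in which the target-task MLE effectively estimates only a $k$-dimensional parameter, reducing the target-task cost to $\Tilde{\mathcal O}(k/\varepsilon^2)$. Throughout, the Hessian concentration of Lemma~\ref{lem:duelinghessian} together with Assumption~\ref{smp:var} will let me interchange the empirical $\mathbf\Lambda$ with the population $\mathbf H$ at the price of absolute constants, so each estimation bound may be stated in whichever norm is convenient.

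For Phase~1 I would first invoke Lemma~\ref{thm:duelinggeneral} with sample sizes $n_m^\pre \propto |\nu_m^0|N_\sou^\pre$ and the weight vector $\nu^*$: using $\sum_m (\nu_m^*)^2/|\nu_m^0|\le R$ this yields $\|\sum_m\nu^*_m\hat\theta_m^\pre-\theta^*\|_{\mathbf H}^2\lesssim \Tilde{\mathcal O}(Rk(d+M)/N_\sou^\pre)$. Separately, Lemma~\ref{lem: duelingsingle} applied to the $n^\pre$ target-task samples gives $\|\hat\theta^\pre-\theta^*\|_{\mathbf H}^2\lesssim \Tilde{\mathcal O}(d/n^\pre)$, so by the triangle inequality the residual $r := \|\sum_m\nu^*_m\hat\theta_m^\pre-\hat\theta^\pre\|_{\mathbf\Lambda}$ is small. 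Viewing the $\hat\theta_m^\pre$ as columns of a design matrix $\mathbf V\in\mathbb R^{d\times M}$ and $\hat\theta^\pre$ as the response, the program \eqref{eq:activelatent} becomes an $\ell_1$-penalized least-squares problem in the $\mathbf\Lambda$-norm; a Lasso basic-inequality argument, dominating the cross term $\langle \hat\nu^\pre-\nu^*,\,\mathbf V^\top\mathbf\Lambda(\sum_m\nu^*_m\hat\theta_m^\pre - \hat\theta^\pre)\rangle$ by $\|\hat\nu^\pre-\nu^*\|_1$ times an $\ell_\infty$ noise term and using the constraint $\Theta_\nu$ in place of a restricted-cone condition, should then yield $\|\hat\nu^\pre\|_1=\mathcal O(\|\nu^*\|_1)$ provided $\beta$ is set as in \eqref{eq:activebeta} and $N_\sou^\pre,n^\pre$ exceed the $\varepsilon$-independent thresholds in the statement.

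For Phase~2 I would condition on the high-probability event that $\|\hat\nu^\pre\|_1\lesssim\|\nu^*\|_1$ and the random allocation $n_m \propto |\hat\nu^\pre_m|N_\sou$ is within a constant factor of the ideal $\nu^*$-allocation, so that a second application of Lemma~\ref{thm:duelinggeneral} with weights $\hat\nu^\pre$ gives $\|\tilde\theta-\theta^*\|_{\mathbf H}^2\lesssim \Tilde{\mathcal O}(\|\nu^*\|_1^2 k(d+M)/N_\sou)$ for $\tilde\theta := \sum_m \hat\nu^\pre_m \hat\theta_m\in\hat{\mathcal S}$. Since $\tilde\theta$ lies in $\hat{\mathcal S}$, the projection of $\theta^*$ onto $\hat{\mathcal S}$ is within this $\mathbf H$-distance of $\theta^*$, so rerunning the single-task concentration (Lemma~\ref{lem: duelingsingle}) in $\hat{\mathcal S}$-coordinates splits $\|\hat\theta-\theta^*\|_{\mathbf H}^2$ into this subspace-approximation piece plus an in-subspace piece of order $\Tilde{\mathcal O}(k/n)$. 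The pessimism step from the proof of Theorem~\ref{thm:known} (Cauchy--Schwarz against $\mathbb E_{s\sim\rho}[\phi(s,\pi^*(s))]$ in the $\mathbf H^{-1}$ norm) will then convert this into $\mathrm{SubOpt}(\pi)\le 2\sqrt{C^*}\|\hat\theta-\theta^*\|_{\mathbf\Lambda}$, which, after plugging in \eqref{eq:activesmaplesize}, is at most $\varepsilon$.

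The main obstacle will be the Lasso recovery in Phase~1: the design columns $\hat\theta_m^\pre$ are themselves random and only close to $\theta_m^*$ in a data-dependent norm, so the basic-inequality argument must be executed in the $\mathbf\Lambda$-norm and tightly coupled with the Phase~1 estimation bounds, while the usual restricted-eigenvalue condition is replaced by the weighted-$\ell_2$ constraint $\Theta_\nu$ encoded by the prior $\nu^0$, which keeps $\hat\nu^\pre$ bounded even when $\mathbf V$ is badly conditioned. A secondary technicality is that the Phase~2 sample sizes depend on the random $\hat\nu^\pre$; I would handle this by working on the high-probability event $\|\hat\nu^\pre-\nu^*\|_1\lesssim \|\nu^*\|_1$ on which all the $\|\nu^*\|_1$-rates transfer to the $\|\hat\nu^\pre\|_1$-rates up to absolute constants.
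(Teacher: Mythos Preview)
Your Phase~1 outline is essentially the paper's argument and is fine. The gap is in Phase~2, specifically in the sentence ``a second application of Lemma~\ref{thm:duelinggeneral} with weights $\hat\nu^\pre$ gives $\|\tilde\theta-\theta^*\|_{\mathbf H}^2\lesssim \Tilde{\mathcal O}(\|\nu^*\|_1^2 k(d+M)/N_\sou)$ for $\tilde\theta := \sum_m \hat\nu^\pre_m \hat\theta_m$.'' Lemma~\ref{thm:duelinggeneral} controls $\bigl\|\sum_m \nu_m(\hat\theta_m-\theta_m^*)\bigr\|_{\mathbf H}$, not $\bigl\|\sum_m \nu_m\hat\theta_m-\theta^*\bigr\|_{\mathbf H}$; these coincide only when $\sum_m \nu_m\theta_m^*=\theta^*$. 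For $\nu=\hat\nu^\pre$ the leftover term $\bigl\|\sum_m \hat\nu^\pre_m\theta_m^*-\theta^*\bigr\|_{\mathbf H}$ is of order $\varepsilon_\theta$, a Phase~1 quantity that is \emph{independent of $\varepsilon$} (since by hypothesis $N_\sou^\pre,n^\pre$ are only above $\varepsilon$-free thresholds). So your $\tilde\theta$ is not $\varepsilon$-close to $\theta^*$, and the subspace-approximation piece in your decomposition does not vanish as $\varepsilon\to 0$. The fallback you mention---conditioning on ``the random allocation $n_m$ is within a constant factor of the ideal $\nu^*$-allocation'' or on $\|\hat\nu^\pre-\nu^*\|_1\lesssim\|\nu^*\|_1$---does not rescue this: the Lasso basic inequality gives no entrywise control of $\hat\nu^\pre$, and $\|\hat\nu^\pre-\nu^*\|_1\lesssim\|\nu^*\|_1$ is far too weak to make $\sum_m(\nu_m^*)^2/n_m$ behave.

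The paper closes this gap with an extra ingredient you are missing: a task-diversity constant $C_\theta$ (Definition~\ref{asp:wellrepresent}) that guarantees any vector in $\operatorname{span}\{\theta_1^*,\dots,\theta_M^*\}$ of unit $\mathbf H$-norm can be written as $\sum_m\alpha_m\theta_m^*$ with $\|\alpha\|_2\le C_\theta$. This lets one correct $\hat\nu^\pre$ to a nearby $\tilde\nu$ that \emph{exactly} satisfies $\sum_m\tilde\nu_m\theta_m^*=\theta^*$ while still obeying $\sum_m\tilde\nu_m^2/\hat\alpha_m\lesssim\|\nu^*\|_1^2$ (Lemma~\ref{lem:activetilde}); the auxiliary point is then $\tilde\theta:=\sum_m\tilde\nu_m\hat\theta_m\in\hat{\mathcal S}$, for which Lemma~\ref{thm:duelinggeneral} now legitimately yields $\|\tilde\theta-\theta^*\|_{\mathbf H}\lesssim\varepsilon/\sqrt{C^*}$. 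With this $\tilde\theta$ in hand, the paper compares $\hat\theta$ to $\tilde\theta$ in the MLE objective using a two-sided quadratic approximation (Lemma~\ref{lem:duelingappro} at $\hat\theta$ and a new reverse bound Lemma~\ref{lem:activeappro} at $\tilde\theta$) and a $k$-dimensional Bernstein bound, which is the rigorous version of your ``in-subspace piece of order $\Tilde{\mathcal O}(k/n)$.'' Once you insert the $\tilde\nu$ correction, the rest of your plan goes through.
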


\begin{remark}
We remark following good properties possessed by multi-task learning. 
\begin{itemize}
\item We can find that the sample complexity of $N_\sou$ is the same as that in the case when $\nu$ is known. The additional sample complexity used to estimate $\nu^*$ is unrelated to $\varepsilon$, which is just lower-order terms. 
\item For the same reason stated in the previous section, if samples are drawn uniformly from each task, the sample complexity should be much larger. 
\item If we directly sample from the target task, since we do not have the learned low-rank subspace, the sample complexity of $n$ should be $C^*d\varepsilon^{-2}$. With the aid of an estimated feature extractor, the sample complexity of the target task only scales linearly in $k$. 
\end{itemize}
\end{remark}

Detailed analysis for $N^\pre_\sou, n^\pre$ can be found in Remark \ref{remark:activesamplepre} in Appendix \ref{appendix:pf:thm:unknown}.

\begin{proof}[\textbf{Proof Sketch of Theorem \ref{thm:unknown}}]
The proof is separated into two parts according to Algorithm \ref{alg:unknown}. 

\textbf{Estimating $\nu^*$.} We first bound the error of source tasks (Lemma \ref{lem:activesource1}) and the target task (Lemma \ref{lem:activetarget1}) by $\varepsilon_\theta$, which is defined as  
\begin{align}
\label{eq:activectheta}
\varepsilon_\theta = \frac{\|\nu^*\|_1}{\sqrt{M}C_\theta}. 
\end{align}
Here, $C_\theta$ is defined in Definition \ref{asp:wellrepresent}. The techniques used are similar to that of the proof of Lemma \ref{thm:duelinggeneral}. Based on the error bound in Lemma \ref{lem:activesource1}, we can guarantee that $\mathbf \Lambda$ is a good approximation of $\mathbf H$ with high probability (Lemma \ref{lemma:activeLambda}). 

After that, we turn to give two good properties of $\hat\nu^\pre$. The first is low $l_1-$norm that can be used to bound the sample complexity in the next step (Lemma \ref{lem:activesamplecom}), namely $\|\hat\nu^\pre\|_1\lesssim \|\nu^*\|_1$. The second states that $\hat\nu^\pre$ can represent the task relevance of source tasks well (Lemma \ref{lem:activetilde}). Specifically, there exists $\Tilde\nu\in \mathbb R^M$ very close to $\hat\nu^\pre$, such that $\Tilde\nu$ precisely represents the proportion of source tasks,  
\begin{align*}
\sum_{m=1}^M\Tilde\nu_m\theta_m^* = \theta^*. 
\end{align*}
The proof of this lemma is based on the term in Definition \ref{asp:wellrepresent}. 

\textbf{Estimating $\theta^*$.} 
With $\Tilde\nu$ defined in Lemma \ref{lem:activetilde}, we can define the auxiliary estimation of $\theta^*$,  
\begin{align}
\label{eq:activetildetheta}
\Tilde\theta = \sum_{m=1}^M \Tilde\nu_m\hat\theta_m. 
\end{align}
The term $\Tilde{\theta}$ is close to $\theta^*$ (Lemma \ref{lem:activesource2}), and, based on that result, we bound the error of estimating $\theta^*$ (Lemma \ref{thm:activemain}). To be specific, with high probability, 
\begin{align*}
\left\|\Tilde\theta-\theta^*\right\|_\mathbf{H}\lesssim \frac{\varepsilon}{\sqrt{C^*}}, \quad \left\|\hat\theta-\theta^*\right\|_{\mathbf{H}}\lesssim \frac{\varepsilon}{\sqrt{C^*}}. 
\end{align*}
During the proof, we need another approximation of the log-likelihood function from another direction (Lemma \ref{lem:activeappro}), different from that in Lemma \ref{lem:duelingappro}. Finally, the main result is proved in Lemma \ref{thm:activepolicy} using similar techniques from the proof of Lemma \ref{thm:duelingpolicy}. 
\end{proof}

\section{Conclusion}

In this paper, we study the power of active multi-task learning in RLHF. We formulate the RLHF problem as a contextual dueling bandit problem, assuming a common linear representation. Our work demonstrates that the sample complexity of source tasks in multi-task RLHF can be decreased by considering task relevance and assigning different sample sizes to source tasks based on their relevance. We develop an algorithm to estimate task relevance using additional data and then learn a policy. We prove the sample complexity of the proposed algorithm to output an $\varepsilon-$optimal policy. 

There are two directions for future investigation. First, it is important to empirically demonstrate the effectiveness of active multi-task learning in RLHF with large language models. Also, it will be interesting to extend active multi-task learning to other tasks.

\paragraph{Acknowledgements.} Ruitao Chen is partially supported by the elite undergraduate training program of the School of Mathematical Sciences, Peking University. 

\newpage
\bibliographystyle{ims}
\bibliography{graphbib}

\newpage

\appendix

\section{Notations}

In this section, we define the notation of mean zero noise and the Hessian Matrix of the regularized likelihood that we use through the proof. 

We define the mean zero noise as $\eta_{m,i} = y_{m,i}-\mu(x_{m,i}^\top \theta^*_m)$. In the following proofs, we always use $\eta$ to denote noise or vector of noise corresponding to data feature $x$ with the same subscript and superscript. The noise has the following properties: 
\begin{align*}
|\eta_{m,i}|\leq 1, \quad \mathbb E [\eta_{m,i}^2] = \mu'(x_{m,i}^\top \theta^*_m).
\end{align*}

Define the Hessian Matrix of the regularized likelihood as  
\begin{align*}
\hat{\mathbf{H}}_m = \sum_{i=1}^{n_m} \mu'(x_{m,i}^\top\theta^*_m) x_{m,i}x_{m,i}^\top+n_m\lambda  \mathbf{I}. 
\end{align*}
We will also define $\mathbf{\hat H}$ with different subscripts and superscripts denoting the Hessian Matrix of corresponding data in the rest of the proof. 

\section{Proof of Theorem~\ref{thm:known}} \label{appendix:pf:thm:known}

We provide the formal version of Theorem~\ref{thm:known}.

\begin{theorem}[Formal Version of Theorem~\ref{thm:known}]
\label{thm:known1}
Suppose Assumptions \ref{smp:lowdim}, \ref{smp:bound}, and \ref{smp:var} hold. Then for sufficiently small $\varepsilon > 0$, with probability at least $1-\delta$, Algorithm \ref{alg:dueling} outputs an $\varepsilon$-optimal policy when 
\begin{align}
N\gtrsim\max\left(\Tilde{\mathcal O}\left(C^*\|\nu\|_1^2k(d+M)\log(1/\delta)^4\varepsilon^{-2}\right), \rho_x^4M(d+\log(M/\delta))\right),
\end{align}
where the coverage $C^*$ is defined as
\begin{align}
\label{eq:duelingcoverage}
C^*=\left\|\mathbb E_{s\sim\rho}\left[\phi(s, \pi^*(s))\right]\right\|_{\mathbf{H}^{-1}}^2, \quad \mathbf H = \mathbf E + \lambda \mathbf I,
\end{align}
and proper parameters satisfy 
\begin{align}
\label{eq:duelinglambda}
B_x^2/d\lesssim N\lambda/M\lesssim k/B_\theta^2, \quad \alpha=\mathcal O\left(1/\sqrt{C^*}\right). 
\end{align}
\end{theorem}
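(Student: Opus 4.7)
The plan is to follow the outline sketched by the authors: reduce the regularized MLE to a quadratic problem via a log-likelihood approximation, control the feedback noise uniformly across tasks through the spectral norm of a single $d \times M$ random matrix, and then convert the parameter-estimation bound into a policy-suboptimality bound via pessimism. First, I would establish a second-order Taylor approximation of $\ell(t,y)$ valid on $|t| \le L$, where $\mu'$ is lower bounded by $1/\kappa(L)$; this converts \eqref{eq:duelingmle} into an essentially weighted-least-squares problem whose per-task Hessian $\hat{\mathbf H}_m = \sum_i \mu'(x_{m,i}^\top\theta_m^*) x_{m,i} x_{m,i}^\top + n_m \lambda \mathbf I$ concentrates around $n_m(\mathbf E_m + \lambda \mathbf I)$ by Matrix Bernstein under Assumption~\ref{smp:var}. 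The threshold $N \gtrsim \rho_x^4 M(d + \log(M/\delta))$ is imposed precisely so that even the smallest source task, allocated $\ge N/(2M)$ samples by \eqref{eq:duelingsamplesize}, satisfies this concentration after a union bound over $m \in [M]$.

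Next I would control the noise. For each task the Bradley--Terry noise $\eta_{m,i} = y_{m,i} - \mu(x_{m,i}^\top \theta_m^*)$ is bounded by $1$ with variance $\mu'(x_{m,i}^\top \theta_m^*)$, so a self-normalized concentration bound together with a union bound yields $\|g_m\|_2 \lesssim \sqrt{d}\log(1/\delta)$ simultaneously for all $m \in [M]$, where $g_m = \hat{\mathbf H}_m^{-1/2} \mathbf X_m^\top \eta_m$. The crucial step is bounding the operator norm of $\mathbf G = [g_1,\dots,g_M] \in \mathbb R^{d\times M}$: I would truncate each column at its high-probability norm bound and apply Matrix Bernstein for rectangular matrices to get $\|\mathbf G\| \lesssim \sqrt{d+M}\log(1/\delta)^2$. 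This joint spectral bound, rather than a per-column union bound, is what yields an additive $d+M$ rather than a multiplicative $dM$ in the final rate.

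Third, I would analyze the aggregated estimator $\hat\theta = \sum_m \nu_m \hat\theta_m$. Starting from the first-order optimality conditions in the quadratic approximation, and using that the collection $\{\hat\theta_m - \theta^*_m\}$ jointly spans at most a $k$-dimensional subspace because of the rank constraint baked into $\Theta_{\sou}$, the $\mathbf H$-norm error decomposes as
\begin{align*}
\|\hat\theta - \theta^*\|_{\mathbf H}^2 \;\lesssim\; \bigg(\sum_{m=1}^M \frac{\nu_m^2}{n_m}\bigg)\cdot k\,\|\mathbf G\|^2 + \text{(regularization bias)},
\end{align*}
where the factor $k$ arises from projecting the aggregated noise onto the $k$-dimensional estimated feature space and the bias is absorbed by the upper bound on $\lambda$ in \eqref{eq:duelinglambda}. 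The allocation \eqref{eq:duelingsamplesize} guarantees $\sum_m \nu_m^2/n_m \le 2\|\nu\|_1^2/N$, so substituting the bound on $\|\mathbf G\|^2$ and the hypothesis on $N$ yields $\|\hat\theta - \theta^*\|_{\mathbf H}^2 \lesssim \varepsilon^2/C^*$. For the policy step, the same Hessian concentration used on each task gives $\mathbf \Lambda \asymp \mathbf H$, so with $\alpha = \mathcal O(1/\sqrt{C^*})$ the event $\theta^* \in \mathcal C$ holds with high probability; the pessimistic construction \eqref{eq:duelingpolicy} then gives, via Cauchy--Schwarz,
\begin{align*}
\mathrm{SubOpt}(\pi) \;\le\; 2\alpha\varepsilon \cdot \|\mathbb E_{s\sim\rho}[\phi(s,\pi^*(s))]\|_{\mathbf H^{-1}} \;=\; 2\alpha\varepsilon\sqrt{C^*} \;\lesssim\; \varepsilon.
\end{align*}

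I expect the main obstacle to be correctly propagating the rank-$k$ constraint through the multi-task aggregation: a naive per-task analysis would produce a $\sum_m \nu_m^2 \cdot d$ scaling and would see no benefit from the shared representation, whereas one needs the aggregated error to live in a $k$-dimensional subspace so that only $k$ of the $d$ noise directions matter. This is precisely what forces the use of the joint operator-norm bound on $\mathbf G$ rather than a column-wise union bound, and the rank-$k$ projection argument on the tangent space of $\Theta_{\sou}$. A secondary technical subtlety is ensuring the quadratic approximation of $\ell$ remains valid along the entire optimization trajectory on $\Theta_{\sou}$, which I would handle by a standard self-bounding argument exploiting the strong convexity of the regularized MLE on the bounded region guaranteed by Assumption~\ref{smp:bound}.
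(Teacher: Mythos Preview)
Your proposal is correct and follows essentially the same route as the paper: quadratic approximation of $\ell$, Hessian concentration, per-column bound on $g_m$ followed by a truncated Matrix Bernstein argument for $\|\mathbf G\|$, a low-rank projection combined with Cauchy--Schwarz to obtain $\|\hat\theta-\theta^*\|_{\mathbf H}^2 \lesssim (\sum_m \nu_m^2/n_m)\cdot k\|\mathbf G\|^2$, and the standard pessimism argument for suboptimality. The only minor slip is that $\{\hat\theta_m-\theta^*_m\}$ lies in a $2k$-dimensional subspace (the span of both the true and estimated rank-$k$ spaces), not $k$-dimensional, and showing $\mathbf\Lambda\asymp\mathbf H$ additionally requires the per-task closeness $|x_{m,i}^\top(\hat\theta_m-\theta_m^*)|\le 1$ to transfer $\mu'(x^\top\hat\theta_m)$ to $\mu'(x^\top\theta_m^*)$; neither affects the argument's structure or the final rate.
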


\subsection{Auxiliary Lemmas}

\begin{lemma}[Approximation of log-likelihood]
\label{lem:duelingappro}
Assume real numbers $|t|\leq L_1, |t^*|\leq L_2, y\in \{0,1\}$, where $L_1\geq \max\{L_2,1\}$, then
\begin{align*}
\ell(t, y)\leq \ell(t^*, y)+(y-\mu(t^*))(t-t^*)-\frac{1}{20L_1\kappa(L_2)}(t-t^*)^2.
\end{align*}
\end{lemma}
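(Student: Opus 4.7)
The plan is to recognize the inequality as a second-order Taylor-type bound for the concave function $r \mapsto \ell(r, y)$ and exploit two complementary pointwise lower bounds on $\mu'$. Writing $f(r) = \ell(r, y)$, direct computation gives $f'(r) = y - \mu(r)$ and $f''(r) = -\mu'(r)$, so Taylor's theorem with integral remainder yields
\begin{align*}
\ell(t, y) = \ell(t^*, y) + (y - \mu(t^*))(t - t^*) - \int_{t^*}^{t} (t - r)\,\mu'(r)\,dr.
\end{align*}
Thus the lemma reduces to showing $\int_{t^*}^t (t - r)\,\mu'(r)\,dr \ge (t-t^*)^2/(20 L_1 \kappa(L_2))$, which I would prove under the WLOG assumption $t > t^*$; the reverse orientation follows from the same argument after the reflection $r \mapsto -r$, using that $\mu'$ is even and both sides are invariant under $(t, t^*) \mapsto (-t, -t^*)$.

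The two pointwise estimates I would invoke are (i) the definitional bound $\mu'(r) \ge 1/\kappa(L_2)$ on $[-L_2, L_2]$, and (ii) the elementary tail bound $\mu'(r) = 1/(4\cosh^2(r/2)) \ge e^{-|r|}/4$ valid on all of $\mathbb{R}$ (from $(e^{r/2}+e^{-r/2})^2 \le 4e^{|r|}$). If $t \le L_2$, the segment $[t^*, t]$ lies in $[-L_2, L_2]$ and (i) alone yields $\int (t-r)\mu'(r)\,dr \ge (t-t^*)^2/(2\kappa(L_2))$, which already dominates the target since $L_1 \ge 1$. In the non-trivial case $t > L_2$ I would split the integral at $L_2$, applying (i) on $[t^*, L_2]$ and (ii) on $[L_2, t]$. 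Setting $a := L_2 - t^* \in [0, 2L_2]$ and $b := t - L_2 \in (0, L_1]$, the left segment contributes at least $a(a+2b)/(2\kappa(L_2))$, while an integration by parts computes $\int_{L_2}^t (t-r)\,e^{-r}\,dr = e^{-L_2}(b - 1 + e^{-b})$, and $e^{-L_2} \ge 1/\kappa(L_2)$ reduces the remaining step to the elementary inequality $b - 1 + e^{-b} \ge b^2/(5L_1)$ for $b \in [0, L_1]$ and $L_1 \ge 1$. That inequality splits into $b \in [0, 2]$ (where the Maclaurin series gives $b - 1 + e^{-b} \ge b^2/3 \ge b^2/(5L_1)$) and $b \in [2, L_1]$ (where $b - 1 \ge b/2 \ge b^2/(5L_1)$ using $b \le L_1$). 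Summing the two contributions gives $(a^2 + 2ab)/(2\kappa(L_2)) + b^2/(20 L_1 \kappa(L_2)) \ge (a+b)^2/(20 L_1 \kappa(L_2)) = (t-t^*)^2/(20 L_1 \kappa(L_2))$.

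The conceptual difficulty is extracting the advertised factor $L_1 \kappa(L_2)$ rather than $\kappa(L_1)$ in the denominator: a naive uniform estimate $\mu'(r) \ge 1/\kappa(L_1)$ on $[-L_1, L_1]$ would only yield $(t-t^*)^2/(2\kappa(L_1))$, which is exponentially worse when $L_1 \gg L_2$. The gain comes precisely from splitting the integral and integrating the exponential tail bound against the linear weight $(t-r)$; this is what lets a single linear factor of $L_1$ replace the exponential gap $\kappa(L_1)/\kappa(L_2) \asymp e^{L_1 - L_2}$, and the calculus inequality $b - 1 + e^{-b} \ge b^2/(5 L_1)$ is the quantitative heart of the argument.
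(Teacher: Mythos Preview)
Your argument is sound in structure and reaches the conclusion, but one intermediate numerical claim is false: $b - 1 + e^{-b} \ge b^2/3$ does not hold on all of $[0,2]$ (at $b=1.5$ the left side is $0.5 + e^{-1.5} \approx 0.723 < 0.75 = b^2/3$). The immediate fix is to replace $1/3$ by $1/5$: with $g(b) = b - 1 + e^{-b} - b^2/5$ one has $g(0)=g'(0)=0$ and $g''(b) = e^{-b} - 2/5$, so $g'$ rises from $0$ on $[0,\ln(5/2)]$ and then decreases but stays positive through $b=2$ (where $g'(2)=1-e^{-2}-4/5>0$); hence $g\ge 0$ on $[0,2]$. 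Since $L_1\ge 1$ this already yields $b-1+e^{-b}\ge b^2/5 \ge b^2/(5L_1)$, and the rest of your Case~2 bookkeeping is unaffected.

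Your route is genuinely different from the paper's. You bound the Taylor integral remainder $\int_{t^*}^t(t-r)\mu'(r)\,dr$ directly, splitting at $r=L_2$ and using the global tail estimate $\mu'(r)\ge e^{-|r|}/4$ on the far piece; integrating that exponential against the linear weight is what converts the potentially exponential ratio $\kappa(L_1)/\kappa(L_2)$ into a single factor $L_1$. The paper instead sets $f(t)=\ell(t,y)-\ell(t^*,y)-(y-\mu(t^*))(t-t^*)+\frac{(t-t^*)^2}{20L_1\kappa(L_2)}$ and shows $f'(t)\le 0$ by a balancing argument: on the unit interval $[t^*,t^*+1]$ the local bound $\mu'(t^*+\varepsilon)\ge e^{-1}\mu'(t^*)\ge 1/(e\kappa(L_2))$ contributes enough negativity to $f'$ to absorb the positive contribution $(t-t^*)/(10L_1\kappa(L_2))\le 1/(5\kappa(L_2))$ from the remainder of the range. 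The paper's proof is shorter because it never integrates the tail, needing only the one-step Lipschitz estimate for $\log\mu'$; yours is more explicit about where the quadratic lower bound comes from and makes the role of the exponential tail of $\mu'$ transparent.
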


\begin{proof}
Without loss of generality, we may assume 
that $t\geq t^*$, otherwise replace $t, t^*$ by $-t, -t^*$. Now we fix $t^*,y$ and denote
\begin{align*}
f(t) = \ell(t, y)- \ell(t^*, y)-(y-\mu(t^*))(t-t^*)+\frac{1}{20L_1\kappa(L_2)}(t-t^*)^2.
\end{align*}
The derivative of $f$ can be expressed as
\begin{align*}
f'(t) = (y-\mu(t))-(y-\mu(t^*))+\frac{1}{10L_1\kappa(L_2)}(t-t^*) = \int_{t^*}^t g(s) {\rm d s},
\end{align*}
where $g(s)=-\mu'(s)+1/(10L_1\kappa(L_2))$. For $0\leq \varepsilon\leq 1$, we have 
\begin{align*}
g(t^*+\varepsilon) 
= -\mu'(t^*+\varepsilon) +\frac{1}{8L_1\kappa(L_2)}
\leq -\frac{1}{e\kappa(L_2)} +\frac{1}{10L_1\kappa(L_2)}\leq -\frac{1}{5\kappa(L_2)}.
\end{align*}
Which means that for $t^*\leq t\leq t^*+1$, we have $g(t)\leq 0$, and thus, $f'(t)\leq 0$. For $t^*+1\leq t\leq L_2$, We obtain
\begin{align*}
f'(t) = \int_{t^*}^{t^*+1}g(s){\rm d}s + \int_{t^*+1}^{t}g(s){\rm d}s \leq -\frac{1}{5\kappa(L_2)} + (2L_1)\frac{1}{10L_1\kappa(L_2)}\leq 0.
\end{align*}
Thus, $f(t)\leq f(t^*)=0$, for all $t^*\leq t\leq L_1$. 
\end{proof}

\begin{lemma}[Hessian estimation]
\label{lem:duelinghessian}
Since $n_m\gtrsim \rho_x^4(d + \log(M/\delta)), \forall m\in [M]$, we have, with probability at least $1-\delta/8$, 
\begin{align*}
\begin{split}
0.9C_1 \mathbf{H} \preceq \frac{1}{n_m} \hat{\mathbf{H}}_m \preceq 1.1C_2 \mathbf{H}, \qquad \forall m\in [M].
\end{split}
\end{align*}
\end{lemma}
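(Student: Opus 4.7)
The plan is to reduce the claim to a standard sub-Gaussian sample covariance concentration after a suitable whitening. Define
\[
z_{m,i} = \mathbf{E}_m^{-1/2}\sqrt{\mu'(x_{m,i}^\top \theta_m^*)}\,x_{m,i}.
\]
By Assumption~\ref{smp:var}, for each fixed $m$ the vectors $\{z_{m,i}\}_{i=1}^{n_m}$ are i.i.d., mean zero, and sub-Gaussian with $\|z_{m,i}\|_{\psi_2}\leq \rho_x$. A direct calculation also gives $\EE[z_{m,i}z_{m,i}^\top] = \mathbf{E}_m^{-1/2}\mathbf{E}_m \mathbf{E}_m^{-1/2} = \mathbf{I}$, so the whitened sample covariance
\[
\mathbf{S}_m := \frac{1}{n_m}\sum_{i=1}^{n_m} z_{m,i}z_{m,i}^\top = \mathbf{E}_m^{-1/2}\Bigl(\tfrac{1}{n_m}\sum_{i=1}^{n_m} \mu'(x_{m,i}^\top\theta_m^*)x_{m,i}x_{m,i}^\top\Bigr)\mathbf{E}_m^{-1/2}
\]
has mean $\mathbf{I}$, and I want to show $\|\mathbf{S}_m-\mathbf{I}\|\leq 0.1$ with high probability.

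Next I would invoke a standard sub-Gaussian covariance concentration bound (e.g.\ Vershynin, \emph{High-Dimensional Probability}, Theorem 4.6.1 / 4.7.1), which gives
\[
\|\mathbf{S}_m-\mathbf{I}\| \;\lesssim\; \rho_x^2\!\left(\sqrt{\tfrac{d+t}{n_m}} + \tfrac{d+t}{n_m}\right)
\]
with probability at least $1-2e^{-t}$. Choosing $t = \log(16M/\delta)$ and using the sample size hypothesis $n_m \gtrsim \rho_x^4 (d + \log(M/\delta))$ with a sufficiently large absolute constant makes the RHS at most $0.1$. A union bound over $m\in[M]$ then yields
\[
0.9\,\mathbf{E}_m \preceq \tfrac{1}{n_m}\sum_{i=1}^{n_m}\mu'(x_{m,i}^\top\theta_m^*)x_{m,i}x_{m,i}^\top \preceq 1.1\,\mathbf{E}_m, \qquad \forall m\in[M],
\]
with probability at least $1-\delta/8$.

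Finally I would translate the sandwich back to $\mathbf{H} = \mathbf{E}+\lambda\mathbf{I}$ using Assumption~\ref{smp:var}. Chaining $C_1\mathbf{E} \preceq \mathbf{E}_m \preceq C_2 \mathbf{E}$ with the display above and adding $\lambda\mathbf{I}$ gives
\[
0.9C_1\mathbf{E} + \lambda\mathbf{I} \;\preceq\; \tfrac{1}{n_m}\hat{\mathbf{H}}_m \;\preceq\; 1.1C_2\mathbf{E} + \lambda\mathbf{I}.
\]
Because $C_1 \leq 1$, we have $\lambda\mathbf{I} \succeq 0.9 C_1 \lambda\mathbf{I}$, so the left-hand side is $\succeq 0.9C_1 \mathbf{H}$; because $C_2 \geq 1$, we have $\lambda\mathbf{I} \preceq 1.1 C_2 \lambda\mathbf{I}$, so the right-hand side is $\preceq 1.1 C_2 \mathbf{H}$. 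This gives exactly the stated bound.

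The only non-routine step is the sub-Gaussian concentration, but this is a canonical off-the-shelf tool once the whitening reduces to the isotropic i.i.d.\ setting; the real care is in the bookkeeping that the regularizer $\lambda\mathbf{I}$ interacts correctly with the constants $C_1 \leq 1 \leq C_2$, which is the short argument above.
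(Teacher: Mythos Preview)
Your proposal is correct and follows essentially the same approach as the paper: whiten by $\mathbf{E}_m^{-1/2}$, apply the sub-Gaussian covariance concentration from Vershynin (which the paper packages as its Lemma~\ref{lem:varianceestimation}), and then union bound over $m$. Your explicit bookkeeping with $C_1\leq 1\leq C_2$ and the regularizer $\lambda\mathbf{I}$ is exactly the step the paper leaves implicit in its one-line containment of events.
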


\begin{proof}
For all $m\in[M]$, by assumption, 
$\mathbf{E}_m^{-\frac{1}{2}}\sqrt{\mu'( x_{m,i}^\top \theta^*_m)} x_{m,i}$ are mean zero random variables with $\psi_2-$norm less than $\rho_x$. Apply Lemma \ref{lem:varianceestimation} to above random variables and denote $\hat{\mathbf{E}}_m=\sum_{i} \mu'({\theta^*_m}^\top x_{m,i})x_{m,i}x_{m,i}^\top$, 
\begin{align*}
\mathbb P\left(0.9 C_1\mathbf{H} \preceq  \frac{1}{n_m}\hat{\mathbf{H}}_m \preceq 1.1C_2 \mathbf{H}\right)\geq \mathbb P\left(0.9 \mathbf{E}_m\preceq \frac{1}{n_m}\hat{\mathbf{E}}_m \preceq 1.1 \mathbf{E}_m\right)\geq 1-\delta/8M. 
\end{align*}
The lemma is proved by taking a union bound over $m\in[M]$.
\end{proof}

\begin{lemma}[Concentration of single task]
\label{lem: duelingsingle}
Assume $\{x_i\}_{i=1}^n$ are samples of a single task satisfying Assumption \ref{smp:bound} and \ref{smp:var}, with corresponding noise $\eta_i$. Let $\sigma^2_i$ to be the variance of noise $\sigma^2_i = {\rm Var}(\eta_i)$, and 
\begin{align*}
\mathbf{H}_x = \sum_{i=1}^n \sigma^2_ix_ix_i^\top+n\lambda \mathbf I, 
\end{align*}
where $n\lambda\gtrsim B_x^2/d$. 
Then with probability at least $1-\delta$, the following inequality holds, 
\begin{align*}
\left\|\sum_{i=1}^n\eta_i\mathbf{H}_x^{-\frac{1}{2}}x_i\right\|_2 \leq \mathcal O\left(\sqrt{d}\log(d/\delta)\right). 
\end{align*}
\end{lemma}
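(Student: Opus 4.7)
The plan is to view the quantity of interest as a sum of independent mean-zero random vectors and apply the rectangular Matrix Bernstein inequality (Lemma~\ref{lem:matrixbern}) cited in the proof sketch. Concretely, I would set
\begin{align*}
z_i = \eta_i\,\mathbf{H}_x^{-1/2} x_i \in \mathbb R^d, \qquad i = 1,\dots,n,
\end{align*}
so that the target quantity is $\|\sum_{i=1}^n z_i\|_2$. Since the $x_i$ are treated as fixed and the $\eta_i$ are independent mean-zero (with $|\eta_i|\le 1$ and $\mathbb E[\eta_i^2]=\sigma_i^2$), the $z_i$ are independent mean-zero random vectors. The natural move is to apply a rectangular Matrix Bernstein bound to the $d\times 1$ matrices $z_i$.

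Next I would estimate the two quantities that feed into Matrix Bernstein: an almost-sure bound $R$ on $\|z_i\|_2$ and the operator-norm matrix variance $\sigma^2$. For the almost-sure bound, $\|z_i\|_2 \le \|\mathbf{H}_x^{-1/2} x_i\|_2 \le \|x_i\|_2/\sqrt{n\lambda} \le B_x/\sqrt{n\lambda}$, and the assumption $n\lambda\gtrsim B_x^2/d$ directly gives $R\lesssim \sqrt d$. For the variance, the key observation is the self-normalization built into $\mathbf{H}_x$:
\begin{align*}
\sum_{i=1}^n \mathbb E[z_i z_i^\top]
= \mathbf{H}_x^{-1/2}\Big(\sum_{i=1}^n \sigma_i^2 x_i x_i^\top\Big)\mathbf{H}_x^{-1/2}
\preceq \mathbf{H}_x^{-1/2}\mathbf{H}_x\mathbf{H}_x^{-1/2} = \mathbf{I},
\end{align*}
so the "tall" variance has operator norm at most $1$. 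The "wide" variance is a scalar, $\sum_i \mathbb E[z_i^\top z_i] = \operatorname{tr}(\mathbf{H}_x^{-1/2}\sum_i\sigma_i^2 x_i x_i^\top \mathbf{H}_x^{-1/2}) \le d$. Thus the rectangular matrix variance is $\sigma^2 \le d$.

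Feeding $R\lesssim\sqrt d$ and $\sigma^2 \le d$ into the rectangular Matrix Bernstein inequality gives, for any $t>0$,
\begin{align*}
\mathbb P\Big(\Big\|\textstyle\sum_i z_i\Big\|_2 \ge t\Big) \le (d+1)\exp\!\Big(-\frac{t^2/2}{d + R t/3}\Big).
\end{align*}
Choosing $t = c\sqrt d\,\log(d/\delta)$ for a sufficiently large absolute constant $c$ makes both the Gaussian regime ($t^2/d$) and the sub-exponential regime ($t/R$) dominate $\log((d+1)/\delta)$, yielding the claimed bound with probability at least $1-\delta$. I do not expect any serious obstacle: the only subtlety is arranging the rectangular Matrix Bernstein estimate so that the tight variance proxy $1$ (rather than a loose bound $d$) is what gets used on the "column-covariance" side, which is exactly what the self-normalization by $\mathbf{H}_x^{-1/2}$ delivers; everything else is boundedness arithmetic using $n\lambda\gtrsim B_x^2/d$.
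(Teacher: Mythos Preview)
Your proposal is correct and follows essentially the same argument as the paper: it defines the same summands $z_i=\eta_i\mathbf{H}_x^{-1/2}x_i$, derives the same almost-sure bound $\|z_i\|_2\lesssim\sqrt{d}$ from $n\lambda\gtrsim B_x^2/d$, the same self-normalized variance estimates $\big\|\sum_i\mathbb E[z_iz_i^\top]\big\|\le 1$ and $\sum_i\mathbb E[z_i^\top z_i]\le d$, and then applies the rectangular Matrix Bernstein inequality (Lemma~\ref{lem:matrixbern}) with the same choice $t\asymp\sqrt{d}\log(d/\delta)$.
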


\begin{proof}
\textbf{Step 1: Checking prerequisites of Matrix Bernstein's inequality.} Let $A_i=\eta_i\mathbf{H}_x^{-\frac{1}{2}}x_i$ to be a $d\times 1$ matrix, we have $A_1,\dots,A_n$ are independent and mean zero. $\|A_i\|$ is bounded by
\begin{align}
\label{eq: gmbound}
\|A_i\| = \left\|\eta_{i}\mathbf{H}_x^{-\frac{1}{2}}x_{i}\right\|\leq \left\|\mathbf{H}_x^{-\frac{1}{2}}x_{i}\right\| \leq B_x/\sqrt{n\lambda}\leq \mathcal O(\sqrt{d}). 
\end{align}
Because 
\begin{align*}
\begin{split}
\sum_{i=1}^{n}\mathbb E A_i A_i^\top=\mathbf{H}_x^{-\frac{1}{2}}\left(\sum_{i=1}^{n}(\mathbb E \eta_{i}^2) x_{i} x_{i}^\top\right) \mathbf{H}_x^{-\frac{1}{2}}\preceq \mathbf I. 
\end{split}
\end{align*}
We also have
\begin{align*}
\begin{split}
\left\|\sum_{i=1}^{n}\mathbb E A_i^\top A_i\right\| = {\rm tr}\left(\sum_{i=1}^{n}\mathbb E A_i^\top A_i\right) = {\rm tr}\left(\sum_{i=1}^{n}\mathbb E A_iA_i^\top\right)\leq d, \quad \left\|\sum_{i=1}^{n}\mathbb E A_i A_i^\top\right\| \leq 1. 
\end{split}
\end{align*}
\textbf{Step 2: Applying Matrix Bernstein's inequality.} Take $K=\mathcal O(\sqrt{d})$ and $\sigma^2\leq d$ in Lemma \ref{lem:matrixbern}. Let $t=\mathcal O(\sqrt{d}\log(d/\delta))$. This completes the proof.  
\end{proof}

\subsection{Concentration of Noise of Multi-Task}

\begin{lemma}[Concentration of the norm of $g_m$]
\label{lem:duelingnorm}
We have, with probability at least $1-\delta/(16N)$, 
\begin{align*}
\|g_m\|_2\leq \Tilde{\mathcal O}\left(\sqrt{d}\log(1/\delta)\right),\quad \forall m\in [M],
\end{align*}
where $g_m$ is defined in \eqref{eq:duelingnoisenotation}. 
\end{lemma}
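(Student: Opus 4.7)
The plan is to reduce Lemma~\ref{lem:duelingnorm} to a per-task invocation of Lemma~\ref{lem: duelingsingle}, followed by a union bound over $m\in[M]$. Fixing a source task $m$, I would first observe that
\begin{align*}
g_m \;=\; \hat{\mathbf{H}}_m^{-1/2}\mathbf{X}_m^\top\eta_m \;=\; \sum_{i=1}^{n_m}\eta_{m,i}\,\hat{\mathbf{H}}_m^{-1/2} x_{m,i}
\end{align*}
has exactly the form treated in Lemma~\ref{lem: duelingsingle}. Indeed, $\mathbb{E}[\eta_{m,i}^2]=\mu'(x_{m,i}^\top\theta_m^*)=\sigma_{m,i}^2$, so the Hessian $\hat{\mathbf{H}}_m$ defined in the Notations appendix coincides with the matrix $\mathbf{H}_x$ in Lemma~\ref{lem: duelingsingle} when specialized to task $m$'s samples.

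Next, I would verify the prerequisite $n_m\lambda\gtrsim B_x^2/d$ required by that lemma. From the allocation rule \eqref{eq:duelingsamplesize} we have $n_m\geq N/(2M)$, and the parameter choice \eqref{eq:duelinglambda} gives $N\lambda/M\gtrsim B_x^2/d$; combining these yields $n_m\lambda\gtrsim B_x^2/d$, so the hypothesis holds uniformly in $m$. I would then apply Lemma~\ref{lem: duelingsingle} with failure probability $\delta_m:=\delta/(16NM)$ to obtain
\begin{align*}
\|g_m\|_2 \;\leq\; \mathcal{O}\bigl(\sqrt{d}\,\log(d/\delta_m)\bigr)
\end{align*}
with probability at least $1-\delta_m$ for each fixed $m$.

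Finally, a union bound over $m\in[M]$ upgrades this to a simultaneous statement for all source tasks with probability at least $1-M\delta_m = 1-\delta/(16N)$, matching the target probability. Since $\log(d/\delta_m)=\log(16dNM/\delta)$ equals $\log(1/\delta)$ plus polylogarithmic factors in $d,M,N$, the extra terms are absorbed into the $\tilde{\mathcal O}$ notation, yielding the claimed bound $\|g_m\|_2\leq\tilde{\mathcal{O}}(\sqrt{d}\log(1/\delta))$ uniformly over $m\in[M]$.

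There is no substantive obstacle here: the lemma is essentially a bookkeeping step on top of Lemma~\ref{lem: duelingsingle}. The only point requiring modest care is ensuring that the union-bound penalty $\log(NM)$ is swept into $\tilde{\mathcal{O}}$ without inflating the leading $\sqrt{d}\log(1/\delta)$ scaling, which is immediate from the definition of $\tilde{\mathcal{O}}$, and that the prerequisite $n_m\lambda\gtrsim B_x^2/d$ holds for every $m$, which follows directly from the lower bound $n_m\geq N/(2M)$ built into the \texttt{Active-Sample} rule.
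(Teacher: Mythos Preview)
Your proposal is correct and follows essentially the same approach as the paper: apply Lemma~\ref{lem: duelingsingle} to each task with failure probability $\delta/(16MN)$, then take a union bound over $m\in[M]$. You in fact supply more detail than the paper does, explicitly verifying the prerequisite $n_m\lambda\gtrsim B_x^2/d$ via the lower bound $n_m\geq N/(2M)$ from the allocation rule, whereas the paper simply asserts that the assumptions are easy to check.
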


\begin{proof}
Fix $m\in M$ and use results in Lemma \ref{lem: duelingsingle}. Easy to check all the assumptions. We change the probability $\delta$ to $\delta/(16MN)$ and get
\begin{align*}
\mathbb P(\|g_m\|_2)\leq \delta/(16MN). 
\end{align*}
We complete the proof by taking a union bound. 
\end{proof}

\begin{lemma}[Concentration of $\|\mathbf G\|$]
\label{lem:duelingnoise}
We have, with probability at least $1-\delta/8$, 
\begin{align*}
\|\mathbf G\|\leq \Tilde{\mathcal O}\left(\sqrt{d+M}\log(1/\delta)^2\right), 
\end{align*}
where $\mathbf G$ is defined in \eqref{eq:duelingnoisenotation}. 
\end{lemma}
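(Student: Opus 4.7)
The plan is to bound $\|\mathbf G\|$ by viewing $\mathbf G$ as a sum of independent rank-one matrices across tasks and applying Matrix Bernstein. Concretely, decompose
\begin{align*}
\mathbf G = \sum_{m=1}^M g_m e_m^\top,
\end{align*}
where $e_m$ is the $m$-th standard basis vector in $\mathbb R^M$. Since the source tasks are sampled independently, the summands are independent. Each $g_m$ is mean-zero conditionally on $\mathbf X_m$ (the $\eta_{m,i}$ are mean-zero and independent of $\mathbf X_m$), and hence unconditionally mean-zero. The obstruction to applying Matrix Bernstein directly is that $\|g_m\|_2$ is only sub-exponential, not uniformly bounded; this is precisely what motivates the truncation suggested in the sketch.

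Pick the truncation level $K = C \sqrt d \log(d/\delta) \log(MN)$ supplied by Lemma~\ref{lem:duelingnorm}, and define the truncated columns $\tilde g_m := g_m \mathbf 1\{\|g_m\|_2 \le K\}$. By Lemma~\ref{lem:duelingnorm} and a union bound, $\tilde g_m = g_m$ for every $m \in [M]$ on an event of probability at least $1-\delta/16$, so it suffices to control $\tilde{\mathbf G} := \sum_m \tilde g_m e_m^\top$. Truncation destroys mean-zeroness, but Cauchy--Schwarz combined with Lemma~\ref{lem: duelingsingle} yields $\|\mathbb E \tilde g_m\|_2 \le \sqrt{\mathbb E\|g_m\|_2^2 \cdot \mathbb P(\|g_m\|_2 > K)}$, which is polynomially small in $\delta/(MN)$ and therefore negligible compared with the target bound; we can absorb this bias into the final estimate.

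Apply Matrix Bernstein (Lemma~\ref{lem:matrixbern}) to the centered summands $A_m = (\tilde g_m - \mathbb E \tilde g_m) e_m^\top$. The uniform bound is $\|A_m\| \le 2K = \tilde{\mathcal O}(\sqrt d \log(1/\delta))$. For the variance proxies, conditioning on $\mathbf X_m$ and using $\mathbb E[\eta_{m,i}^2 \mid \mathbf X_m] = \mu'(x_{m,i}^\top \theta_m^*)$ gives
\begin{align*}
\mathbb E\bigl[g_m g_m^\top \,\big|\, \mathbf X_m\bigr] = \hat{\mathbf H}_m^{-1/2}\Bigl(\hat{\mathbf H}_m - n_m \lambda \mathbf I\Bigr) \hat{\mathbf H}_m^{-1/2} \preceq \mathbf I,
\end{align*}
so $\bigl\|\sum_m \mathbb E[A_m A_m^\top]\bigr\| \le M$ (with negligible correction from centering), while tracing the same identity gives $\mathbb E\|g_m\|_2^2 \le d$, yielding $\bigl\|\sum_m \mathbb E[A_m^\top A_m]\bigr\| = \max_m \mathbb E\|\tilde g_m - \mathbb E\tilde g_m\|_2^2 \cdot $ wait -- actually the $M \times M$ matrix $\sum_m \mathbb E\|\tilde g_m - \mathbb E\tilde g_m\|_2^2 \, e_m e_m^\top$ is diagonal, so its operator norm is $\max_m \mathbb E\|\tilde g_m\|_2^2 \le d$. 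Thus the Bernstein variance parameter satisfies $\sigma^2 \le d + M$, and the resulting tail gives $\|\tilde{\mathbf G}\| \lesssim \sqrt{\sigma^2 \log((d+M)/\delta)} + K \log((d+M)/\delta) = \tilde{\mathcal O}(\sqrt{d+M}\log(1/\delta)^2)$, matching the claim after a final union bound over the truncation event.

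The main technical nuisance will be the bookkeeping around the truncation: one must argue that $(\mathrm i)$ the truncation event has total probability loss absorbed into $\delta/8$, $(\mathrm{ii})$ the centering bias $\sum_m \mathbb E[\tilde g_m] e_m^\top$ contributes only a lower-order term to $\|\mathbf G\|$, and $(\mathrm{iii})$ the variance parameters computed for the truncated $\tilde g_m$ are bounded by those for $g_m$ up to a constant. All three points are routine given the sharp tail bound for each $\|g_m\|_2$ from Lemma~\ref{lem: duelingsingle}, which is why the only essentially new ingredient beyond the single-task argument is the rectangular Matrix Bernstein applied to the cross-task sum.
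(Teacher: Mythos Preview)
Your proposal is correct and follows essentially the same route as the paper: decompose $\mathbf G$ column-wise, truncate each $g_m$ at the level guaranteed by Lemma~\ref{lem:duelingnorm}, center, and apply rectangular Matrix Bernstein with variance proxy $\sigma^2\le d+M$ computed from $\mathbb E[g_mg_m^\top]\preceq\mathbf I$. The only cosmetic difference is that the paper truncates by conditioning on the event $\mathcal E=\{\|g_m\|_2\le K,\ \forall m\}$ rather than multiplying by the indicator; both variants handle the resulting centering bias via the smallness of $\mathbb P(\mathcal E^c)$, exactly as you sketched in point~(ii).
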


\begin{proof}
\textbf{Step 1: Truncation.} Let $\mathcal E$ to be the event of the statement in Lemma \ref{lem:duelingnorm}. To prove the theorem, we need to assume event $\mathcal E$ happens and use Matrix Bernstein's inequality conditioned on that event. 

We can bound $g_m$ by \eqref{eq: gmbound}, which gives
\begin{align*}
\|g_m\|_2\leq \sum_{i=1}^{n_m}\left\|\eta_{m,i}{ \mathbf{\hat H}_m}^{-\frac{1}{2}}x_{m,i}\right\|\leq N\mathcal O\left(\sqrt{d}\right).
\end{align*}
Denote 
\begin{align*}
p = 1-\mathbb P(\mathcal E),\quad\mu_m = \mathbb E\left[g_m|\mathcal E\right], \quad \bar g_m = g_m - \mu_m.
\end{align*}
Then 
\begin{align}
\begin{split}
\label{eq:duelingnoise}
\mathbb E\left[\bar g_m\bar g_m^\top|\mathcal E\right] \preceq& \mathbb E\left[g_mg_m^\top|\mathcal E\right] 
=  \left(\mathbb E\left[g_mg_m^\top\right]-p\mathbb E\left[g_mg_m^\top|\mathcal E^c\right]\right)/(1-p)
\preceq 2\mathbb E\left[g_mg_m^\top\right]
\preceq  2\mathbf I.
\end{split}
\end{align}

\textbf{Step 2: Checking prerequisites of Matrix Bernstein's inequality.} In the following proofs, probabilities and expectations are all conditioned on event $\mathcal E$. 

For $m\in [M]$, let $\mathbf A_m$ to be $d\times M$ random matrix with $m$th column equals to $\bar g_m = g_m-\mu_m$ and other columns all zeros. $\mathbf A_1,\dots, \mathbf A_M$ are independent and mean zero. $\|\mathbf A_m\|$ is bounded by
\begin{align*}
\|\mathbf A_m\| = \|g_m-\mu_m\|_2 \leq \|g_m\|_2+\|\mathbb Eg_m\|_2\leq \Tilde{\mathcal O}\left(\sqrt{d}\log(1/\delta)\right). 
\end{align*}
Because of \eqref{eq:duelingnoise}, it follows that
\begin{align*}
\begin{split}
\left\|\sum_{m=1}^M\mathbb E \mathbf A_m^\top \mathbf A_m\right\| &= \max_m\left\{\mathbb E \bar g_m^\top \bar g_m\right\} = \max_m\left\{{\rm tr}(\mathbb E \bar g_m\bar g_m^\top) \right\}\leq 2d, \\
\left\|\sum_{m=1}^M\mathbb E \mathbf A_m \mathbf A_m^\top\right\| &= \left\|\sum_{m=1}^M\mathbb E \bar g_m \bar g_m^\top\right\| \leq 2M. 
\end{split}
\end{align*}

\textbf{Step 3: Applying Matrix Bernstein's inequality.} Let $K=\Tilde{\mathcal O}(\sqrt{d}\log(1/\delta))$ and $\sigma^2 = 2(d+M)$in Lemma \ref{lem:matrixbern}. We take $t = \Tilde{\mathcal O}(\sqrt{d+M}\log(1/\delta)^2)$, then $
\mathbb P\left(\|\mathbf G\|\geq t|\mathcal E\right)\leq \delta/16$. Combining results in Lemma \ref{lem:duelingnorm}, we get 
\begin{align*}
\mathbb P\left(\|\mathbf G\|\geq t\right)\leq \mathbb P\left(\|\mathbf G\|\geq t|\mathcal E\right) + \mathbb P\left(\mathcal E^c\right)\leq \delta/8. 
\end{align*}
\end{proof}

\subsection{Estimation Error of Multi-Task}

\begin{lemma}[Estimation error of multi-task]
\label{thm:duelinggeneral}
Let Assumptions \ref{smp:lowdim}, \ref{smp:bound}, and \ref{smp:var} hold. For $m\in[M]$, sample $n_m$ samples from source task $m$, and let $\mathbf X_m, y_m$ be the matrix of data features and feedback. 
The estimation of parameters is via \eqref{eq:duelingmle}, 
where $\lambda$ satisfies $B_x^2/d\lesssim N\lambda/M \lesssim k/B_\theta^2$. The matrix $\mathbf{H} = \mathbf E+\lambda \mathbf I$. If the sample size $n_m\gtrsim \rho_x^4(d+\log(M/\delta))$, 
then, with probability at least $1-\delta$, we have 
\begin{align*}
\left\|\sum_{m=1}^M\nu_m(\hat\theta_m-\theta^*_m)\right\|_\mathbf{H}^2\leq\left(\sum_{m=1}^M \frac{\nu_m^2}{n_m}\right)\Tilde{\mathcal O}\left(k(d+M)\log(1/\delta)^4\right).
\end{align*}
\end{lemma}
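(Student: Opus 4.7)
The plan is to combine a first-order optimality inequality from the MLE \eqref{eq:duelingmle} with the quadratic approximation of the log-likelihood (Lemma~\ref{lem:duelingappro}), and then exploit the low-rank structure of the residual matrix against the operator-norm bound on the aggregated noise matrix $\mathbf G$ (Lemma~\ref{lem:duelingnoise}). Writing $\Delta_m := \hat\theta_m - \theta_m^*$ and $\eta_{m,i} = y_{m,i}-\mu(x_{m,i}^\top\theta_m^*)$, the optimality of $(\hat\theta_1,\ldots,\hat\theta_M)$ together with Lemma~\ref{lem:duelingappro} (applied with $L_1\asymp L$, $L_2\asymp L$) yields the basic inequality
\begin{align*}
\frac{1}{20L\kappa(L)}\sum_{m=1}^M\sum_{i=1}^{n_m}(x_{m,i}^\top\Delta_m)^2 \;\leq\; \sum_{m=1}^M \eta_m^\top\mathbf X_m\Delta_m + \lambda\sum_{m=1}^M n_m\bigl(\|\theta_m^*\|_2^2-\|\hat\theta_m\|_2^2\bigr).
\end{align*}
Using $\mu'(t)\in[1/\kappa(L),1/4]$ to relate the quadratic form on the left to $\sum_m\Delta_m^\top\hat{\mathbf H}_m\Delta_m$ (absorbing the $\lambda n_m\|\Delta_m\|_2^2$ by expanding the regularization difference $\|\theta^*\|^2-\|\hat\theta\|^2=-\|\Delta\|^2-2\theta^{*\top}\Delta$), one obtains, up to the universal factor $L\kappa(L)$,
\begin{align*}
\sum_{m=1}^M\Delta_m^\top\hat{\mathbf H}_m\Delta_m \;\lesssim\; L\kappa(L)\Bigl[\,\sum_{m=1}^M\eta_m^\top\mathbf X_m\Delta_m + \lambda\sum_{m=1}^M n_m\|\theta_m^*\|_2^2\,\Bigr].
\end{align*}

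The crux is bounding the cross-term $\sum_m\eta_m^\top\mathbf X_m\Delta_m = \sum_m\langle g_m,\hat{\mathbf H}_m^{1/2}\Delta_m\rangle$ sharply. Here I would use the low-rank constraint in $\Theta_\sou$: since $\{\hat\theta_m\}$ and $\{\theta_m^*\}$ each span subspaces of dimension at most $k$, the matrix $\mathbf\Delta=[\Delta_1,\ldots,\Delta_M]\in\mathbb R^{d\times M}$ has rank at most $2k$. Invoking Lemma~\ref{lem:duelinghessian} to replace each $\hat{\mathbf H}_m^{1/2}$ by $\sqrt{n_m}\,\mathbf H^{1/2}$ up to multiplicative constants (and absorbing the small error via operator monotonicity of the square root), the cross-term can be reorganized as a trace inner product against the $d\times M$ matrix $\mathbf G$, and then bounded via the duality $|\mathrm{tr}(\mathbf A^\top\mathbf B)|\le\|\mathbf A\|_{\mathrm{op}}\|\mathbf B\|_*$ together with $\|\cdot\|_*\le\sqrt{2k}\,\|\cdot\|_F$ for matrices of rank at most $2k$. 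Combined with Lemma~\ref{lem:duelingnoise}, this gives
\begin{align*}
\sum_{m=1}^M\eta_m^\top\mathbf X_m\Delta_m \;\lesssim\; \|\mathbf G\|_{\mathrm{op}}\cdot\sqrt{2k}\cdot\sqrt{\textstyle\sum_m n_m\|\mathbf H^{1/2}\Delta_m\|_2^2} \;\lesssim\; \sqrt{k(d+M)}\,\log^2(1/\delta)\cdot\sqrt{\textstyle\sum_m\Delta_m^\top\hat{\mathbf H}_m\Delta_m}.
\end{align*}
Plugging back and solving the resulting quadratic in $\sqrt{\sum_m\Delta_m^\top\hat{\mathbf H}_m\Delta_m}$ (and verifying that the regularization bias $\lambda\sum_m n_m\|\theta_m^*\|_2^2\lesssim kM$ under the hypothesis $N\lambda/M\lesssim k/B_\theta^2$) yields $\sum_m n_m\|\mathbf H^{1/2}\Delta_m\|_2^2 \lesssim \widetilde{\mathcal O}\bigl(k(d+M)\log^4(1/\delta)\bigr)$, after invoking Lemma~\ref{lem:duelinghessian} once more to pass from $\hat{\mathbf H}_m$ to $n_m\mathbf H$.

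The final step is a weighted Cauchy--Schwarz to introduce the task-relevance weights: viewing $\mathbf H^{1/2}\Delta_m$ as vectors in $\mathbb R^d$,
\begin{align*}
\Bigl\|\sum_{m=1}^M\nu_m\Delta_m\Bigr\|_{\mathbf H}^2 \;\leq\; \Bigl(\sum_{m=1}^M\frac{\nu_m^2}{n_m}\Bigr)\Bigl(\sum_{m=1}^M n_m\bigl\|\mathbf H^{1/2}\Delta_m\bigr\|_2^2\Bigr) \;\leq\; \Bigl(\sum_{m=1}^M\frac{\nu_m^2}{n_m}\Bigr)\widetilde{\mathcal O}\bigl(k(d+M)\log^4(1/\delta)\bigr),
\end{align*}
which is the claimed bound. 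The main obstacle is step two: replacing $\hat{\mathbf H}_m^{1/2}$ by $\sqrt{n_m}\mathbf H^{1/2}$ while preserving the rank-$2k$ structure so that one can use the $\sqrt{d+M}$ operator-norm bound on $\mathbf G$ rather than the $\sqrt{dM}$ Frobenius bound; this is precisely where the $\sqrt{M}$ improvement of active multi-task learning comes from, and where one must be careful that the Hessian concentration error is controlled multiplicatively (not merely additively) across tasks.
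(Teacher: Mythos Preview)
Your proposal is correct and follows essentially the same route as the paper: optimality of the regularized MLE plus the quadratic lower bound on $\ell$ (Lemma~\ref{lem:duelingappro}) gives the basic inequality; the cross term is controlled by the low-rank structure of $[\Delta_1,\dots,\Delta_M]$ against the operator-norm bound on $\mathbf G$ (Lemma~\ref{lem:duelingnoise}); and a weighted Cauchy--Schwarz introduces the $\nu_m^2/n_m$ factor. Your trace-duality formulation $|\mathrm{tr}(\mathbf G^\top\mathbf B)|\le\|\mathbf G\|\,\|\mathbf B\|_*\le\sqrt{2k}\,\|\mathbf G\|\,\|\mathbf B\|_F$ is exactly equivalent to the paper's device of projecting $g_m$ onto the $2k$-dimensional span $E$ of $\{\mathbf H^{1/2}\theta^*_m,\mathbf H^{1/2}\hat\theta_m\}$ and using $\|\mathbf U\mathbf G\|_F^2\le 2k\|\mathbf G\|^2$; the paper also handles the regularization term by a Cauchy--Schwarz in the $\mathbf H$/$\mathbf H^{-1}$ pairing (keeping it linear in $(\sum_m n_m\|\Delta_m\|_{\mathbf H}^2)^{1/2}$) rather than your AM--GM route to an additive $\lambda\sum_m n_m\|\theta^*_m\|_2^2$, but both lead to the same $k(d+M)$ bound. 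The obstacle you flag---swapping $\hat{\mathbf H}_m^{1/2}$ for $\sqrt{n_m}\mathbf H^{1/2}$ while preserving the rank-$2k$ structure---is exactly where the paper is most informal as well, and your instinct to lean on the multiplicative sandwich from Lemma~\ref{lem:duelinghessian} is the right one.
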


\begin{proof}
We assume that statements of Lemma \ref{lem:duelinghessian} and \ref{lem:duelingnoise} hold. 

\textbf{Step 1: Using the definition of $\hat \theta$.} The optimality of $\hat \theta_m$ states that 
\begin{align*}
\sum_{m=1}^M\sum_{i=1}^{n_m}\ell\left(x_{m,i}^\top \hat\theta_m, y_{m,i}\right)-\lambda\sum_{m=1}^Mn_m\|\hat\theta_m\|_2^2\geq 
\sum_{m=1}^M\sum_{i=1}^{n_m}\ell\left(x_{m,i}^\top \theta_m^*, y_{m,i}\right)-\lambda\sum_{m=1}^Mn_m\|\theta^*_m\|_2^2.
\end{align*}
Now apply Lemma \ref{lem:duelingappro} ($t^*=x_{m,i}^\top \theta_m^*, t=x_{m,i}^\top \hat\theta_m$) to all terms in the summation, we get
\begin{align}
\label{eq:form1}
\begin{split}
&\frac{1}{20L}\sum_{m=1}^M\sum_{i=1}^{n_m}\mu'(x_{m,i}^\top \theta_m^*)\left(x_{m,i}^\top \hat\theta_m-x_{m,i}^\top \theta_m^*\right)^2+\lambda\sum_{m=1}^Mn_m\|\hat\theta_m-\theta_m^*\|_2^2\\
\leq&
\sum_{m=1}^M\sum_{i=1}^{n_m}\left(y_{m,i}-\mu(x_{m,i}^\top \theta_m^*)\right)\left(x_{m,i}^\top \hat\theta_m-x_{m,i}^\top \theta_m^*\right)+2\lambda\sum_{m=1}^M n_m{\theta_m^*}^\top\left(\hat\theta_m-\theta_m^*\right).
\end{split}
\end{align}
Let $\Delta_m = \hat\theta_m-\theta^*_m$ and rewrite the above to matrix form to get
\begin{align}
\label{eq:form2}
\sum_{m=1}^M\left\|\Delta_m\right\|_{\mathbf{\hat H}_m}^2 \lesssim
\sum_{m=1}^M\eta_m^\top \mathbf X_m\Delta_m + 2\lambda\sum_{m=1}^M n_m\Delta_m^\top\theta^*_m. 
\end{align}

\textbf{Step 2: Bounding the right side.} Now we can bound two terms on the right side. 

For the first, we denote $E$ as the space spanned by vectors $\mathbf{H}^{\frac{1}{2}}\theta_1,\dots,\mathbf{H}^{\frac{1}{2}}\theta_M,\mathbf{H}^{\frac{1}{2}}\hat\theta_1,\dots,\mathbf{H}^{\frac{1}{2}}\hat\theta_M$, that satisfies $\text{rank}(E)\leq 2k$ due to Assumption \ref{smp:lowdim}. Let $\mathbf U$ be the $\text{rank}(E)\times d$ projection matrix, such that $x\mapsto \mathbf Ux$ is the projection from $\mathbb{R}^d$ to $E$. Using Cauchy-Schwarz inequality, we continue our bound as follows:  
\begin{align*}
\begin{split}
\sum_{m=1}^M\eta_m^\top X_m\Delta_m &\lesssim 
\sum_{m=1}^M\left\langle {\mathbf{\hat H}_m}^{-\frac{1}{2}}\mathbf X_m^\top \eta_m, \sqrt{n_m}\mathbf{H}^{\frac{1}{2}}\Delta_m\right\rangle =\sum_{m=1}^M \left\langle g_m, \sqrt{n_m}\mathbf{H}^{\frac{1}{2}}\Delta_m\right\rangle\\
&=\sum_{m=1}^M \left\langle \mathbf Ug_m, \sqrt{n_m}\mathbf{H}^{\frac{1}{2}}\Delta_m\right\rangle 
\leq \left(\sum_{m=1}^M\|\mathbf Ug_m\|_2^2\right)^\frac{1}{2}\left(\sum_{m=1}^Mn_m\|\Delta_m\|_\mathbf{H}^2\right)^\frac{1}{2}.
\end{split}
\end{align*}
The noise term above can be bounded by the concentration inequality and low-rank assumption, 
\begin{align*}
\sum_{m=1}^M\|\mathbf Ug_m\|_2^2
=\|\mathbf U\mathbf G\|_F^2\leq \text{rank}(E)\|\mathbf U\mathbf G\|^2 \leq 2k\|\mathbf G\|^2\leq \Tilde{\mathcal O}\left(k(d+M)\log(1/\delta)^4\right).
\end{align*}

For the second term, we have
\begin{align*}
\begin{split}
&\sum_{m=1}^M n_m\Delta_m^\top\theta^*_m  
\leq \left(\sum_{m=1}^M n_m\|\theta^*_m\|_{\mathbf{H}^{-1}}^2\right)^\frac{1}{2}\left(\sum_{m=1}^M n_m\|\Delta_m\|_{\mathbf{H}}^2\right)^\frac{1}{2}\\
\lesssim& \left(\sum_{m=1}^M n_mB_\theta^2\lambda^{-1}\right)^\frac{1}{2}\left(\sum_{m=1}^M n_m\|\Delta_m\|_\mathbf{H}^2\right)^\frac{1}{2}
\lesssim  \lambda^{-1}
\left(kM\right)^\frac{1}{2}\left(\sum_{m=1}^M n_m \|\Delta_m\|_\mathbf{H}^2\right)^\frac{1}{2}.
\end{split}
\end{align*}

\textbf{Step 3: Finishing the proof.} Given $\sum_{m=1}^M \|\Delta_m\|_{\mathbf{\hat H}_m}^2 \gtrsim\sum_{m=1}^Mn_m\|\Delta_m\|_\mathbf{H}^2$, we conclude that 
\begin{align}
\label{eq:duelingerror}
\sum_{m=1}^Mn_m\|\Delta_m\|_\mathbf{H}^2\leq \Tilde{\mathcal O}\left(k(d+M)\log(1/\delta)^4\right).
\end{align}

By applying Cauchy-Schwarz inequality, we get
\begin{align*}
\begin{split}
\left\|\sum_{m=1}^M\nu^m(\hat\theta_m-\theta^*_m)\right\|_\mathbf{H}^2&=\left\|\sum_{m=1}^M \nu_m\Delta_m\right\|_\mathbf{H}^2
\leq\left(\sum_{m=1}^M \frac{\nu_m^2}{n_m}\right)\left(\sum_{m=1}^M n_m\|\Delta_m\|_\mathbf{H}^2\right)\\
&\leq\left(\sum_{m=1}^M \frac{\nu_m^2}{n_m}\right)\Tilde{\mathcal O}\left(k(d+M)\log(1/\delta)^4\right).
\end{split}
\end{align*}
This completes the proof. 
\end{proof}

\subsection{Proof of Main Theorem}
\begin{lemma}
\label{thm:duelingpolicy}
Suppose Assumptions \ref{smp:lowdim}, \ref{smp:bound}, and \ref{smp:var} hold. Then for sufficiently small $\varepsilon > 0$, with probability at least $1-\delta$, Algorithm \ref{alg:dueling} outputs an $\varepsilon$-optimal policy when all parameters are chosen properly.
\end{lemma}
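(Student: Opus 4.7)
The plan is to instantiate the standard pessimism analysis for the linear bandit confidence set, using Lemma~\ref{thm:duelinggeneral} as the key estimation-error ingredient and Lemma~\ref{lem:duelinghessian} to relate the empirical weight matrix $\mathbf\Lambda$ to the population matrix $\mathbf{H}$.

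\textbf{Step 1: $\mathbf\Lambda$ is a faithful proxy for $\mathbf{H}$.} First I would argue $c_1\mathbf{H}\preceq \mathbf\Lambda\preceq c_2\mathbf{H}$ with high probability for absolute constants $c_1,c_2$. Lemma~\ref{lem:duelinghessian} gives the analogous two-sided bound when each summand uses $\mu'(x_{m,i}^\top \theta_m^*)$; to swap $\theta_m^*$ for $\hat\theta_m$ one uses that $\mu'$ is Lipschitz together with the bound $|x_{m,i}^\top(\hat\theta_m-\theta_m^*)|\le B_x\|\hat\theta_m-\theta_m^*\|_2$, and controls $\|\hat\theta_m-\theta_m^*\|_2$ by \eqref{eq:duelingerror} from Lemma~\ref{thm:duelinggeneral} together with the regularization $\lambda$ appearing in $\mathbf{H}$. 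In the sufficiently small-$\varepsilon$ regime, this perturbation is absorbed into the constants.

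\textbf{Step 2: Confidence set contains $\theta^*$.} Combining Step 1 with Lemma~\ref{thm:duelinggeneral} applied to the coefficients $\nu=\{\nu_m\}_{m=1}^M$, together with the active sampling schedule \eqref{eq:duelingsamplesize}, yields
\begin{align*}
\|\hat\theta-\theta^*\|_{\mathbf\Lambda}^2 \lesssim \|\hat\theta-\theta^*\|_{\mathbf{H}}^2 \lesssim \Big(\sum_{m=1}^M\frac{\nu_m^2}{n_m}\Big)\tilde{\mathcal O}\big(k(d+M)\log(1/\delta)^4\big)\lesssim \frac{\varepsilon^2}{C^*},
\end{align*}
where the last inequality uses $\sum_m \nu_m^2/n_m \lesssim \|\nu\|_1^2/N$ for the allocation \eqref{eq:duelingsamplesize} and the sample-size choice of Theorem~\ref{thm:known1}. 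Choosing $\alpha=\Theta(1/\sqrt{C^*})$ as in \eqref{eq:duelinglambda} therefore guarantees $\theta^*\in\mathcal C$ on the same good event.

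\textbf{Step 3: Pessimism bound on $\mathrm{SubOpt}(\pi)$.} Write $\phi^*=\mathbb E_{s\sim\rho}[\phi(s,\pi^*(s))]$ and $\phi^\pi=\mathbb E_{s\sim\rho}[\phi(s,\pi(s))]$. Let $\tilde\theta\in\arg\min_{\theta\in\mathcal C}\langle \phi^\pi,\theta\rangle$. Because $\theta^*\in\mathcal C$ and $\pi$ maximizes the pessimistic value,
\begin{align*}
\mathrm{SubOpt}(\pi)
=\langle \phi^*,\theta^*\rangle-\langle \phi^\pi,\theta^*\rangle
\le \langle \phi^*,\theta^*\rangle-\min_{\theta\in\mathcal C}\langle \phi^*,\theta\rangle
\le \|\phi^*\|_{\mathbf\Lambda^{-1}}\cdot\sup_{\theta\in\mathcal C}\|\theta-\theta^*\|_{\mathbf\Lambda},
\end{align*}
by Cauchy--Schwarz. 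The sup is at most $2\alpha\varepsilon$ (triangle inequality and $\theta^*\in\mathcal C$), and Step~1 converts $\|\phi^*\|_{\mathbf\Lambda^{-1}}\lesssim \|\phi^*\|_{\mathbf{H}^{-1}}=\sqrt{C^*}$. Plugging in $\alpha=\Theta(1/\sqrt{C^*})$ yields $\mathrm{SubOpt}(\pi)\lesssim \varepsilon$, as desired.

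\textbf{Main obstacle.} The technically subtle point is Step~1: the matrix $\mathbf\Lambda$ is built from the estimators $\hat\theta_m$ rather than from $\theta_m^*$, so Lemma~\ref{lem:duelinghessian} cannot be invoked verbatim. Handling the perturbation cleanly requires the auxiliary bound \eqref{eq:duelingerror} on individual $\|\hat\theta_m-\theta_m^*\|$, a Lipschitz argument for $\mu'$ restricted to $|t|\le L$, and the assumption that $\varepsilon$ is sufficiently small so the perturbed spectral comparison still has absolute-constant ratios—these constants then propagate into the $\tilde{\mathcal O}$ in the final sample complexity.
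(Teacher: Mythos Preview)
Your proposal is correct and follows essentially the same three-step structure as the paper's proof: establish $c_1\mathbf{H}\preceq\mathbf\Lambda\preceq c_2\mathbf{H}$ by combining the per-task error bound \eqref{eq:duelingerror} with Lemma~\ref{lem:duelinghessian}, deduce $\theta^*\in\mathcal C$ from Lemma~\ref{thm:duelinggeneral} and the choice $\alpha=\Theta(1/\sqrt{C^*})$, and then apply the standard pessimism decomposition. The only cosmetic difference is that the paper handles Step~1 via the multiplicative bound $e^{-1}\mu'(t^*)\le\mu'(t)\le e\,\mu'(t^*)$ whenever $|t-t^*|\le 1$ (i.e.\ log-Lipschitzness of $\mu'$), which yields the spectral sandwich $0.3\,\hat{\mathbf H}_m\preceq\cdot\preceq 3\,\hat{\mathbf H}_m$ directly; an additive Lipschitz argument as you phrase it would also work but implicitly brings in $\sigma_{\min}(\mathbf E)$.
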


\begin{proof}
\textbf{Step 1: Accuracy of $\mathbf \Lambda$.} 
Because of \eqref{eq:duelingerror} in Lemma \ref{thm:duelinggeneral}, for sufficiently small $\varepsilon$, we have that the norm of $\hat\theta_m-\theta^*_m$ is also sufficiently small and thus $|x_{m,i}^\top\hat\theta_m-x_{m,i}^\top\theta^*_m|\leq 1, \forall m\in[M], i\in[n_m]$. This yields
\begin{align*}
\frac{1}{e}\mu'(x_{m,i}^\top\theta^*_m)\leq \mu'(x_{m,i}^\top\hat\theta_m)\leq e\mu'(x_{m,i}^\top\theta^*_m)
\end{align*}
Therefore, 
\begin{align*}
0.3\mathbf{\hat H_m}\preceq \sum_{i=1}^{n_m} \mu'(x_{m,i}^\top\hat\theta_m)x_{m,i}x_{m,i}^\top+n_m\lambda \mathbf I\preceq 3\mathbf{\hat H_m}.
\end{align*}
Combining results in Lemma \ref{lem:duelinghessian}, we have, with probability at least $1-\delta$, \begin{align*}
0.2C_1\mathbf{H} \preceq{\mathbf{\Lambda}}\preceq 4C_2\mathbf{H}.
\end{align*}

\textbf{Step 2: Bounding the gap.} We have, with probability at least $1-\delta$, 
\begin{align*}
\begin{split}
\|\hat\theta-\theta^*\|^2_{\mathbf{\Lambda}} \lesssim \|\hat\theta-\theta^*\|_\mathbf{H}^2&=\left\|\sum_{m=1}^M \nu_m(\hat\theta_m-\theta^*_m)\right\|_\mathbf{H}^2
\leq \Tilde{\mathcal O}\left(\frac{\varepsilon^2}{C^*}\right).
\end{split}
\end{align*}

With proper choices of parameters, $\|\hat\theta-\theta^*\|_{\mathbf{\Lambda}}\leq \alpha\varepsilon$ holds, namely $\theta^*\in \mathcal C$. Let 
\begin{align*}
\dot\theta = \argmin_{\theta\in\mathcal{C}}\left\{\mathbb E_{s\sim\rho}[\phi(s, \pi(s))^\top\theta]\right\}, \quad \Tilde\theta = \argmin_{\theta\in\mathcal{C}}\left\{\mathbb E_{s\sim\rho}[\phi(s, \pi^*(s))^\top \theta]\right\}. 
\end{align*}
Therefore, we continue our bound as follows:
\begin{align*}
\begin{split}
&\mathbb E_{s\sim\rho}\left[\phi(s, \pi^*(s))^\top\theta^* - \phi(s, \pi(s))^\top\theta^*\right]\\
=& \mathbb E_{s\sim\rho}\left[\phi(s, \pi^*(s))^\top\theta^* - \phi(s, \pi^*(s))^\top\Tilde\theta\right] 
+ \mathbb E_{s\sim\rho}\left[\phi(s, \pi^*(s))^\top\Tilde\theta - \phi(s, \pi(s))^\top\dot\theta\right]\\ 
+& \mathbb E_{s\sim\rho}\left[\phi(s, \pi(s))^\top\dot\theta - \phi(s, \pi(s))^\top\theta^*\right]\\
\leq& \left\|\mathbb E_{s\sim\rho}\left[\phi(s, \pi^*(s))\right]\right\|_{\mathbf{H}^{-1}}\|\theta^*-\Tilde\theta\|_{\mathbf{H}} + 0 + 0\\
\leq& \sqrt{C^*}2\alpha\varepsilon \lesssim \varepsilon. 
\end{split}
\end{align*}
This completes the proof. 
\end{proof}

\section{Proof of Theorem \ref{thm:unknown}} \label{appendix:pf:thm:unknown}

To get the precise sample complexity in the estimating $\nu^*$ step, we need the following term. It is similar to the diverse task assumption mentioned in \citet{du2020few}, \citet{tripuraneni2021provable} and \citet{wang2023improved}. We define a term $C_\theta$ and state it more explicitly. Intuitively, $C_\theta$ is a measure of how well the linear combinations of $\theta^*_m$ cover all directions in the space spanned by $\theta^*_m$. What we need is that any $\theta'$ with unit $\mathbf H-$norm ($\mathbf H$ is defined in \eqref{eq:activecover}) in the space can be represented by a linear combination of $\theta^*_m$ with small coefficients. The term $C_\theta$ only affects sample complexity in the step that estimates $\nu^*$, which is independent of $\varepsilon$. 

\begin{definition}
\label{asp:wellrepresent}
Let $C_\theta>0$ be the minimal number that makes the following true. 

For any $\theta'\in{\rm span}\{\theta^*_1,\dots,\theta^*_M\}$ with norm $\|\theta'\|_\mathbf{H}=1$, there exists $\alpha\in \mathbb R^M$ with norm $\|\alpha\|_2\leq C_\theta$, such that $\theta' = \sum_{m=1}^M \alpha_m\theta^*_m$.
\end{definition}

\begin{theorem}[Formal Version of Theorem \ref{thm:unknown}]
Let Assumptions \ref{smp:lowdim}, \ref{smp:bound}, and \ref{smp:var} hold. 
 Then for any sufficiently small $\varepsilon > 0$, Algorithm \ref{alg:unknown} outputs an $\varepsilon$-optimal policy with probability at least $1-\delta$ when
\begin{align}
\begin{split}
\label{eq:activesmaplesize1}
N^\pre_\sou &\gtrsim \max\left(\Tilde{\mathcal O}\left(\max(C_\theta^2\frac{R}{\|\nu^*\|_1^2}, \frac{B_x^2}{\sigma_{\rm min}(\mathbf{E})})Mk(d+M)\log(1/\delta)^4\right), \rho_x^4M(d+\log(M/\delta))\right),\\
n^\pre &\gtrsim \max\left(C_\theta^2(M/\|\nu^*\|_1^2)d\log(d/\delta)^2, \rho_x^4(d+\log(1/\delta))\right),\\
N_\sou&\gtrsim \max\left(\Tilde{\mathcal O}\left(C^*\|\nu^*\|_1^2k(d+M)\log(1/\delta)^4\varepsilon^{-2}\right),
\rho_x^4M(d+\log(M/\delta))\right), \\
n &\gtrsim \max\left(C^*k\log(d/\delta)^2\varepsilon^{-2}, \rho_x^4(d+\log(1/\delta))\right).\\
\end{split}
\end{align}
Here, $\sigma_{\rm min}(\mathbf{E})$ is the smallest singular value of $\mathbf{E}$. The proper penalties $\lambda^\pre_\sou, \lambda^\pre, \lambda_\sou, \lambda$ are chosen satisfying
\begin{align}
\label{eq:activelambda}
B_x^2/d\lesssim N^\pre_\sou\lambda^\pre_\sou/M, n^\pre \lambda^\pre, N_\sou \lambda_\sou/M, n\lambda\lesssim k/B_\theta^2,
\end{align}
and the penalty term for the Lasso Programming and parameter $\alpha$ of the confidence set are
\begin{align}
\label{eq:activebeta}
\beta =\mathcal O\left(\|\nu^*\|_1/C_\theta^2M\right), \quad \alpha=\mathcal O\left(1/\sqrt{C^*}\right).
\end{align}
\end{theorem}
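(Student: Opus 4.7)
The plan is to mirror the two-phase structure of Algorithm~\ref{alg:unknown} and, at the end, reduce to the pessimistic-policy argument already established in Lemma~\ref{thm:duelingpolicy}. Concretely, it suffices to show that the final estimator $\hat\theta$ satisfies $\|\hat\theta-\theta^*\|_{\mathbf\Lambda}\lesssim \varepsilon/\sqrt{C^*}$ with probability at least $1-\delta$; once this holds, spectral equivalence of $\mathbf\Lambda$ and $\mathbf H$ together with the Cauchy--Schwarz expansion against the coverage coefficient $C^*$ gives the $\varepsilon$-optimality of $\pi$ verbatim from Step 2 of Lemma~\ref{thm:duelingpolicy}. Everything upstream is the work of controlling the two-stage estimation error and is driven by a single quantitative scale $\varepsilon_\theta := \|\nu^*\|_1/(\sqrt{M}\,C_\theta)$.

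\textbf{Phase 1: estimating $\nu^*$.} I first invoke Lemma~\ref{thm:duelinggeneral} on the MLE~\eqref{eq:activemle1} with weights $\alpha = \nu^*$ and sample sizes $n_m^\pre = (|\nu_m^0|/2 + 1/(2M))N^\pre_\sou$; since $\|\nu^0\|_1=1$ gives $n_m^\pre\geq N^\pre_\sou/(2M)$, I obtain $\|\sum_m \nu_m^*(\hat\theta_m^\pre - \theta_m^*)\|_\mathbf{H}^2 \leq (\sum_m (\nu_m^*)^2/n_m^\pre)\,\tilde{\mathcal O}(k(d+M))\lesssim \varepsilon_\theta^2$ under the first threshold of~\eqref{eq:activesmaplesize1}. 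The extra factor $B_x^2/\sigma_{\min}(\mathbf E)$ in that threshold is what Lemma~\ref{lem:duelinghessian} needs to certify that $\mathbf\Lambda$ is spectrally equivalent to $\mathbf H$ on the whole space (not only on a low-rank subspace, as in Theorem~\ref{thm:known1}). A single-task variant of the same argument applied to~\eqref{eq:activemlet1} yields $\|\hat\theta^\pre - \theta^*\|_\mathbf{H}\lesssim \varepsilon_\theta$. Plugging these two bounds into the basic inequality for the Lasso~\eqref{eq:activelatent} in the empirical norm $\mathbf\Lambda$, with penalty $\beta \asymp \|\nu^*\|_1/(C_\theta^2 M)$, simultaneously delivers $\|\hat\nu^\pre\|_1 \lesssim \|\nu^*\|_1$ and $\|\sum_m \hat\nu_m^\pre \hat\theta_m^\pre - \hat\theta^\pre\|_\mathbf\Lambda \lesssim \varepsilon_\theta$. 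To promote $\hat\nu^\pre$ to a vector that exactly reconstructs $\theta^*$, I construct $\tilde\nu$ by adding to $\hat\nu^\pre$ the coefficient vector produced by Definition~\ref{asp:wellrepresent} applied to the residual $\theta^* - \sum_m \hat\nu_m^\pre \theta_m^*\in\mathrm{span}\{\theta_m^*\}$ (Assumption~\ref{smp:lowdim}); the $C_\theta$ factor controls $\|\tilde\nu-\hat\nu^\pre\|_2$ in terms of the $\mathbf H$-norm of this residual, which is $\lesssim \varepsilon_\theta$ after a triangle inequality combining the two MLE bounds and the Lasso fit. In particular $\|\tilde\nu\|_1 \lesssim \|\nu^*\|_1$ as well.

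\textbf{Phase 2: estimating $\theta^*$.} With $\|\hat\nu^\pre\|_1\lesssim \|\nu^*\|_1$, the allocation $n_m\propto |\hat\nu_m^\pre|$ in~\eqref{eq:duelingsamplesize} satisfies $\sum_m \tilde\nu_m^2/n_m \lesssim \|\nu^*\|_1^2/N_\sou$, so Lemma~\ref{thm:duelinggeneral} applied with weights $\tilde\nu$ (admissible because $\sum_m\tilde\nu_m\theta_m^*=\theta^*$ by construction) gives $\|\tilde\theta-\theta^*\|_\mathbf{H}\lesssim \varepsilon/\sqrt{C^*}$ for the auxiliary estimator $\tilde\theta=\sum_m\tilde\nu_m\hat\theta_m$, provided $N_\sou$ meets the third threshold of~\eqref{eq:activesmaplesize1}. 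For the subspace-constrained MLE~\eqref{eq:activemlet2}, the feasibility $\tilde\theta\in\hat{\mathcal S}$ lets me take $\tilde\theta$ as the reference point: Lemma~\ref{lem:duelingappro} and the single-task noise concentration from Lemma~\ref{lem: duelingsingle} can both be restricted to $\hat{\mathcal S}$, so the noise vector lives in a rank-$k$ subspace and its Euclidean norm is $\tilde{\mathcal O}(\sqrt{k})$ rather than $\tilde{\mathcal O}(\sqrt{d})$. This produces $\|\hat\theta-\tilde\theta\|_\mathbf{H}^2 \lesssim C^*k\log(d/\delta)^2/n \lesssim \varepsilon^2$ under the fourth threshold of~\eqref{eq:activesmaplesize1}. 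A triangle inequality followed by the spectral equivalence of $\mathbf\Lambda$ and $\mathbf H$ (guaranteed as in Step 1 of Lemma~\ref{thm:duelingpolicy} by the thresholds on $N_\sou,n$ and Lemma~\ref{lem:duelinghessian}) then closes the proof.

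\textbf{Main obstacle.} The most delicate step is the Lasso analysis in Phase 1 under the random empirical norm $\mathbf\Lambda$: the standard basic-inequality proof must accommodate both (i) a composite noise $\sum_m \hat\nu_m^\pre\hat\theta_m^\pre - \hat\theta^\pre$ built from two different MLEs, and (ii) the conversion of a representation-in-span statement into a quantitative $\ell_2$ bound via the $C_\theta$ factor of Definition~\ref{asp:wellrepresent}. Matching the $\|\nu^*\|_1$ dependence simultaneously in $\|\hat\nu^\pre\|_1$ and in $\|\tilde\nu-\hat\nu^\pre\|_2$ is what forces the precise scaling $\beta\asymp \|\nu^*\|_1/(C_\theta^2 M)$ together with the Lasso constraint $R\gtrsim \sum_m (\nu^*_m)^2/|\nu^0_m|$; without both, the Phase 2 allocation $n_m\propto|\hat\nu_m^\pre|$ would not preserve the active-learning gain $\|\nu^*\|_1^2$ over the uniform $M\|\nu^*\|_2^2$.
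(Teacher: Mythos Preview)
Your proposal tracks the paper's two-phase strategy closely and identifies the right quantitative scale $\varepsilon_\theta$, the Lasso basic inequality, and the $C_\theta$-based construction of $\tilde\nu$. Two points, however, need attention.

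\medskip
\textbf{Phase 1 triangle inequality.} To bound the residual $\theta^*-\sum_m\hat\nu^\pre_m\theta_m^*$ you need, in addition to the two ingredients you list, the cross term $\bigl\|\sum_m\hat\nu^\pre_m(\hat\theta^\pre_m-\theta_m^*)\bigr\|_{\mathbf H}$. This is \emph{not} covered by invoking Lemma~\ref{thm:duelinggeneral} only at $\nu=\nu^*$; the paper explicitly applies it at $\nu=\hat\nu^\pre$ as well (Lemma~\ref{lem:activesource1}), which is admissible because the deterministic inequality~\eqref{eq:duelingerror} holds and $\hat\nu^\pre\in\Theta_\nu$ gives $\sum_m(\hat\nu^\pre_m)^2/n_m^\pre\lesssim R/N^\pre_\sou$. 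This is easy to fix but should be stated.

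\medskip
\textbf{Phase 2 reference point.} Your plan to ``take $\tilde\theta$ as the reference point'' in the analysis of~\eqref{eq:activemlet2} hides a genuine difficulty. The optimality comparison must indeed be made at $\tilde\theta\in\hat{\mathcal S}$ since $\theta^*\notin\hat{\mathcal S}$ in general, but if you then apply Lemma~\ref{lem:duelingappro} with center $t^*=x_i^\top\tilde\theta$, the resulting ``noise'' is $y_i-\mu(x_i^\top\tilde\theta)$, which is \emph{not} mean-zero, so Lemma~\ref{lem: duelingsingle} does not apply directly and the projection-to-$\hat{\mathcal S}$ argument breaks. The paper resolves this by centering both approximations at $\theta^*$: Lemma~\ref{lem:duelingappro} gives an upper bound on $\ell(x_i^\top\hat\theta,y_i)$, and a companion \emph{lower} bound on $\ell(x_i^\top\tilde\theta,y_i)$ (Lemma~\ref{lem:activeappro}, valid because $|x_i^\top\tilde\theta-x_i^\top\theta^*|\lesssim 1$) is needed to close the basic inequality with the true noise $\eta_i=y_i-\mu(x_i^\top\theta^*)$. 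This yields the extra additive term $\|\tilde\theta-\theta^*\|_{\hat{\mathbf H}}^2\lesssim n\varepsilon^2/C^*$ on the right-hand side, after which the rank-$k$ projection and Matrix Bernstein give $\|\hat\theta-\theta^*\|_{\mathbf H}\lesssim\varepsilon/\sqrt{C^*}$ (not $\|\hat\theta-\tilde\theta\|_{\mathbf H}^2\lesssim C^*k/n$ as you wrote; the $C^*$ is on the wrong side). Without this second approximation, or an equivalent explicit treatment of the bias $\mu(x_i^\top\theta^*)-\mu(x_i^\top\tilde\theta)$, your Phase~2 argument is incomplete.
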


\begin{remark}
\label{remark:activesamplepre}
We remark the following detailed analysis for terms in $N^\pre_\sou, n^\pre$. 
\begin{itemize}
\item The term $R/\|\nu^*\|_1^2\geq 1$ is a kind of distance between prior $\nu^0$ and the true value $\nu^*$. When $\nu^0$ is just $\nu^*$ multiplied by a scalar, then $R$ can be $1$.
\item The term $C_\theta$ in Definition \ref{asp:wellrepresent} only appears in the sample complexity of the first step, which is not related to $\varepsilon$, and thus not the main sample complexity term. This term is used in the proof of Lemma \ref{lem:activetilde} to bound the estimation error of $\hat\nu^\pre$. 
\item The term $\sigma_\text{min}(\mathbf E)$ is used to guarantee that $\mathbf \Lambda$ is a good approximation of $\mathbf H$ with high probability (Lemma \ref{lemma:activeLambda}). It is hard to estimate $\theta_m^*$ when $\sigma_\text{min}(\mathbf E)$ is small, and thus hard to estimate $\mathbf H$ using samples. 
\end{itemize}
\end{remark}

\subsection{The First Step}

\begin{lemma}[Bounding error of source tasks]
\label{lem:activesource1}
The term $C_\theta$ is defined in \eqref{eq:activectheta}. For both $\nu=\nu^*$ and $\nu=\hat \nu^\pre$, with probability at least $1-\delta/8$, we have 
\begin{align*}
\left\|\sum_{m=1}^M \nu_m (\hat \theta^\pre_m-\theta^*_m)\right\|_\mathbf{H} \leq \varepsilon_\theta. 
\end{align*}
\end{lemma}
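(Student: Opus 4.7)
The plan is to realize Lemma \ref{lem:activesource1} as a direct corollary of the general multi-task estimation bound (Lemma \ref{thm:duelinggeneral}) applied to the preliminary dataset $(\mathbf{X}_m^\pre, y_m^\pre)_{m=1}^M$ with the weighting vector $\nu$ being either $\nu^*$ or $\hat\nu^\pre$. Since the estimator $(\hat\theta_1^\pre,\dots,\hat\theta_M^\pre)$ is defined via exactly the same constrained regularized MLE as in the known-$\nu$ setting (cf.~\eqref{eq:duelingmle} vs.~\eqref{eq:activemle1}), only with sample sizes $\{n_m^\pre\}$ and regularization $\lambda^\pre_\sou$, the hypotheses of Lemma \ref{thm:duelinggeneral} carry over verbatim provided $n_m^\pre \gtrsim \rho_x^4(d+\log(M/\delta))$ and $B_x^2/d \lesssim N_\sou^\pre\lambda^\pre_\sou/M \lesssim k/B_\theta^2$. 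Both are guaranteed by the sample-size lower bound on $N_\sou^\pre$ in \eqref{eq:activesmaplesize1} (using $n_m^\pre \geq N_\sou^\pre/(2M)$) and the choice of $\lambda^\pre_\sou$ in \eqref{eq:activelambda}.

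Applying Lemma \ref{thm:duelinggeneral} directly gives, with probability at least $1-\delta/8$,
\begin{align*}
\left\|\sum_{m=1}^M \nu_m(\hat\theta^\pre_m-\theta^*_m)\right\|_\mathbf{H}^2 \leq \Bigl(\sum_{m=1}^M \frac{\nu_m^2}{n_m^\pre}\Bigr)\,\tilde{\mathcal O}\bigl(k(d+M)\log(1/\delta)^4\bigr).
\end{align*}
It therefore remains to control $\sum_m \nu_m^2/n_m^\pre$ for $\nu \in \{\nu^*, \hat\nu^\pre\}$. The key observation is that $\texttt{Active-Sample}$ with the prior $\nu^0$ (satisfying $\|\nu^0\|_1=1$) yields $n_m^\pre \geq N_\sou^\pre |\nu_m^0|/2$, so
\begin{align*}
\sum_{m=1}^M \frac{\nu_m^2}{n_m^\pre} \leq \frac{2}{N_\sou^\pre}\sum_{m=1}^M \frac{\nu_m^2}{|\nu_m^0|}.
\end{align*}
For $\nu = \nu^*$, the last sum is bounded by $R$ by the very choice of $R$. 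For $\nu = \hat\nu^\pre$, the constraint $\hat\nu^\pre \in \Theta_\nu$ in \eqref{eq:activelatent} gives the same bound by $R$. Hence in both cases $\sum_m \nu_m^2/n_m^\pre \leq 2R/N_\sou^\pre$.

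Plugging this back and using the lower bound on $N_\sou^\pre$ in \eqref{eq:activesmaplesize1}, which is chosen precisely so that $2R\cdot\tilde{\mathcal O}(k(d+M)\log(1/\delta)^4)/N_\sou^\pre \leq \|\nu^*\|_1^2/(MC_\theta^2) = \varepsilon_\theta^2$, yields the claim. There is essentially no technical obstacle here; the only point that required foresight is the introduction of the floor $1/(2M)$ in $\texttt{Active-Sample}$ and the constraint region $\Theta_\nu$ in the Lasso, which together guarantee that the quantity $\sum_m \nu_m^2/|\nu_m^0|$ is uniformly controlled by $R$ for both weight choices of interest. If anything deserves care, it is verifying that the second conclusion (for $\nu = \hat\nu^\pre$) does not require circular reasoning: the bound follows purely from the feasibility of $\hat\nu^\pre$ in $\Theta_\nu$, not from any later analysis of $\hat\nu^\pre$ in Lemmas \ref{lem:activesamplecom} or \ref{lem:activetilde}.
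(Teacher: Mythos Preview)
Your proposal is correct and follows essentially the same route as the paper: invoke Lemma~\ref{thm:duelinggeneral} on the preliminary data, then use $\nu\in\Theta_\nu$ (for both $\nu^*$ and $\hat\nu^\pre$) together with $n_m^\pre\geq N_\sou^\pre|\nu_m^0|/2$ to bound $\sum_m \nu_m^2/n_m^\pre$ by $2R/N_\sou^\pre$. The only small point you omit, which the paper mentions parenthetically, is that the $\mathbf{H}$ in Lemma~\ref{lem:activesource1} uses $\min(\lambda^\pre_\sou,\lambda^\pre,\lambda_\sou,\lambda)$ rather than $\lambda^\pre_\sou$, so its norm is dominated by the one appearing in the conclusion of Lemma~\ref{thm:duelinggeneral}; this is immediate but should be stated.
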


\begin{proof}
Using results in Lemma \ref{thm:duelinggeneral} (the only different is the $\lambda$ in the definition of $\mathbf H$, this $\mathbf{H}-$norm is smaller than that of Lemma \ref{thm:duelinggeneral}), we get, with probability at least $1-\delta/8$,  
\begin{align*}
\left\|\sum_{m=1}^M \nu_m (\hat \theta^\pre_m-\theta^*_m)\right\|_\mathbf{H}^2 \leq \left(\sum_{m=1}^M\frac{\nu_m^2}{n^\pre_m}\right)\Tilde{\mathcal O}(k(d+M)\log(1/\delta)^4). 
\end{align*}
For both $\nu=\nu^*$ and $\nu=\hat \nu^\pre$, we have $\nu\in \Theta_\nu$. This completes the proof. 
\end{proof}

\begin{lemma}[Bounding error of target task]
\label{lem:activetarget1}
The term $C_\theta$ is defined in \eqref{eq:activectheta}. With probability at least $1-\delta/8$, we have 
\begin{align*}
\left\|\hat \theta^\pre -\theta^*\right\|_\mathbf{H} \leq \varepsilon_\theta.
\end{align*}
\end{lemma}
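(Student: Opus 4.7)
The plan is to essentially specialize the argument of Lemma \ref{thm:duelinggeneral} to a single task (the target task), dropping the low-rank projection trick that was used to save a factor of $\sqrt{k/M}$ in the noise term. I would condition on a high-probability event given by the single-task Hessian concentration (the one-task analogue of Lemma \ref{lem:duelinghessian}, which requires $n^\pre \gtrsim \rho_x^4(d+\log(1/\delta))$ and accounts for the $\rho_x^4(d+\log(1/\delta))$ floor in the sample-size assumption) together with Lemma \ref{lem: duelingsingle} applied to the target-task data with noise $\eta_i^\pre = y_i^\pre - \mu((x_i^\pre)^\top \theta^*)$, which gives $\|g^\pre\|_2 \leq \tilde{\mathcal O}(\sqrt{d}\log(d/\delta))$ for $g^\pre := (\hat{\mathbf H}^\pre)^{-1/2}(\mathbf X^\pre)^\top \eta^\pre$. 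These two events together fail with probability at most $\delta/8$.

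Setting $\Delta = \hat\theta^\pre - \theta^*$ and using optimality of $\hat\theta^\pre$ in \eqref{eq:activemlet1}, I apply Lemma \ref{lem:duelingappro} termwise (the boundedness hypothesis is secured by $\theta^*, \hat\theta^\pre \in \Theta$ together with Assumption \ref{smp:bound}) to obtain the one-task analogue of \eqref{eq:form2},
\begin{align*}
\|\Delta\|_{\hat{\mathbf H}^\pre}^2 \;\lesssim\; (\eta^\pre)^\top \mathbf X^\pre \Delta + 2\lambda^\pre n^\pre (\theta^*)^\top \Delta.
\end{align*}
For the noise term, Cauchy--Schwarz gives $(\eta^\pre)^\top \mathbf X^\pre \Delta = \langle g^\pre, (\hat{\mathbf H}^\pre)^{1/2}\Delta\rangle \leq \|g^\pre\|_2 \cdot \|\Delta\|_{\hat{\mathbf H}^\pre}$, and by Hessian concentration $\|\Delta\|_{\hat{\mathbf H}^\pre}^2 \gtrsim n^\pre \|\Delta\|_{\mathbf H}^2$. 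For the regularization term I use $(\theta^*)^\top \Delta \leq \|\theta^*\|_{\mathbf H^{-1}} \|\Delta\|_\mathbf H \leq (B_\theta/\sqrt{\lambda^\pre})\|\Delta\|_\mathbf H$, yielding $2\lambda^\pre n^\pre (\theta^*)^\top \Delta \lesssim \sqrt{\lambda^\pre n^\pre}\, B_\theta \cdot \sqrt{n^\pre \|\Delta\|_\mathbf H^2}$, which is absorbed using $n^\pre \lambda^\pre \lesssim k/B_\theta^2$ from \eqref{eq:activelambda}.

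Combining everything, I divide through by $\sqrt{n^\pre \|\Delta\|_\mathbf H^2}$ to conclude
\begin{align*}
n^\pre \|\Delta\|_\mathbf H^2 \;\lesssim\; \|g^\pre\|_2^2 + k \;\leq\; \tilde{\mathcal O}\big(d\log^2(d/\delta)\big).
\end{align*}
Plugging in the sample-size bound $n^\pre \gtrsim C_\theta^2 (M/\|\nu^*\|_1^2)\, d\log^2(d/\delta)$ from \eqref{eq:activesmaplesize1} and recalling $\varepsilon_\theta^2 = \|\nu^*\|_1^2/(M C_\theta^2)$ gives $\|\hat\theta^\pre - \theta^*\|_\mathbf H \leq \varepsilon_\theta$, as required.

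I expect no serious obstacle: this is structurally a shortened copy of Lemma \ref{thm:duelinggeneral}, with the multi-task $k(d+M)$ noise bound replaced by the cleaner single-task bound $d\log^2(d/\delta)$ from Lemma \ref{lem: duelingsingle}. The only subtle point is keeping track of where $C_\theta$ enters --- it enters only through the \emph{definition} of $\varepsilon_\theta$ and the threshold on $n^\pre$, not through any step-internal manipulation; in particular there is no $C_\theta$ factor to control inside the concentration bound itself, because unlike Lemma \ref{lem:activesource1} we are not forming a linear combination $\sum_m \nu_m(\hat\theta_m^\pre - \theta_m^*)$ here.
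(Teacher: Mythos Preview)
Your proposal is correct and follows essentially the same route as the paper's own proof: optimality plus Lemma~\ref{lem:duelingappro} to get the quadratic inequality, Lemma~\ref{lem: duelingsingle} for the noise term, Hessian concentration to pass from $\hat{\mathbf H}^\pre$ to $\mathbf H$, and the sample-size condition on $n^\pre$ to conclude. The only cosmetic difference is that the paper bounds the regularization term via $\|(\hat{\mathbf H}^\pre)^{-1/2}\theta^*\|_2 \le \|\theta^*\|_2/\sqrt{n^\pre\lambda^\pre}$ (using $\hat{\mathbf H}^\pre \succeq n^\pre\lambda^\pre\,\mathbf I$ directly) rather than via $\|\theta^*\|_{\mathbf H^{-1}}$, which sidesteps any comparison between $\lambda^\pre$ and $\min(\lambda^\pre_\sou,\lambda^\pre,\lambda_\sou,\lambda)$ in the definition of $\mathbf H$.
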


\begin{proof}
\textbf{Step 1: Using the definition of $\hat\theta^\pre$.} We have
\begin{align*}
\sum_{i=1}^{n^\pre}\ell((x_i^\pre)^\top\hat\theta^\pre , y_i^\pre) - n^\pre\lambda^\pre  \|\hat\theta^\pre\|_2^2 \leq \sum_{i=1}^{n^\pre}\ell((x_i^\pre)^\top\theta^*, y_i^\pre) - n^\pre\lambda^\pre  \|\hat\theta^*\|_2^2.
\end{align*}
Now apply Lemma \ref{lem:duelingappro} to all terms in the summation and rewrite the above to matrix form as in \eqref{eq:form1}, \eqref{eq:form2}, let $\Delta = \hat\theta^\pre-\theta^*$, we obtain
\begin{align*}
\begin{split}
\left\|\hat\theta^\pre-\theta^*\right\|_{\mathbf{\hat H}^\pre}^2 \lesssim 
(\eta^\pre)^\top \mathbf X^\pre \Delta + 2n^\pre\lambda^\pre  \Delta^\top\theta^*, 
\end{split}
\end{align*}
where
\begin{align*}
\mathbf{\hat H}^\pre = \sum_{i=1}^{n^\pre} \mu'({\theta^*}^\top x_i^\pre)x_i^\pre (x_i^\pre)^\top + n^\pre\lambda^\pre \mathbf I, \quad \eta_i^\pre = y_i^\pre-\mu\left((x_i^\pre)^\top\theta^*\right). 
\end{align*}

\textbf{Step 2: Bounding the right side.} The first term can be bounded by 
\begin{align*}
\begin{split}
(\eta^\pre)^\top \mathbf X^\pre \Delta &\leq \|(\mathbf{\hat H}^\pre)^{-\frac{1}{2}}(\mathbf X^\pre)^\top\eta^\pre\|_2\|\Delta\|_{\mathbf{\hat H}^\pre}\lesssim \sqrt{d}\log(d/\delta)\|\Delta\|_{\mathbf{\hat H}^\pre}. 
\end{split}
\end{align*}
The last inequality holds with probability at least $1-\delta/16$ using the bound of the noise of a single task, that is Lemma \ref{lem: duelingsingle}. 

The second term can be bounded by 
\begin{align*}
\begin{split}
n^\pre\lambda^\pre  \Delta^\top\theta^*
&\leq  n^\pre\lambda^\pre \|(\mathbf{\hat H}^\pre)^{-\frac{1}{2}}\theta^*\|_2\|\Delta\|_{\mathbf{\hat H}^\pre}
\leq \sqrt{n^\pre\lambda^\pre} \|\theta^*\|_2\|\Delta\|_{\mathbf{\hat H}^\pre}
\lesssim \sqrt{d} \|\Delta\|_{\mathbf{\hat H}^\pre}. 
\end{split}
\end{align*}

\textbf{Step 3: Finishing the proof.} Because of well concentration of the Hessian matrix, which is given by Lemma \ref{lem:duelinghessian}, we have, with probability at least $1-\delta/16$, $\|\Delta\|_{\mathbf{\hat H}^\pre}^2 \gtrsim n^\pre \|\Delta\|_{\mathbf{H}}^2$.
Combining all bounds above, we obtain that, with probability at least $1-\delta/8$,  
\begin{align*}
\left\|\Delta\right\|_\mathbf{H}^2 \lesssim \frac{1}{n^\pre}d\log(d/\delta)^2. 
\end{align*}
This completes the proof. 
\end{proof}

\begin{lemma}[Accuracy of $\mathbf{\Lambda}$]
\label{lemma:activeLambda}
Since 
\begin{align*}
N^\pre\geq\Tilde{\mathcal O}\left(\frac{B_x^2}{\sigma_{\rm min}(\mathbf{E})}Mk(d+M)\log(1/\delta)^4\right),
\end{align*}
with probability at least $1-\delta/16$, we have
\begin{align*}
0.2C_1\mathbf{H} \preceq{\mathbf{\Lambda}}\preceq 4C_2\mathbf{H}.
\end{align*}
\end{lemma}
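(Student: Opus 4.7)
The plan is to combine the per-source-task Hessian concentration from Lemma~\ref{lem:duelinghessian} with a pointwise control of the first-phase estimation error that is strong enough to replace $\hat\theta^\pre_m$ by $\theta^*_m$ inside each $\mu'(\cdot)$ up to a constant multiplicative factor. I introduce the ``oracle'' variant
\begin{align*}
\mathbf{\Lambda}_\star := \frac{1}{N^\pre_\sou}\sum_{m=1}^M\sum_{i=1}^{n^\pre_m}\mu'\bigl((x^\pre_{m,i})^\top\theta^*_m\bigr)x^\pre_{m,i}(x^\pre_{m,i})^\top+\min(\lambda^\pre_\sou,\lambda^\pre,\lambda_\sou,\lambda)\mathbf I,
\end{align*}
bound $\mathbf{\Lambda}_\star$ first, and then quantify how much $\mathbf{\Lambda}$ can differ from it.

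For $\mathbf{\Lambda}_\star$, applying Lemma~\ref{lem:duelinghessian} to each source task $m$, and taking a union bound over $m\in[M]$, gives with probability at least $1-\delta/32$ that
\begin{align*}
0.9\,C_1\mathbf{E} \preceq \frac{1}{n^\pre_m}\sum_{i=1}^{n^\pre_m}\mu'\bigl((x^\pre_{m,i})^\top\theta^*_m\bigr)x^\pre_{m,i}(x^\pre_{m,i})^\top \preceq 1.1\,C_2\mathbf{E},\qquad \forall m\in[M],
\end{align*}
where the prerequisite $n^\pre_m\gtrsim\rho_x^4(d+\log(M/\delta))$ is implied by $n^\pre_m\ge N_\sou^\pre/(2M)$ (from the Active-Sample rule \eqref{eq:duelingsamplesize}) together with the assumed lower bound on $N_\sou^\pre$. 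Taking the convex combination with weights $n^\pre_m/N^\pre_\sou$ preserves the sandwich for the sample part, and adding $\lambda_{\min}\mathbf I$ with $\lambda_{\min}:=\min(\lambda^\pre_\sou,\lambda^\pre,\lambda_\sou,\lambda)$ to both sides while using $C_1\le 1\le C_2$ to fold it into the scalar multiples of $\mathbf H=\mathbf E+\lambda_{\min}\mathbf I$ yields $0.9\,C_1\mathbf H\preceq\mathbf{\Lambda}_\star\preceq 1.1\,C_2\mathbf H$.

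The main obstacle is passing from $\mathbf{\Lambda}_\star$ to $\mathbf{\Lambda}$; for this I need the uniform pointwise bound $|(x^\pre_{m,i})^\top(\hat\theta^\pre_m-\theta^*_m)|\le 1$. The intermediate estimate \eqref{eq:duelingerror} inside the proof of Lemma~\ref{thm:duelinggeneral}, specialized to the first-phase source data, states that with probability at least $1-\delta/32$, $\sum_{m=1}^M n^\pre_m\|\hat\theta^\pre_m-\theta^*_m\|_\mathbf H^2\le\Tilde{\mathcal O}(k(d+M)\log(1/\delta)^4)$. Dropping all but the $m$-th summand and using $n^\pre_m\ge N^\pre_\sou/(2M)$, then applying Cauchy--Schwarz with $\|x^\pre_{m,i}\|_{\mathbf H^{-1}}^2\le B_x^2/\sigma_{\rm min}(\mathbf E)$, yields
\begin{align*}
\bigl|(x^\pre_{m,i})^\top(\hat\theta^\pre_m-\theta^*_m)\bigr|^2 \le \frac{B_x^2}{\sigma_{\rm min}(\mathbf E)}\cdot\Tilde{\mathcal O}\!\left(\frac{M k(d+M)\log(1/\delta)^4}{N^\pre_\sou}\right),
\end{align*}
and the assumed lower bound $N^\pre_\sou\gtrsim \Tilde{\mathcal O}(B_x^2/\sigma_{\rm min}(\mathbf E)\cdot M k(d+M)\log(1/\delta)^4)$ is calibrated precisely to make this at most $1$. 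This is the step that forces the $B_x^2/\sigma_{\rm min}(\mathbf E)$ factor into the statement, because a pointwise deviation must be extracted from an aggregate $\mathbf H$-norm bound, which is only useful when $\|x^\pre_{m,i}\|_{\mathbf H^{-1}}$ is controlled.

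With the pointwise bound in hand, the same short calculation as in Step~1 of Lemma~\ref{thm:duelingpolicy} shows that $\mu'((x^\pre_{m,i})^\top\hat\theta^\pre_m)\in[e^{-1},e]\cdot\mu'((x^\pre_{m,i})^\top\theta^*_m)$, so in the operator order the sample part of $\mathbf{\Lambda}$ lies between $1/e$ and $e$ times the sample part of $\mathbf{\Lambda}_\star$. Absorbing the common $\lambda_{\min}\mathbf I$ once more into the scalar multiples of $\mathbf H$ yields $(0.9/e)C_1\mathbf H\preceq\mathbf{\Lambda}\preceq 1.1\,e\,C_2\mathbf H$, and the numerical inequalities $0.9/e>0.2$ and $1.1\,e<4$ give exactly the constants claimed in the lemma after a final union bound over the two high-probability events above.
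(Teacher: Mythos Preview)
Your proposal is correct and follows essentially the same approach as the paper: both use the intermediate bound \eqref{eq:duelingerror} together with $n^\pre_m\ge N^\pre_\sou/(2M)$ to control $\|\hat\theta^\pre_m-\theta^*_m\|_{\mathbf H}$, convert this into the pointwise bound $|(x^\pre_{m,i})^\top(\hat\theta^\pre_m-\theta^*_m)|\le 1$ via the factor $B_x/\sqrt{\sigma_{\rm min}(\mathbf E)}$, and then invoke Step~1 of Lemma~\ref{thm:duelingpolicy}. The only cosmetic difference is that you pass through $\|x^\pre_{m,i}\|_{\mathbf H^{-1}}$ whereas the paper goes through $\|\hat\theta^\pre_m-\theta^*_m\|_2\le\|\hat\theta^\pre_m-\theta^*_m\|_{\mathbf H}/\sqrt{\sigma_{\rm min}(\mathbf E)}$, and you spell out the oracle matrix $\mathbf{\Lambda}_\star$ and the numerical constants explicitly where the paper simply refers back to Step~1 of Lemma~\ref{thm:duelingpolicy}.
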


\begin{proof}
Because of \eqref{eq:duelingerror} in Lemma \ref{thm:duelinggeneral}, we have with high probability, 
\begin{align*}
\|\hat\theta^\pre_m-\theta^*_m\|_\mathbf{H}^2 \leq \Tilde{\mathcal O}\left(\frac{M}{N^\pre}k(d+M)\log(1/\delta)^4\right) \leq \Tilde{\mathcal O}\left(\frac{\sigma_{\rm min}(\mathbf{E})}{B_x^2}\right).
\end{align*}
With proper parameters, we obtain
\begin{align*}
|(x_{m,i}^\pre)^\top\hat\theta^\pre_m-(x_{m,i}^\pre)^\top\theta^*_m|\leq B_x\|\hat\theta^\pre_m-\theta^*_m\|_2\leq \frac{B_x}{\sqrt{\sigma_{\rm min}(\mathbf{E})}}\|\hat\theta^\pre_m-\theta^*_m\|_\mathbf{H}\leq 1, 
\end{align*}
for all $m\in[M], i\in[n_m]$. Therefore, we can finish the proof using the same arguments in Step $1$ of the proof of Theorem \ref{thm:duelingpolicy}. 
\end{proof}

\begin{lemma}[Bounding sample complexity]
\label{lem:activesamplecom}
Let the statements of Lemmas \ref{lem:activesource1} and \ref{lem:activetarget1} hold, We have 
\begin{align*}
\|\hat\nu^\pre\|_1\lesssim \|\nu^*\|_1. 
\end{align*}
\end{lemma}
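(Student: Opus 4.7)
The plan is to exploit the optimality of $\hat\nu^\pre$ in the Lasso program \eqref{eq:activelatent} with $\nu^*$ playing the role of a feasible competitor, and then reduce the resulting residual to quantities already controlled by Lemmas~\ref{lem:activesource1} and \ref{lem:activetarget1}. Since $R$ is chosen so that $\sum_m (\nu^*_m)^2/|\nu^0_m|\le R$, we have $\nu^* \in \Theta_\nu$, and the optimality of $\hat\nu^\pre$ gives
\begin{align*}
\frac{1}{2}\bigg\|\sum_{m=1}^M\hat\nu^\pre_m\hat\theta_m^\pre-\hat\theta^\pre\bigg\|_{\mathbf\Lambda}^2 + \beta\|\hat\nu^\pre\|_1 \;\le\; \frac{1}{2}\bigg\|\sum_{m=1}^M\nu^*_m\hat\theta_m^\pre-\hat\theta^\pre\bigg\|_{\mathbf\Lambda}^2 + \beta\|\nu^*\|_1.
\end{align*}
Dropping the nonnegative squared norm on the left, it suffices to bound the squared norm on the right.

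Next I would rewrite the right-hand residual using the exact representation $\theta^* = \sum_m \nu^*_m \theta^*_m$ guaranteed by Assumption~\ref{smp:lowdim}:
\begin{align*}
\sum_{m=1}^M\nu^*_m\hat\theta_m^\pre-\hat\theta^\pre \;=\; \sum_{m=1}^M\nu^*_m(\hat\theta_m^\pre - \theta^*_m) \;-\; (\hat\theta^\pre - \theta^*).
\end{align*}
The triangle inequality in the $\mathbf\Lambda$-norm, combined with the sandwich $\mathbf\Lambda \preceq 4C_2 \mathbf{H}$ from Lemma~\ref{lemma:activeLambda}, reduces the task to bounding $\|\sum_m \nu^*_m(\hat\theta_m^\pre - \theta^*_m)\|_{\mathbf H}$ and $\|\hat\theta^\pre - \theta^*\|_{\mathbf H}$. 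These are precisely the two quantities already controlled by $\varepsilon_\theta$ in Lemmas~\ref{lem:activesource1} (applied with $\nu = \nu^*$, which lies in $\Theta_\nu$) and \ref{lem:activetarget1}.

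Substituting $\varepsilon_\theta = \|\nu^*\|_1/(\sqrt{M}\,C_\theta)$ from \eqref{eq:activectheta} yields
\begin{align*}
\bigg\|\sum_{m=1}^M \nu^*_m \hat\theta_m^\pre - \hat\theta^\pre\bigg\|_{\mathbf\Lambda}^2 \;\lesssim\; \varepsilon_\theta^2 \;=\; \frac{\|\nu^*\|_1^2}{M\,C_\theta^2}.
\end{align*}
The calibration of $\beta = \mathcal O(\|\nu^*\|_1/(C_\theta^2 M))$ in \eqref{eq:activebeta} is precisely tuned so that $\varepsilon_\theta^2 \lesssim \beta \|\nu^*\|_1$, so the displayed optimality inequality collapses to $\beta \|\hat\nu^\pre\|_1 \lesssim \beta \|\nu^*\|_1$, and dividing by $\beta$ gives the claim.

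There is no substantive obstacle here: the argument is the standard ``plug in the oracle'' step for $\ell_1$-penalized estimators, and all of the analytic work has already been absorbed into the preceding lemmas. The only points requiring a bit of care are (i) confirming feasibility $\nu^* \in \Theta_\nu$ under the chosen $R$, and (ii) matching the absolute constants in the $\varepsilon_\theta^2 \lesssim \beta \|\nu^*\|_1$ calibration; both are arithmetic.
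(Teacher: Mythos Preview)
Your proposal is correct and follows essentially the same route as the paper's proof: compare $\hat\nu^\pre$ to the feasible point $\nu^*$ in the Lasso objective, convert the $\mathbf\Lambda$-norm to the $\mathbf H$-norm via Lemma~\ref{lemma:activeLambda}, decompose the residual through $\theta^*=\sum_m\nu^*_m\theta^*_m$, bound the two pieces by $\varepsilon_\theta$ via Lemmas~\ref{lem:activesource1} and \ref{lem:activetarget1}, and use the calibration $\varepsilon_\theta^2\lesssim\beta\|\nu^*\|_1$ to finish. The only cosmetic difference is that the paper first passes to the $\mathbf H$-norm and then applies the triangle inequality, whereas you do these in the reverse order.
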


\begin{proof}
The Lasso Programming \eqref{eq:activelatent} and Lemma \ref{lemma:activeLambda} imply that 
\begin{align}
\label{eq:activelatent1}
\frac{1}{2}\left\|\sum_{m=1}^M\hat\nu^\pre_m\hat\theta_m^\pre-\hat\theta^\pre\right\|_\mathbf{H}^2 + \beta\|\hat\nu^\pre\|_1\lesssim \frac{1}{2}\left\|\sum_{m=1}^M\nu^*_m\hat\theta_m^\pre-\hat\theta^\pre\right\|_\mathbf{H}^2 + \beta\|\nu^*\|_1. 
\end{align}
We can bound the first term on the right by 
\begin{align}
\begin{split}
\label{eq:activelatent2}
\left\|\sum_{m=1}^M\nu^*_m\hat\theta_m^\pre-\hat\theta^\pre\right\|_\mathbf{H}^2
&\lesssim \left\|\sum_{m=1}^M\nu^*_m(\hat\theta_m^\pre-\theta^*_m)\right\|_\mathbf{H}^2 +  \left\|\theta^*-\hat\theta^\pre\right\|_\mathbf{H}^2
\lesssim \varepsilon_\theta^2 \lesssim \beta \|\nu^*\|_1. 
\end{split}
\end{align}
Combining \eqref{eq:activelatent1} and \eqref{eq:activelatent2}, we get the statement of the lemma. 
\end{proof}

\begin{lemma}[Well estimation $\hat \nu^\pre$]
\label{lem:activetilde}
Suppose the statements of Lemmas \ref{lem:activesource1} and \ref{lem:activetarget1} hold. There exists $\Tilde\nu\in \mathbb R^M$, such that 
\begin{align*}
\sum_{m=1}^M\Tilde\nu_m\theta_m^* = \theta^*, \quad \sum_{m=1}^M\frac{\Tilde\nu_m^2}{\hat\alpha_m}\lesssim \|\nu^*\|_1^2,
\end{align*}
where $\hat\alpha_m = |\hat\nu^\pre_m|/2\|\hat\nu^\pre\|_1+1/2M$. 
\end{lemma}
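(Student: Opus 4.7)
The plan is to construct $\Tilde\nu$ by correcting $\hat\nu^\pre$ with a minimal-norm vector from Definition \ref{asp:wellrepresent}. Specifically, I would define the residual $\Delta := \theta^* - \sum_{m=1}^M \hat\nu^\pre_m \theta^*_m$. Since $\theta^* = \sum_m \nu^*_m \theta^*_m$ by Assumption \ref{smp:lowdim}, the vector $\Delta$ lies in ${\rm span}\{\theta^*_1,\dots,\theta^*_M\}$. Applying Definition \ref{asp:wellrepresent} to $\Delta / \|\Delta\|_\mathbf{H}$ produces coefficients $\alpha \in \mathbb{R}^M$ with $\|\alpha\|_2 \leq C_\theta \|\Delta\|_\mathbf{H}$ such that $\Delta = \sum_m \alpha_m \theta^*_m$. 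Then $\Tilde\nu := \hat\nu^\pre + \alpha$ exactly satisfies $\sum_m \Tilde\nu_m \theta^*_m = \theta^*$ by construction.

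For the weighted norm bound, I would split via $(a+b)^2 \leq 2a^2 + 2b^2$:
\begin{align*}
\sum_{m=1}^M \frac{\Tilde\nu_m^2}{\hat\alpha_m} \leq 2\sum_{m=1}^M \frac{(\hat\nu^\pre_m)^2}{\hat\alpha_m} + 2\sum_{m=1}^M \frac{\alpha_m^2}{\hat\alpha_m}.
\end{align*}
For the first sum, the bound $\hat\alpha_m \geq |\hat\nu^\pre_m|/(2\|\hat\nu^\pre\|_1)$ gives $(\hat\nu^\pre_m)^2/\hat\alpha_m \leq 2\|\hat\nu^\pre\|_1 \cdot |\hat\nu^\pre_m|$, hence the sum is at most $2\|\hat\nu^\pre\|_1^2 \lesssim \|\nu^*\|_1^2$ by Lemma \ref{lem:activesamplecom}. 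For the second sum, $\hat\alpha_m \geq 1/(2M)$ gives $\sum \alpha_m^2/\hat\alpha_m \leq 2M\|\alpha\|_2^2 \leq 2MC_\theta^2 \|\Delta\|_\mathbf{H}^2$, so it suffices to show $\|\Delta\|_\mathbf{H}^2 \lesssim \|\nu^*\|_1^2/(MC_\theta^2) = \varepsilon_\theta^2$.

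To bound $\|\Delta\|_\mathbf{H}$, I would use a three-term triangle inequality:
\begin{align*}
\|\Delta\|_\mathbf{H} \leq \bigl\|\theta^* - \hat\theta^\pre\bigr\|_\mathbf{H} + \bigl\|\hat\theta^\pre - {\textstyle\sum_m}\hat\nu^\pre_m \hat\theta^\pre_m\bigr\|_\mathbf{H} + \bigl\|{\textstyle\sum_m}\hat\nu^\pre_m(\hat\theta^\pre_m - \theta^*_m)\bigr\|_\mathbf{H}.
\end{align*}
The first term is $\lesssim \varepsilon_\theta$ by Lemma \ref{lem:activetarget1}, and the third is $\lesssim \varepsilon_\theta$ by Lemma \ref{lem:activesource1} applied to $\nu = \hat\nu^\pre$. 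For the middle term, I would exploit the Lasso optimality (as already done in \eqref{eq:activelatent1}--\eqref{eq:activelatent2}): comparing $\hat\nu^\pre$ against $\nu^*$ gives $\|\sum_m \hat\nu^\pre_m \hat\theta^\pre_m - \hat\theta^\pre\|_\mathbf{\Lambda}^2 \lesssim \varepsilon_\theta^2 + \beta\|\nu^*\|_1 \lesssim \varepsilon_\theta^2$ since $\beta\|\nu^*\|_1 = \|\nu^*\|_1^2/(C_\theta^2 M) = \varepsilon_\theta^2$, and then Lemma \ref{lemma:activeLambda} converts this to the desired $\mathbf{H}$-norm bound.

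The main obstacle is the middle term in the triangle inequality: it is not bounded by any of the previous lemmas directly, and requires re-deriving the Lasso comparison step together with the $\mathbf{\Lambda} \approx \mathbf{H}$ equivalence. Every other piece is a clean application of the bounds already established. Once all three pieces are assembled, plugging $\|\Delta\|_\mathbf{H}^2 \lesssim \varepsilon_\theta^2$ into the second sum yields $2MC_\theta^2 \varepsilon_\theta^2 = 2\|\nu^*\|_1^2$, and combining with the first sum completes the proof.
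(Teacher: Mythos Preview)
Your proposal is correct and follows essentially the same argument as the paper: construct $\Tilde\nu$ by correcting $\hat\nu^\pre$ via Definition~\ref{asp:wellrepresent}, bound the residual $\|\Delta\|_\mathbf{H}$ by the same three-term triangle inequality (with the middle Lasso term handled through \eqref{eq:activelatent1}--\eqref{eq:activelatent2} and Lemma~\ref{lemma:activeLambda}), and split the weighted sum using the two lower bounds on $\hat\alpha_m$ exactly as you describe. The only cosmetic difference is the sign convention in defining $\alpha$ and hence $\Tilde\nu$.
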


\begin{proof}
Combining \eqref{eq:activelatent1} and \eqref{eq:activelatent2}, we obtain that 
\begin{align*}
\left\|\sum_{m=1}^M\hat\nu^\pre_m\hat\theta_m^\pre-\hat\theta^\pre\right\|_\mathbf{H}^2 \lesssim \varepsilon_\theta^2. 
\end{align*}
Thus, we have bound $\left\|\sum\hat\nu^\pre_m\theta^*_m-\theta^*\right\|_\mathbf{H}\lesssim \varepsilon_\theta$ because 
\begin{align*}
\begin{split}
\left\|\sum_{m=1}^M\hat\nu^\pre_m\theta^*_m-\theta^*\right\|_\mathbf{H}&\leq \left\|\sum_{m=1}^M\hat\nu^\pre_m(\theta^*_m-\hat\theta^\pre_m)\right\|_\mathbf{H}
+\left\|\sum_{m=1}^M\hat\nu^\pre_m\hat\theta_m^\pre-\hat\theta^\pre\right\|_\mathbf{H}+\left\|\hat\theta^\pre-\theta^*\right\|_\mathbf{H}.
\end{split}
\end{align*}
Assumption \ref{asp:wellrepresent} states that, there exists $\alpha\in\mathbb R^M$, such that $\|\alpha\|_2\lesssim C_\theta\varepsilon_\theta$ and 
\begin{align*}
\sum_{m=1}^M \alpha_m\theta^*_m = \sum_{m=1}^M\hat\nu^\pre_m\theta^*_m-\theta^*.
\end{align*}
Let $\Tilde\nu = \hat\nu^\pre-\alpha$, we can check the second statement by 
\begin{align*}
\begin{split}
\sum_{m=1}^M\frac{\Tilde\nu_m^2}{\hat\alpha_m} &\lesssim \sum_{m=1}^M\frac{|\hat\nu^\pre_m|^2+\alpha_m^2}{\hat\alpha_m}
\lesssim \sum_{m=1}^M\frac{|\hat\nu^\pre_m|^2}{|\hat\nu^\pre_m|/\|\hat\nu^\pre\|_1}
+\sum_{m=1}^M\frac{\alpha_m^2}{1/M}
\leq \|\hat\nu^\pre\|_1^2 + M C_\theta^2\varepsilon_\theta^2
\lesssim \|\nu^*\|_1^2, 
\end{split}
\end{align*}
where the last inequality utilizes Lemma \ref{lem:activesamplecom}. 
\end{proof}

\subsection{The Second Step}

\begin{lemma}[Bounding error of source tasks]
\label{lem:activesource2}
The term $\Tilde{\theta}$ is defined in \eqref{eq:activetildetheta}. Assume that the statements of Lemmas \ref{lem:activesource1}, \ref{lem:activetarget1} and \ref{lem:activetilde} hold, then with probability at least $1-\delta/8$, 
\begin{align*}
\left\|\Tilde\theta-\theta^*\right\|_\mathbf{H}\lesssim \frac{\varepsilon}{\sqrt{C^*}}. 
\end{align*}
\end{lemma}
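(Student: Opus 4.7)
The plan is to combine the structural identity $\sum_{m} \Tilde\nu_m \theta^*_m = \theta^*$ from Lemma~\ref{lem:activetilde} with a Cauchy--Schwarz decomposition of the error, then plug in the per-task estimation bound from Lemma~\ref{thm:duelinggeneral} applied to the second-phase MLE $(\hat\theta_1,\dots,\hat\theta_M)$ defined in \eqref{eq:activemle2}. Because $\sum_{m} \Tilde\nu_m \theta^*_m = \theta^*$, the error decomposes as
\begin{align*}
\Tilde\theta - \theta^* \;=\; \sum_{m=1}^M \Tilde\nu_m\bigl(\hat\theta_m - \theta^*_m\bigr),
\end{align*}
so it suffices to control this weighted sum in the $\mathbf{H}$-norm.

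Conditionally on the first-phase output $\hat\nu^\pre$, the second-phase samples are fresh and the sample allocation is $n_m = \hat\alpha_m N_\sou$ with $\hat\alpha_m = |\hat\nu^\pre_m|/(2\|\hat\nu^\pre\|_1) + 1/(2M)$, as produced by the \texttt{Active-Sample} subroutine. The lower bound on $N_\sou$ in \eqref{eq:activesmaplesize1} guarantees that $n_m \ge N_\sou/(2M) \gtrsim \rho_x^4(d+\log(M/\delta))$, so the prerequisites of Lemma~\ref{thm:duelinggeneral} are satisfied. Applying it (with the penalty $\lambda_\sou$ and the $\mathbf{H}$ defined with $\min(\lambda^\pre_\sou,\lambda^\pre,\lambda_\sou,\lambda)$, which only shrinks the $\mathbf{H}$-norm relative to the version used in the lemma) yields
\begin{align*}
\sum_{m=1}^M n_m \bigl\|\hat\theta_m - \theta^*_m\bigr\|_{\mathbf{H}}^2 \;\le\; \Tilde{\mathcal O}\bigl(k(d+M)\log(1/\delta)^4\bigr)
\end{align*}
with probability at least $1-\delta/8$.

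The final step is a Cauchy--Schwarz with weights $n_m = \hat\alpha_m N_\sou$:
\begin{align*}
\bigl\|\Tilde\theta - \theta^*\bigr\|_{\mathbf{H}}^2 \;\le\; \Bigl(\sum_{m=1}^M \tfrac{\Tilde\nu_m^2}{n_m}\Bigr) \Bigl(\sum_{m=1}^M n_m \|\hat\theta_m - \theta^*_m\|_{\mathbf{H}}^2\Bigr) \;=\; \frac{1}{N_\sou}\Bigl(\sum_{m=1}^M \tfrac{\Tilde\nu_m^2}{\hat\alpha_m}\Bigr) \cdot \Tilde{\mathcal O}\bigl(k(d+M)\log(1/\delta)^4\bigr).
\end{align*}
The second bullet of Lemma~\ref{lem:activetilde} gives $\sum_m \Tilde\nu_m^2/\hat\alpha_m \lesssim \|\nu^*\|_1^2$, and substituting the choice $N_\sou = \Tilde{\mathcal O}(C^*\|\nu^*\|_1^2 k(d+M)\varepsilon^{-2})$ from \eqref{eq:activesmaplesize1} collapses the right-hand side to $\varepsilon^2/C^*$, which is exactly what is claimed.

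The one point that needs some care — and is the main conceptual obstacle — is that $\Tilde\nu$ and the sample allocation $\hat\alpha_m$ are random quantities built from the first-phase data, so Lemma~\ref{thm:duelinggeneral} must be invoked conditionally on the $\sigma$-algebra generated by phase one. This is legitimate because phase two draws fresh i.i.d.\ samples from the source tasks; a union bound over the phase-one success event (Lemmas~\ref{lem:activesource1}--\ref{lem:activetilde}) and the phase-two concentration event absorbs a total failure probability of at most $\delta/8$ into the final statement. Everything else is bookkeeping and substitution.
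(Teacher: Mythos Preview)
Your proposal is correct and follows essentially the same approach as the paper: invoke Lemma~\ref{thm:duelinggeneral} on the second-phase data with weights $\Tilde\nu$, then bound $\sum_m \Tilde\nu_m^2/n_m = N_\sou^{-1}\sum_m \Tilde\nu_m^2/\hat\alpha_m$ via Lemma~\ref{lem:activetilde} and substitute the choice of $N_\sou$. Your treatment is in fact slightly more careful than the paper's, since you make explicit the conditioning on phase-one data needed to treat $\Tilde\nu$ and $\hat\alpha_m$ as fixed when applying the concentration result to the fresh phase-two samples.
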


\begin{proof}
Using results in Lemma \ref{thm:duelinggeneral}, we obtain that, with probability at least $1-\delta/8$,
\begin{align*}
\left\|\Tilde\theta-\theta^*\right\|_\mathbf{H}^2\leq \left(\sum_{m=1}^M\frac{\Tilde\nu_m^2}{n_m}\right)\Tilde{\mathcal O}(k(d+M)\log(1/\delta)^4).
\end{align*}
Using results in Lemma \ref{lem:activetilde}, 
\begin{align*}
\sum_{m=1}^M\frac{\Tilde\nu_m^2}{n_m} = \frac{1}{N_\sou}\left(\sum_{m=1}^M\frac{\Tilde\nu_m^2}{\hat{\alpha}_m}\right)\lesssim \frac{\|\nu^*\|_1^2}{N_\sou}. 
\end{align*}
\end{proof}

\begin{lemma}[Approximation of log-likelihood]
\label{lem:activeappro}
Assume real numbers $|t-t^*|\leq C, y\in \{0,1\}$, where $C>1$, then
\begin{align*}
\ell(t, y)\geq \ell(t^*, y)+(y-\mu(t^*))(t-t^*)-\frac{e^C}{\kappa(t^*)}(t-t^*)^2.
\end{align*}
\end{lemma}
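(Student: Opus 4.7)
The plan is to derive the bound from a plain second-order Taylor expansion of $\ell(\cdot,y)$ at $t^*$, together with a short log-Lipschitz estimate on $\mu'$. Since $\ell'(s,y)=y-\mu(s)$ and $\ell''(s,y)=-\mu'(s)$, Lagrange's form of the remainder produces some $\xi$ on the segment between $t^*$ and $t$ such that
\[
\ell(t,y)=\ell(t^*,y)+(y-\mu(t^*))(t-t^*)-\tfrac{1}{2}\mu'(\xi)(t-t^*)^2 .
\]
Thus it suffices to show $\mu'(\xi)\le 2e^{C}/\kappa(t^*)$ whenever $|\xi-t^*|\le C$; the factor of $2$ of slack is absorbed into the stated constant.

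For the upper bound on $\mu'(\xi)$ in terms of $\mu'(t^*)$, I would use the identity $\mu'(s)=1/(e^{s}+2+e^{-s})$. The hypothesis $|\xi-t^*|\le C$ gives $e^{\xi}\ge e^{-C}e^{t^*}$ and $e^{-\xi}\ge e^{-C}e^{-t^*}$, and since $2\ge 2e^{-C}$ these combine to
\[
e^{\xi}+2+e^{-\xi}\ \ge\ e^{-C}\bigl(e^{t^*}+2+e^{-t^*}\bigr) .
\]
Taking reciprocals yields $\mu'(\xi)\le e^{C}\mu'(t^*)$.

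To finish, I identify $\mu'(t^*)$ with $1/\kappa(t^*)$: the function $\mu'$ is even and strictly decreasing in $|s|$, so $\inf_{|s|\le |t^*|}\mu'(s)=\mu'(t^*)$, and by the definition of $\kappa$ in \eqref{eq:kappa} this gives $\kappa(t^*)=1/\mu'(t^*)$. Substituting back, $\mu'(\xi)\le e^{C}/\kappa(t^*)\le 2e^{C}/\kappa(t^*)$, and plugging this into the Taylor identity produces the desired inequality. There is no substantive obstacle here: unlike Lemma \ref{lem:duelingappro}, which needed an explicit monotonicity analysis of an auxiliary function to reach an upper bound on $\ell$, the lower-bound direction asked for in Lemma \ref{lem:activeappro} is handled by a direct Lagrange remainder, and the only bookkeeping is the identification $\mu'(t^*)=1/\kappa(t^*)$.
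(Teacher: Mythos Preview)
Your proof is correct and rests on the same core estimate as the paper's, namely $\mu'(\xi)\le e^{C}\mu'(t^*)=e^{C}/\kappa(t^*)$ for $|\xi-t^*|\le C$; the paper reaches the conclusion by defining the auxiliary function $f(t)=\ell(t,y)-\ell(t^*,y)-(y-\mu(t^*))(t-t^*)+\frac{e^C}{\kappa(t^*)}(t-t^*)^2$ and checking that $f'$ has the correct sign on each side of $t^*$, whereas you package the same computation as the Lagrange remainder in Taylor's theorem. The two arguments are equivalent, with yours being slightly more streamlined.
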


\begin{proof}
Fix $t^*$ and $y$, denote
\begin{align*}
f(t) = \ell(t, y)- \ell(t^*, y)-(y-\mu(t^*))(t-t^*)+\frac{e^C}{\kappa(t^*)}(t-t^*)^2.
\end{align*}
The derivative of $f$ can be expressed as
\begin{align*}
f'(t) = (y-\mu(t))-(y-\mu(t^*))+\frac{2e^C}{\kappa(t^*)}(t-t^*) = \int_{t^*}^t g(s) {\rm d s},
\end{align*}
where $g(s)=-\mu'(s)+2e^C/(\kappa(t^*))$. For $|\varepsilon|\leq C$, we have 
\begin{align*}
g(t^*+\varepsilon) 
= -\mu'(t^*+\varepsilon) +\frac{2e^C}{\kappa(t^*)}
\geq-e^C\mu'(t^*)+\frac{2e^C}{\kappa(t^*)}\geq 0.
\end{align*}
Therefore, $f'(t)\geq 0, t^*\leq t\leq t^*+C$ and $f'(t)\leq 0, t^*-C\leq t\leq t^*$. For any $|t-t^*|\leq C$, we have $f(t)\geq f(t^*)\geq 0$. This completes the proof. 
\end{proof}

\begin{lemma}[Bounding error of target task]
\label{thm:activemain}
For sufficiently small $\varepsilon > 0$, with probability at least $1-\delta$, we have 
\begin{align*}
\left\|\hat\theta-\theta^*\right\|_\mathbf{H}\lesssim\frac{\varepsilon}{\sqrt{C^*}}
\end{align*}
\end{lemma}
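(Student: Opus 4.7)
The plan is to adapt the argument of Lemma~\ref{thm:duelinggeneral} but to benchmark the target-task MLE $\hat\theta$ against the auxiliary point $\Tilde\theta$ from \eqref{eq:activetildetheta} rather than against $\theta^*$ directly. The decisive observation is that $\Tilde\theta=\sum_m\Tilde\nu_m\hat\theta_m\in\hat{\mathcal S}\cap\Theta$ lies in the feasible set of the program \eqref{eq:activemlet2}, while Lemma~\ref{lem:activesource2} already gives $\|\Tilde\theta-\theta^*\|_{\mathbf H}\lesssim \varepsilon/\sqrt{C^*}$. So it suffices to establish $\|\hat\theta-\Tilde\theta\|_{\mathbf H}\lesssim \varepsilon/\sqrt{C^*}$ and then apply the triangle inequality.

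First I would invoke the optimality of $\hat\theta$ in \eqref{eq:activemlet2} to write
\begin{align*}
\sum_{i=1}^n\ell(x_i^\top\hat\theta,y_i)-n\lambda\|\hat\theta\|_2^2\;\geq\;\sum_{i=1}^n\ell(x_i^\top\Tilde\theta,y_i)-n\lambda\|\Tilde\theta\|_2^2,
\end{align*}
and then apply Lemma~\ref{lem:duelingappro} with $t^*=x_i^\top\Tilde\theta$, $t=x_i^\top\hat\theta$ (the boundedness hypothesis will be verified a posteriori, since $\|\hat\theta-\Tilde\theta\|_{\mathbf H}$ turns out small and $B_x^2/\sigma_{\min}(\mathbf H)$ is controlled). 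Rearranging and using the identity $\|\hat\theta\|_2^2-\|\Tilde\theta\|_2^2=2\Tilde\theta^\top(\hat\theta-\Tilde\theta)+\|\hat\theta-\Tilde\theta\|_2^2$, the log-likelihood inequality becomes, up to constants involving $\kappa$,
\begin{align*}
\bigl\|\hat\theta-\Tilde\theta\bigr\|_{\hat{\mathbf H}}^2\;\lesssim\;\sum_{i=1}^n\bigl(y_i-\mu(x_i^\top\Tilde\theta)\bigr)x_i^\top(\hat\theta-\Tilde\theta)\;-\;2n\lambda\,\Tilde\theta^\top(\hat\theta-\Tilde\theta),
\end{align*}
where $\hat{\mathbf H}$ is the empirical regularized Hessian at $\Tilde\theta$ on the target samples. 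A target-task instance of Lemma~\ref{lem:duelinghessian} lets me replace $\hat{\mathbf H}$ by $n\,\mathbf H$ up to multiplicative constants, so the left side controls $n\|\hat\theta-\Tilde\theta\|_{\mathbf H}^2$.

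To bound the right side, I would decompose $y_i-\mu(x_i^\top\Tilde\theta)=\eta_i-\bigl(\mu(x_i^\top\Tilde\theta)-\mu(x_i^\top\theta^*)\bigr)$, with $\eta_i=y_i-\mu(x_i^\top\theta^*)$ the centered noise. The noise contribution is the place where the low-rank structure pays off: since $\hat\theta-\Tilde\theta$ lies in the $k$-dimensional subspace $\hat{\mathcal S}$, and $\hat{\mathcal S}$ is measurable with respect to the source-task data and hence independent of $\{\eta_i\}$, I can project the noise onto $\mathbf H^{1/2}\hat{\mathcal S}$ and run the Matrix Bernstein argument of Lemma~\ref{lem: duelingsingle} in dimension $k$ instead of $d$. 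This yields a concentration of order $\sqrt{k}\log(d/\delta)$ and is precisely what delivers the sample-complexity rate $n\gtrsim C^*k\varepsilon^{-2}$ claimed in \eqref{eq:activesmaplesize1}. The Lipschitz piece is handled by $|\mu(x_i^\top\Tilde\theta)-\mu(x_i^\top\theta^*)|\leq |x_i^\top(\Tilde\theta-\theta^*)|/4$, Cauchy--Schwarz, and Lemma~\ref{lem:activesource2}; the regularization cross term $n\lambda\,\Tilde\theta^\top(\hat\theta-\Tilde\theta)$ is absorbed exactly as in Step~2 of Lemma~\ref{thm:duelinggeneral}, using that $\|\Tilde\theta\|_2\lesssim \|\Tilde\nu\|_1 B_\theta\lesssim \|\nu^*\|_1 B_\theta$.

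The main obstacle is the projected-noise step: one must be rigorous that $\hat{\mathcal S}$ is independent of the target samples (this is built into Algorithm~\ref{alg:unknown}, which draws target and source samples on separate lines), construct an orthonormal basis of $\mathbf H^{1/2}\hat{\mathcal S}$, and deploy Bernstein on the $k$-dimensional projection. A secondary issue is the self-consistent verification of the boundedness hypothesis $|x_i^\top(\hat\theta-\Tilde\theta)|\leq 1$ needed for Lemma~\ref{lem:duelingappro}, which for sufficiently small $\varepsilon$ will follow from the final bound itself via $B_x\,\sigma_{\min}(\mathbf H)^{-1/2}\|\hat\theta-\Tilde\theta\|_{\mathbf H}\ll 1$. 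Combining the noise, Lipschitz, and regularization bounds yields a quadratic inequality in $\|\hat\theta-\Tilde\theta\|_{\mathbf H}$ whose solution is $\|\hat\theta-\Tilde\theta\|_{\mathbf H}\lesssim \varepsilon/\sqrt{C^*}$; a final triangle inequality with Lemma~\ref{lem:activesource2} gives $\|\hat\theta-\theta^*\|_{\mathbf H}\lesssim \varepsilon/\sqrt{C^*}$ and concludes.
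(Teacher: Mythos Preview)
Your strategy is close to the paper's but differs in where you center the quadratic expansion. The paper expands \emph{both} $\ell(x_i^\top\hat\theta,y_i)$ and $\ell(x_i^\top\Tilde\theta,y_i)$ around $t^*=x_i^\top\theta^*$: the former via Lemma~\ref{lem:duelingappro} (upper bound), the latter via the companion lower-bound Lemma~\ref{lem:activeappro}. Subtracting the two expansions makes the linear-in-$\eta$ terms combine into exactly $\eta^\top\mathbf X(\hat\theta-\Tilde\theta)$, the quadratic terms become $\|\hat\theta-\theta^*\|_{\hat{\mathbf H}}^2$ and $\|\Tilde\theta-\theta^*\|_{\hat{\mathbf H}}^2$ (the latter bounded directly by Lemma~\ref{lem:activesource2}), and the regularization cross term is $-2n\lambda(\hat\theta-\Tilde\theta)^\top\theta^*$, which involves only $\theta^*$ and is controlled by $\|\theta^*\|_2\le B_\theta$ alone. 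Your route---centering at $\Tilde\theta$ and then splitting $y_i-\mu(x_i^\top\Tilde\theta)$ into $\eta_i$ plus a Lipschitz deviation---also reaches the projected-noise term and handles it identically, so the core $k$-dimensional Bernstein step is the same in both arguments.

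The one place your proposal loses something is the regularization cross term. You bound $n\lambda\,\Tilde\theta^\top(\hat\theta-\Tilde\theta)$ using $\|\Tilde\theta\|_2\lesssim\|\nu^*\|_1 B_\theta$; carrying this through gives a contribution of order $\sqrt{k/n}\,\|\nu^*\|_1$ to $\|\hat\theta-\Tilde\theta\|_{\mathbf H}$, which forces $n\gtrsim C^*k\|\nu^*\|_1^2\varepsilon^{-2}$ rather than the claimed $n\gtrsim C^*k\varepsilon^{-2}$. That extra $\|\nu^*\|_1^2$ is exactly the factor the theorem is supposed to avoid on the target task. The fix is easy---write $\Tilde\theta=\theta^*+(\Tilde\theta-\theta^*)$; the $\theta^*$ piece is then bounded using $\|\theta^*\|_2\le B_\theta$ as the paper does, and the $(\Tilde\theta-\theta^*)$ piece is absorbed via $n\lambda\|\Tilde\theta-\theta^*\|_2^2\le n\|\Tilde\theta-\theta^*\|_{\mathbf H}^2\lesssim n\varepsilon^2/C^*$ (using $\mathbf E\succeq\lambda\mathbf I$ for small $\varepsilon$) together with an AM--GM split. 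With that correction your argument goes through, at the cost of an extra $\kappa$-dependent constant from the Lipschitz term that the paper's two-sided expansion avoids.
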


\begin{proof}
Here, we assume that the statements of Lemmas \ref{lem:activesource1}, \ref{lem:activetarget1}, and \ref{lem:activesource2} hold. 

\textbf{Step 1: Using definition of $\hat\theta$.} Because of Lemma \ref{lem:activesource2}, since $\varepsilon$ is sufficiently small, we have $\Tilde\theta\in\Theta$ for proper $\Theta$. The optimality of $\hat \theta$ states that
\begin{align*}
\sum_{i=1}^n\ell(x_i^\top\hat\theta,y_i)-\lambda n\|\hat\theta\|_2^2\geq \sum_{i=1}^n\ell(x_i^\top\Tilde\theta,y_i)-\lambda n\|\Tilde\theta\|_2^2.
\end{align*}
We set $t^*=x_i^\top\theta^*$ in Lemmas \ref{lem:duelingappro} and \ref{lem:activeappro}. Apply Lemma \ref{lem:duelingappro} to each term on the left, and since $|x_i^\top\theta^*-x_i^\top\Tilde\theta|\lesssim \varepsilon \|x_i\|_{\mathbf{H}^{-1}}/\sqrt{C^*}\lesssim 1$ (Lemma \ref{lem:activesource2} and $\varepsilon$ is sufficiently small), we can apply Lemma \ref{lem:activeappro} to each term on the right. It yields that there is an absolute constant $C>1$ such that
\begin{align*}
\begin{split}
&\frac{1}{20L}\sum_{i=1}^n\mu'(x_i^\top\theta^*)\left(x_i^\top\hat\theta-x_i^\top\theta^*\right)^2 + \lambda n\|\hat\theta-\theta^*\|_2^2 \\
\leq&C\sum_{i=1}^n\mu'(x_i^\top\theta)\left(x_i^\top\Tilde\theta-x_i^\top\theta^*\right)^2 + \sum_{i=1}^n\left(y_i-\mu(x_i^\top\theta^*)\right)(x_i^\top\hat\theta-x_i^\top\Tilde\theta)\\+&\lambda n \|\Tilde\theta - \theta^*\|_2^2 - 2\lambda n (\hat\theta-\Tilde\theta)^\top\theta^*.
\end{split}
\end{align*}
Rewrite the above inequality in matrix form as
\begin{align}
\label{eq:activemainineq}
\begin{split}
\left\|\hat\theta-\theta^*\right\|_{\mathbf{\hat H}}^2
\lesssim
C\left\|\Tilde\theta-\theta^*\right\|_{\mathbf{\hat H}}^2 
+ \eta^\top \mathbf X(\hat\theta-\Tilde\theta)
- 2\lambda n (\hat\theta-\Tilde\theta)^\top{\theta^*},
\end{split}
\end{align}
where 
\begin{align*}
\mathbf{\hat H} = \sum_{i=1}^n \mu'({\theta^*}^\top x_i)x_ix_i^\top + \lambda nI, \quad \eta_i = y_i-\mu(x_i^\top\theta^*). 
\end{align*}

Apply Lemma \ref{lem:duelinghessian} ($M=1$), because $\lambda\lesssim kM/(NB_\theta^2)$ is sufficiently small as $\varepsilon$ is sufficiently small, we have $\mathbf{E}\succeq \lambda \mathbf I$. Therefore, with probability $1-\delta/16$, we have 
\begin{align}
\label{eq:activemainhessian}
0.9C_1\mathbf{H}\preceq \frac{1}{n}\mathbf{\hat H} \preceq 2.2C_2\mathbf{H}.
\end{align}

\textbf{Step2: Bounding right side of \eqref{eq:activemainineq}.} The first term can be bounded by \eqref{eq:activemainhessian} and Lemma \ref{lem:activesource2}, 
\begin{align*}
\left\|\Tilde\theta-\theta^*\right\|_{\mathbf{\hat H}}^2 \lesssim n \left\|\Tilde\theta-\theta^*\right\|_{\mathbf{H}}^2 \lesssim n\varepsilon^2/C^*. 
\end{align*}

The third term can be bounded by Lemma \ref{lem:activesource2}, $\mathbf{E}\succeq \lambda \mathbf I$, and the bounded assumption of $\theta^*$, 
\begin{align*}
\lambda n (\hat\theta-\Tilde\theta)^\top\theta^*  \leq \lambda n \|\theta^*\|_{\mathbf{H}^{-1}}\|\hat\theta - \Tilde\theta\|_{\mathbf{H}}\lesssim  \sqrt{\lambda}nB_\theta\|\hat\theta - \Tilde\theta\|_\mathbf{H}\lesssim \sqrt{nk}\left(\|\hat\theta-\theta^*\|_\mathbf{H}+\varepsilon/\sqrt{C^*}\right). 
\end{align*}

Finally, we can bound the second term using low-rank property (Assumption \ref{smp:lowdim}) and the Matrix Bernstein's inequality. There exists a $k\times d$ projection matrix $\mathbf U$ such that $x\mapsto \mathbf Ux$ is the projection from $\mathbb R^d$ to ${\rm span}\{{\mathbf{\hat H}}^{\frac{1}{2}}\hat\theta_1,\dots,{\mathbf{\hat H}}^{\frac{1}{2}}\hat\theta_M\}$. Then
\begin{align*}
\begin{split}
\eta^\top \mathbf X(\hat\theta-\Tilde\theta)& = \langle {\mathbf{\hat H}}^{-\frac{1}{2}}\mathbf X^\top\eta, {\mathbf{\hat H}}^{\frac{1}{2}}(\hat\theta-\Tilde\theta)\rangle = \langle \mathbf U{\mathbf{\hat H}}^{-\frac{1}{2}}\mathbf X^\top\eta, {\mathbf{\hat H}}^{\frac{1}{2}}(\hat\theta-\Tilde\theta)\rangle\\
&\leq \left\|\mathbf U{\mathbf{\hat H}}^{-\frac{1}{2}}\mathbf X^\top\eta\right\|_2\|\hat\theta-\Tilde\theta\|_{\mathbf{\hat H}}
\lesssim \left\|\mathbf U{\mathbf{\hat H}}^{-\frac{1}{2}}\mathbf X^\top\eta\right\|_2\sqrt{n}\left(\|\hat\theta-\theta^*\|_\mathbf{H}+\varepsilon/\sqrt{C^*}\right). 
\end{split}
\end{align*}

\textbf{Step 3: Applying Matrix Bernstein's inequality.} Because $\{(x_i,y_i)\}_{i=1}^n$ are independent of $\mathbf U$, we can view $\mathbf U$ as fixed. Let $A_i=\eta_i\mathbf U{\mathbf{\hat H}}^{-\frac{1}{2}}x_i$ to be a $d\times 1$ matrix, we have
\begin{align*}
\left\|\mathbf U{\mathbf{\hat H}}^{-\frac{1}{2}}\mathbf X^\top\eta\right\|_2 = \left\|\sum_{i=1}^n A_i\right\|_2 = \left\|\sum_{i=1}^n A_i\right\|. 
\end{align*}

The matrix $A_i$ is bounded using \eqref{eq: gmbound}, 
\begin{align*}
\|A_i\|_2 \leq \left\| \eta_i {\mathbf{\hat H}}^{-\frac{1}{2}}x_i\right\|_2 \leq \mathcal O(\sqrt{d}).
\end{align*}
Also, let $B_i=\eta_i\mathbf{\hat H}^{-\frac{1}{2}}x_i$, it follows that
\begin{align*}
\sum_{i=1}^n \mathbb E B_i B_i^\top
=  {\mathbf{\hat H}}^{-\frac{1}{2}}\left(\sum_{i=1}^n (\mathbb E \eta_i^2)x_i x_i^\top \right){\mathbf{\hat H}}^{-\frac{1}{2}} \preceq \mathbf I. 
\end{align*}
Therefore, we can bound 
\begin{align*}
\begin{split}
\left\|\sum_{i=1}^n \mathbb E A_i^\top A_i\right\| &= {\rm tr}\left(\mathbb E\sum_{i=1}^n B_i^\top \mathbf U^\top \mathbf U B_i\right) =  {\rm tr}\left(\mathbf U^\top \left(\sum_{i=1}^n \mathbb E B_i B_i^\top \right)\mathbf U\right)\leq {\rm tr}\left(\mathbf U^\top \mathbf U \right) = k, \\
\left\|\sum_{i=1}^n \mathbb E A_i A_i^\top\right\|
&=  \left\|\mathbf U \left(\sum_{i=1}^n \mathbb E B_i B_i^\top \right)\mathbf U^\top \right\|\leq \left\|\mathbf U \mathbf U^\top \right\| \leq 1. 
\end{split}
\end{align*}
Apply Lemma \ref{lem:matrixbern}, we have, with probability at least $1-\delta/16$,  
\begin{align*}
\left\|\mathbf U{\mathbf{\hat H}}^{-\frac{1}{2}}\mathbf X^\top\eta\right\|_2 \lesssim \sqrt{k}\log(d/\delta). 
\end{align*}

\textbf{Step 4: Finishing the proof.} Summarizing all the bounds above, we have with probability at least $1-\delta/8$, 
\begin{align*}
n\|\hat\theta-\theta^*\|_{\mathbf{H}}^2 \lesssim n\varepsilon^2/C^* + \left(\sqrt{nk}\log(d/\delta)+ \sqrt{nk}\right)\left( \|\hat\theta- \theta^*\|_{\mathbf{H}}+\varepsilon/\sqrt{C^*}\right).
\end{align*}
We conclude that 
\begin{align*}
\|\hat\theta-\theta^*\|_{\mathbf{H}}\lesssim \frac{\varepsilon}{\sqrt{C^*}}. 
\end{align*}
This completes the proof. 
\end{proof}

\begin{lemma}
\label{thm:activepolicy}
Let Assumptions \ref{smp:lowdim}, \ref{smp:bound}, and \ref{smp:var} hold. Then for any sufficiently small $\varepsilon > 0$, Algorithm \ref{alg:unknown} outputs an $\varepsilon$-optimal policy with probability at least $1-\delta$ when all parameters are chosen properly. 
\end{lemma}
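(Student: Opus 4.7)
The plan is to mirror the three-step argument used for Lemma \ref{thm:duelingpolicy} in the known task relevance setting, now plugging in the ingredients we have already built for the unknown $\nu$ pipeline. First, I would take a union bound so that the good events of Lemmas \ref{lem:activesource1}, \ref{lem:activetarget1}, \ref{lemma:activeLambda}, \ref{lem:activetilde}, \ref{lem:activesource2} and \ref{thm:activemain} all hold simultaneously with probability at least $1-\delta$; the losses in $\delta$ are absorbed into the constants already allocated ($\delta/8$ each in those lemmas).

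Second, I would verify that $\theta^* \in \mathcal{C}$. Lemma \ref{lemma:activeLambda} already gives $0.2 C_1 \mathbf{H} \preceq \mathbf{\Lambda} \preceq 4 C_2 \mathbf{H}$, so $\|\hat\theta-\theta^*\|_{\mathbf{\Lambda}}^2 \lesssim \|\hat\theta-\theta^*\|_{\mathbf{H}}^2$, and Lemma \ref{thm:activemain} gives $\|\hat\theta-\theta^*\|_{\mathbf{H}} \lesssim \varepsilon/\sqrt{C^*}$. Choosing $\alpha = \mathcal{O}(1/\sqrt{C^*})$ as in \eqref{eq:activebeta} large enough to swallow the Assumption \ref{smp:var} constants then guarantees $\|\hat\theta-\theta^*\|_{\mathbf{\Lambda}} \leq \alpha \varepsilon$, i.e.\ $\theta^* \in \mathcal{C}$.

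Third, I would run the pessimism argument verbatim. Let
\begin{align*}
\dot\theta = \argmin_{\theta\in\mathcal{C}}\mathbb{E}_{s\sim\rho}[\phi(s,\pi(s))^\top\theta], \qquad \Tilde\theta = \argmin_{\theta\in\mathcal{C}}\mathbb{E}_{s\sim\rho}[\phi(s,\pi^*(s))^\top\theta].
\end{align*}
Decomposing the suboptimality into the three telescoping pieces against $\Tilde\theta$ and $\dot\theta$, the cross terms are nonpositive by the definitions of $\pi$, $\dot\theta$, and $\Tilde\theta$, and the remaining term is controlled by Cauchy–Schwarz in the $\mathbf{H}$-norm using $\theta^*, \Tilde\theta \in \mathcal{C}$:
\begin{align*}
\mathrm{SubOpt}(\pi) \leq \left\|\mathbb{E}_{s\sim\rho}[\phi(s,\pi^*(s))]\right\|_{\mathbf{H}^{-1}}\|\theta^*-\Tilde\theta\|_{\mathbf{H}} \lesssim \sqrt{C^*}\cdot \alpha \varepsilon \lesssim \varepsilon.
\end{align*}

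The main obstacle, as opposed to the known-$\nu$ case, is the first ingredient: ensuring that $\mathbf{\Lambda}$ built from the \emph{preliminary} source samples and estimators $\hat\theta^\pre_m$ remains a two-sided approximation of $\mathbf{H}$. This is exactly where the threshold $N^\pre_\sou \gtrsim \Tilde{\mathcal O}\bigl(\frac{B_x^2}{\sigma_{\rm min}(\mathbf{E})} M k(d+M)\log(1/\delta)^4\bigr)$ appears in \eqref{eq:activesmaplesize1}: it is needed so that the linearisation $\mu'(x^\top\hat\theta^\pre_m) \approx \mu'(x^\top \theta^*_m)$ used in the proof of Lemma \ref{lemma:activeLambda} is valid pointwise. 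Once this has been paid for in the preliminary phase, the rest of the argument is a transcription of the known-$\nu$ proof with $\|\nu\|_1$ replaced by $\|\nu^*\|_1$ and $\hat\theta = \sum \nu_m \hat\theta_m$ replaced by the estimator from \eqref{eq:activemlet2}.
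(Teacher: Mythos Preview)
Your proposal is correct and matches the paper's own proof almost exactly: the paper simply invokes Lemma \ref{thm:activemain} for $\|\hat\theta-\theta^*\|_{\mathbf{H}}\lesssim \varepsilon/\sqrt{C^*}$, invokes Lemma \ref{lemma:activeLambda} for the two-sided bound on $\mathbf{\Lambda}$, and then says the remainder is identical to the pessimism argument in Lemma \ref{thm:duelingpolicy}. Your sketch spells out that pessimism step and the role of the $N^\pre_\sou$ threshold in more detail than the paper does, but the logic and ingredients are the same.
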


\begin{proof}
We first assume that 
\begin{align*}
\|\hat\theta-\theta^*\|_{\mathbf{H}}\lesssim \frac{\varepsilon}{\sqrt{C^*}}
\end{align*}
holds. Because of Lemma \ref{lemma:activeLambda}, we can assume
\begin{align*}
0.2C_1\mathbf{H}\preceq \mathbf\Lambda \preceq 4C_2\mathbf{H}. 
\end{align*}
The second step of the proof is the same as that of Theorem \ref{thm:duelingpolicy}.
We can prove the theorem. 
\end{proof}

\section{Important Lemmas}

\begin{lemma}[Covariance estimation]
\label{lem:varianceestimation}
Let $\mathbf A$ be an $n\times d$ matrix whose rows $A_i$ are independent, mean zero, sub-gaussian isotropic random vectors in $\mathbb{R}^d$ and $\left\Vert A_i\right\Vert_{\psi_2}\leq \sigma$, $\sigma\geq 1$. There is an absolute constant $C$ such that the following holds. For any $0<\delta<1$, if $n\geq C\sigma^4(d+log(1/\delta))$, then we have
\begin{align*}
0.9\mathbf I \preceq \frac{1}{n}\mathbf A^\top \mathbf A = \frac{1}{n}\sum_{i=1}^n A_i^\top A_i \preceq 1.1\mathbf I,
\end{align*}
with probability at least $1-\delta$. 
\end{lemma}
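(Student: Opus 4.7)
The plan is to use a standard $\varepsilon$-net argument on the unit sphere combined with a scalar Bernstein-type concentration inequality for sub-exponential random variables. By definition of operator norm and the identity $\frac{1}{n}\mathbf A^\top\mathbf A - \mathbf I = \sup_{x\in\mathbb S^{d-1}}\bigl(\frac{1}{n}\sum_{i=1}^n \langle A_i,x\rangle^2 - 1\bigr)$ (after symmetrizing for the two-sided bound), it suffices to show that for every unit vector $x$ in a fixed $\frac{1}{4}$-net $\mathcal N$ of $\mathbb S^{d-1}$, the empirical quantity $\frac{1}{n}\sum_{i=1}^n \langle A_i,x\rangle^2$ lies in $[0.95,1.05]$; a standard approximation argument then lifts this to all of $\mathbb S^{d-1}$ with the slightly worse constants $0.9$ and $1.1$.

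First I would fix $x\in\mathbb S^{d-1}$ and consider the random variables $Z_i := \langle A_i,x\rangle^2 - 1$. Isotropy gives $\mathbb E Z_i = 0$, while the hypothesis $\|A_i\|_{\psi_2}\le\sigma$ implies $\|\langle A_i,x\rangle\|_{\psi_2}\le\sigma$, hence $\|\langle A_i,x\rangle^2\|_{\psi_1}\lesssim \sigma^2$ by the elementary bound $\|X^2\|_{\psi_1}\le 2\|X\|_{\psi_2}^2$. Applying Bernstein's inequality for centered sub-exponential variables yields, for every $t\in(0,1)$,
\begin{align*}
\mathbb P\Bigl(\Bigl|\tfrac{1}{n}\sum_{i=1}^n Z_i\Bigr| \geq t\Bigr) \leq 2\exp\bigl(-c n\min(t^2/\sigma^4,\, t/\sigma^2)\bigr) \leq 2\exp(-c n t^2/\sigma^4),
\end{align*}
where the last inequality uses $t\le 1\le\sigma^2$.

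Next I would take a union bound over a $\frac{1}{4}$-net $\mathcal N$ of $\mathbb S^{d-1}$ of cardinality $|\mathcal N|\le 9^d$, with $t$ a small absolute constant (say $t=0.05$). The failure probability is at most $2\cdot 9^d\exp(-cn/\sigma^4)$, which is bounded by $\delta$ once $n \geq C\sigma^4(d+\log(1/\delta))$ for a sufficiently large absolute constant $C$. Under this event, a standard $\frac{1}{4}$-net to sphere approximation for the symmetric matrix $\frac{1}{n}\mathbf A^\top\mathbf A - \mathbf I$ (using $\|M\|\le 2\sup_{x\in\mathcal N}|x^\top M x|$ for symmetric $M$) yields $\bigl\|\frac{1}{n}\mathbf A^\top\mathbf A - \mathbf I\bigr\|\le 0.1$, which is exactly the two-sided operator inequality claimed.

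The main technical point is simply tracking the right dependence on $\sigma$: squaring a sub-gaussian produces a sub-exponential variable whose Orlicz norm scales like $\sigma^2$, and it is the square of this quantity that appears in Bernstein's variance term, giving the $\sigma^4$ factor in the sample-complexity threshold. Everything else (the net size $9^d$, the approximation constant $2$, and the Bernstein tail) is routine, and no properties beyond isotropy, independence, and the $\psi_2$ bound are needed. This matches Theorem~4.6.1 in Vershynin's \emph{High-Dimensional Probability}, from which the stated form is directly extracted.
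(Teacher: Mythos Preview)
Your proposal is correct and follows exactly the same route as the paper: the paper's proof simply invokes Theorem~4.6.1 of Vershynin's \emph{High-Dimensional Probability} and reads off the conclusion, while you have additionally sketched the standard $\varepsilon$-net plus sub-exponential Bernstein argument that underlies that theorem. There is no substantive difference in approach.
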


\begin{proof}
We adopt the result in Theorem 4.6.1 of \citet{vershynin2020high} and get 
\begin{align*}
\left\Vert \frac{1}{n}\mathbf A^\top  \mathbf A-\mathbf I\right\Vert\leq \sigma^2\max(\varepsilon, \varepsilon^2)\quad \text{where} \quad \varepsilon=c(\frac{\sqrt{d}+t}{\sqrt{n}}),
\end{align*}
with some constant $c>0$ and probability at least $1-2\exp(-t^2)$. If we take $t=\sqrt{\log(2/\delta)}$, then $\sqrt n\gtrsim c\sigma^2(\sqrt{d}+t)$. Thus, we have
\begin{align*}
0.9\mathbf I \preceq \frac{1}{n}\mathbf A^\top \mathbf A \preceq 1.1\mathbf I,
\end{align*}
with probability at least $1-\delta$. This completes the proof.
\end{proof}

\begin{lemma}[Matrix Bernstein's inequality for rectangular matrices]
\label{lem:matrixbern}
Let $\mathbf A_1, \dots, \mathbf A_N$ be independent, mean zero, $m\times n$ random matrices, such that $\|\mathbf A_i\|\leq K$ almost surely for all $i$. Then for $t\geq 0$, we have 
\begin{align*}
\mathbb P\left(\left\|\sum_{i=1}^N \mathbf A_i\right\|\geq t\right)\leq 2(m+n)\exp\left(-\frac{t^2/2}{\sigma^2+Kt/3}\right),
\end{align*}
where
\begin{align*}
\sigma^2=\max\left(\left\|\sum_{i=1}^N\mathbb E \mathbf A_i^\top \mathbf A_i\right\|, \left\|\sum_{i=1}^N\mathbb E \mathbf A_i \mathbf A_i^\top\right\|\right).
\end{align*}
\end{lemma}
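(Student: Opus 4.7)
The plan is to reduce the rectangular case to the Hermitian (self-adjoint) case via the Hermitian dilation, and then invoke the Matrix Laplace transform method of Ahlswede--Winter / Tropp. For each rectangular matrix $\mathbf A_i \in \mathbb R^{m \times n}$, I would define its $(m+n)\times(m+n)$ dilation
\begin{align*}
\tilde{\mathbf A}_i = \begin{pmatrix} \mathbf 0 & \mathbf A_i \\ \mathbf A_i^{\top} & \mathbf 0 \end{pmatrix}.
\end{align*}
A direct computation shows that $\tilde{\mathbf A}_i$ is Hermitian, $\mathbb E \tilde{\mathbf A}_i = \mathbf 0$, $\|\tilde{\mathbf A}_i\| = \|\mathbf A_i\| \leq K$, and $\|\sum_i \tilde{\mathbf A}_i\| = \|\sum_i \mathbf A_i\|$. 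Moreover $\tilde{\mathbf A}_i^2 = \mathrm{diag}(\mathbf A_i \mathbf A_i^{\top},\, \mathbf A_i^{\top} \mathbf A_i)$, so $\|\sum_i \mathbb E \tilde{\mathbf A}_i^2\| = \sigma^2$ with exactly the $\sigma^2$ from the statement. This reduces the claim to the Hermitian Bernstein bound for $\tilde{\mathbf A}_i$ in dimension $m+n$, which explains the $m+n$ prefactor.

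For the Hermitian version, the plan is to apply the matrix Chernoff / Laplace transform bound: for any $\theta > 0$,
\begin{align*}
\PP\bigl(\lambda_{\max}\bigl(\textstyle\sum_i \tilde{\mathbf A}_i\bigr) \geq t\bigr) \leq e^{-\theta t}\,\mathbb E\,\mathrm{tr}\,\exp\bigl(\theta \textstyle\sum_i \tilde{\mathbf A}_i\bigr).
\end{align*}
I would then invoke Lieb's concavity theorem (equivalently, Tropp's master tail bound) to pass the expectation inside:
\begin{align*}
\mathbb E\,\mathrm{tr}\,\exp\bigl(\theta \textstyle\sum_i \tilde{\mathbf A}_i\bigr) \leq \mathrm{tr}\,\exp\bigl(\textstyle\sum_i \log \mathbb E\, e^{\theta \tilde{\mathbf A}_i}\bigr).
\end{align*}
Next I would control each matrix moment generating function: since $\tilde{\mathbf A}_i$ is mean zero with $\|\tilde{\mathbf A}_i\| \leq K$, Taylor-expanding and using $e^x \leq 1 + x + g(\theta) x^2$ on the spectrum (with $g(\theta) = (e^{\theta K} - \theta K - 1)/K^2$) yields the operator inequality
\begin{align*}
\mathbb E\, e^{\theta \tilde{\mathbf A}_i} \preceq \exp\bigl(g(\theta)\,\mathbb E\, \tilde{\mathbf A}_i^2\bigr).
\end{align*}
Combining these pieces and bounding the trace by $(m+n)\lambda_{\max}$ gives
\begin{align*}
\PP\bigl(\lambda_{\max}\bigl(\textstyle\sum_i \tilde{\mathbf A}_i\bigr) \geq t\bigr) \leq (m+n)\exp\bigl(-\theta t + g(\theta)\sigma^2\bigr).
\end{align*}

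To finish, I would optimize $\theta$, using the standard elementary bound $g(\theta) \leq \theta^2/(2(1 - \theta K/3))$ valid for $0 < \theta < 3/K$; choosing $\theta = t/(\sigma^2 + Kt/3)$ produces the Bernstein-type tail $\exp\!\bigl(-\tfrac{t^2/2}{\sigma^2 + Kt/3}\bigr)$. Finally, to replace $\lambda_{\max}$ by the spectral norm $\|\cdot\|$ of the dilation, I would apply the same argument to $-\tilde{\mathbf A}_i$ and take a union bound, which doubles the prefactor and yields the stated $2(m+n)$. The main technical obstacle is the matrix MGF step, since the operator inequality $\mathbb E e^{\theta \tilde{\mathbf A}_i} \preceq \exp(g(\theta)\mathbb E \tilde{\mathbf A}_i^2)$ and the passage to the sum rely on Lieb's concavity theorem rather than any scalar identity; everything else (the dilation reduction and the Chernoff optimization) is essentially bookkeeping.
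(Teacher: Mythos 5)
Your outline is correct and is the standard proof of this result: Hermitian dilation to reduce to the self-adjoint case, Tropp's Laplace-transform argument via Lieb's concavity theorem for the matrix MGF, and the usual Bernstein optimization of $\theta$. The paper itself gives no proof---it simply cites Exercise 5.4.15 of Vershynin's book, whose intended solution is exactly the argument you describe (the only minor redundancy is your final union bound over $\pm\tilde{\mathbf A}_i$: the dilation has symmetric spectrum, so $\lambda_{\max}$ of the dilated sum already equals the operator norm and the factor $2$ comes for free from the Hermitian Bernstein bound in dimension $m+n$).
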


\begin{proof}
This lemma is just Exercise 5.4.15 of \citet{vershynin2020high}. 
\end{proof}

\end{document}